\documentclass[11pt]{article}
\usepackage[font={it,small},justification=justified]{caption}
\usepackage{amsmath, mathpazo, amssymb, amsfonts, amsthm}
\usepackage[T1]{fontenc}
\usepackage{setspace}
\usepackage{parskip}
\usepackage{soul}
\usepackage{url}
\usepackage[utf8]{inputenc}
\usepackage[svgnames,xcdraw,table]{xcolor}
\usepackage{booktabs}
\usepackage{threeparttable}
\usepackage{amsmath,multirow}
\usepackage{enumerate}
\usepackage{etoolbox}
\usepackage[most]{tcolorbox}
\usepackage{ragged2e} 
\usepackage{subcaption}
\usepackage{xfrac}
\usepackage[round,longnamesfirst]{natbib}
\setcitestyle{round}
\usepackage{hyperref}
\hypersetup{
colorlinks,breaklinks,naturalnames,
citecolor=Blue,urlcolor=Blue,linkcolor=Blue
}
\usepackage{sectsty}
\sectionfont{\color{DarkRed}}
\subsectionfont{\color{DarkRed}}
\subsubsectionfont{\color{DarkRed}}
\usepackage[english]{babel}
\usepackage{amsmath,multirow}
\usepackage{amsthm,thm-restate,thmtools}
\usepackage{tikz}
\usepackage{amssymb}
\usepackage{makecell}
\usepackage{amsfonts}
 \usepackage[letterpaper, margin=1in]{geometry}
\usepackage{algorithm}
\usepackage{algpseudocode}
\usepackage{pifont}
\usepackage{tcolorbox}
\usepackage[smartEllipses]{markdown}
\usepackage{tikz}
\usetikzlibrary{calc,matrix}
\usepackage{appendix}
\usepackage{mathtools}
\usepackage{amsfonts,amsmath,amsthm,amssymb}
\usepackage[margin=1 in]{geometry}
\usepackage[capitalize,noabbrev,nameinlink]{cleveref}
\usepackage{tabularx}
\AtBeginEnvironment{tcolorbox}{\small}
\usepackage{enumitem}

\usepackage{graphicx}

\newtheorem{definition}{Definition}
\newtheorem{proposition}{Proposition}

\newtheorem*{example*}{Example}
\newtheorem{corollary}{Corollary}

\newtheorem{result}{Finding}

\definecolor{myBlue}{rgb}{0.1,0.1,0.8}
\definecolor{DarkGreen}{rgb}{0.1,0.5,0.1}

\title{Strategic Algorithmic Monoculture: \\
Experimental Evidence from Coordination Games}

\author{
Gonzalo Ballestero\hspace{1.5em}
Hadi Hosseini\hspace{1.5em}
Samarth Khanna\hspace{1.5em}
Ran I. Shorrer\thanks{We are grateful to Vincent Crawford, Nikhil Garg, Yannai Gonczarowski, and Alex Imas 
for helpful comments. Ballestero: Department of Economics, Pennsylvania State University (email: \href{mailto:gballestero@psu.edu}{gballestero@psu.edu}); Hosseini: Department of Informatics and Intelligent Systems, Pennsylvania State University (email: \href{mailto:hadi@psu.edu}{hadi@psu.edu}); Khanna: Department of Informatics and Intelligent Systems, Pennsylvania State University (email: \href{mailto:samarth.khanna@psu.edu}{samarth.khanna@psu.edu}); Shorrer: Department of Economics, Pennsylvania State University  (email: \href{mailto:shorrer@psu.edu}{shorrer@psu.edu}). Funding from the College of IST (seed grant) is gratefully acknowledged.
Hosseini acknowledges support from the National Science Foundation (NSF) through CAREER
Award IIS-2144413 and Award IIS-2107173. 
Shorrer's research was supported by a grant from the United States–Israel Binational Science Foundation (BSF grant 2022417).
This study was ruled exempt from IRB review by the Pennsylvania State University IRB (Study ID: STUDY00027715).}}

\begin{document}

\date{April 2026}
\maketitle

\begin{abstract}
\noindent 
AI agents increasingly operate in multi-agent environments where outcomes depend on coordination.
We distinguish \emph{primary algorithmic monoculture}---baseline action similarity---from \emph{strategic algorithmic monoculture}, whereby agents adjust similarity in response to incentives.
We implement a simple experimental design that cleanly separates these forces, and deploy it on human and large language model (LLM) subjects.
LLMs exhibit high levels of baseline similarity (primary monoculture) and, like humans,  they regulate it in response to coordination incentives (strategic monoculture). While LLMs coordinate extremely well on similar actions, they lag behind humans in sustaining heterogeneity when divergence is rewarded.
\end{abstract}

\newpage

\section{Introduction}
Coordination is central to economic and social systems, but the socially desirable form of coordination depends on the environment. In some settings, agents must coordinate on the same action to avoid costly misalignment or to benefit from network effects. In other settings, agents benefit from taking different actions because congestion, redundancy, or correlated errors can undermine performance. Many economic situations feature this latter logic. When employers or colleges screen candidates using similar noisy signals, ``everyone hiring the same way'' can amplify mistakes and reduce the quality of selection.
When investors or households rely on common noisy signals, asset prices might become less informative, leading to distortions in consumption and labor supply decisions.\footnote{See, e.g., \citet{ali2026college}, \citet{Kleinberg2021Monoculture}, and \citet{Peng2024Monoculture} on matching and \citet{sentiment}, and \citet{sentiment2} on trading. In some circumstances, coordination may be harmful to others. For example, because coordination may facilitate collusion \citep[e.g][]{assad2024algorithmic, CalvanoCDHP2020, CalvanoCDP2020}. }  In these cases, society benefits not from similarity of actions, but from \emph{coordinated divergence}. 

Algorithmic agents, and in particular AI agents, are increasingly embedded in the economy. Individuals and firms rely on them in high-stakes decisions, including ones where coordination is highly consequential, like hiring, pricing, and trading.\footnote{See, e.g.,  \citet{cheng2025does}, \citet{fish2024algorithmic}, and \citet{shrm2025talent}.} Modern AI agents, especially ones based on large language models (LLMs), are a completely new class of economic agents. Their growing role in society brings back to the forefront an old question---\emph{how do agents coordinate?}

 There are reasons to expect that AI agents may excel in coordination tasks that require converging on the same action. First, many of these agents rely on LLMs that were trained on a similar corpus of data and share many design features. In some instances, they even rely on the exact same LLM with only minimal adaptations. Second, there is evidence that LLMs indeed provide similar responses across many domains, even when they are not ``incentivized'' to do so.\footnote{For example, \citet{Kim2025Correlated} show LLMs are highly correlated conditional on providing an erroneous answer. \citet{kennet2025} and \citet{Jiang2025Hivemind} show that LLMs often display homogenization in open-ended generation, producing highly similar responses across independent samples both within and between models.} Third, LLMs can quickly ingest large amounts of information and, therefore, may be better than humans at identifying focal points, and their similarity may lead them to have a natural focal point in more scenarios.\footnote{See \citet{sugden95}.} These features raise the possibility that LLM agents will act as unusually effective convergence coordinators.
At the same time, these same forces may limit their effectiveness where divergence is valuable. And, as we explain below, this concern is compounded by LLM agents' difficulty to randomize.\footnote{See \citet{kleinberg2024random}.} 
  
This paper asks two simple questions with broad implications. \emph{How do LLM agents coordinate, and how do they compare with humans?} To address these questions, we introduce a new perspective on algorithmic monoculture \citep{Kleinberg2021Monoculture}, building on ideas from the extensive literature on coordination, especially \citet{schelling1960strategy}.\footnote{Other examples include \citet{camerer2004cognitive}, \citet{cooper1999coordination}, \citet{crawford2013structural}, and \citet{sugden95}.}
  \cite{mehta1994nature} synthesize the literature and identify several channels that support coordination on the same action (by humans). These include \emph{primary salience} where agents display the tendency to provide the same answers absent incentives to do so, \emph{secondary salience} where agents rely on higher-order beliefs about salience, and \emph{Schelling salience} where agents identify unique focal choices (e.g., the Eiffel Tower as a meeting spot in Paris).\footnote{A prominent example of secondary salience is \citeauthor{Keynes}'s \citeyearpar{Keynes} ``beauty contest,'' in which participants are asked to rank the prettiest faces out of large set of pictures, with the winner being the participant whose choice is closest to some well-defined average. \citeauthor{Keynes} posits that ``[w]e have reached the third degree where we devote our intelligences to anticipating what average opinion expects the average opinion to be. And there are some, I believe, who practice the fourth, fifth, and higher degrees.'' Namely, \citeauthor{Keynes} hypothesizes that third, fourth, and fifth order salience will play a role in determining the outcome of the game. The distinction between secondary salience and \citeauthor{schelling1960strategy} salience is that the latter does not necessarily relate to (higher-order) beliefs about primary salience.}
  Analogously, we classify algorithmic monoculture into \emph{primary algorithmic monoculture}, and \emph{strategic monoculture} (which includes \emph{secondary monoculture} and \emph{Schelling monoculture}). 

We conduct experiments with LLM and human subjects.\footnote{We use AI subjects to assess their capabilities and tendencies. By contrast, \citet{horton2023large} and \citet{manning_automated_2024} consider using AI subjects to learn about human behavior.} Building on \citet{mehta1994nature}, in our main experiment, subjects are asked to provide a valid answer to an open-ended question (e.g., to name a letter in the English alphabet, a city in the world, etc.). Subjects are assigned to one of three treatments: \emph{picking}  in which they are only incentivized to choose a valid answer, \emph{coordination} in which they are incentivized to choose the same valid answer as another agent of the same type (human or LLM), and \emph{divergence} in which they are incentivized to choose a valid answer that is different from that of another agent of the same type. Our measure of performance is the \emph{agreement rate}, defined as the probability that two independent agents provide the same answer.\footnote{The literature often refers to the agreement rate as the \emph{coordination index}. We refrain from using this label, as it is a misnomer in the context of coordinated divergence.}
This design allows us \textit{(i)} to isolate the role of strategic incentives (strategic algorithmic monoculture, or secondary and Schelling salience) from baseline correlation (primary algorithmic monoculture or primary salience), and \textit{(ii)} to compare the behavior of LLM and human subjects. 

\begin{figure}[h!]
\caption{Agreement rate by subject type and treatment}
\label{fig:agreement_rate_aggregate}
\centering
\includegraphics[width=.8\textwidth]{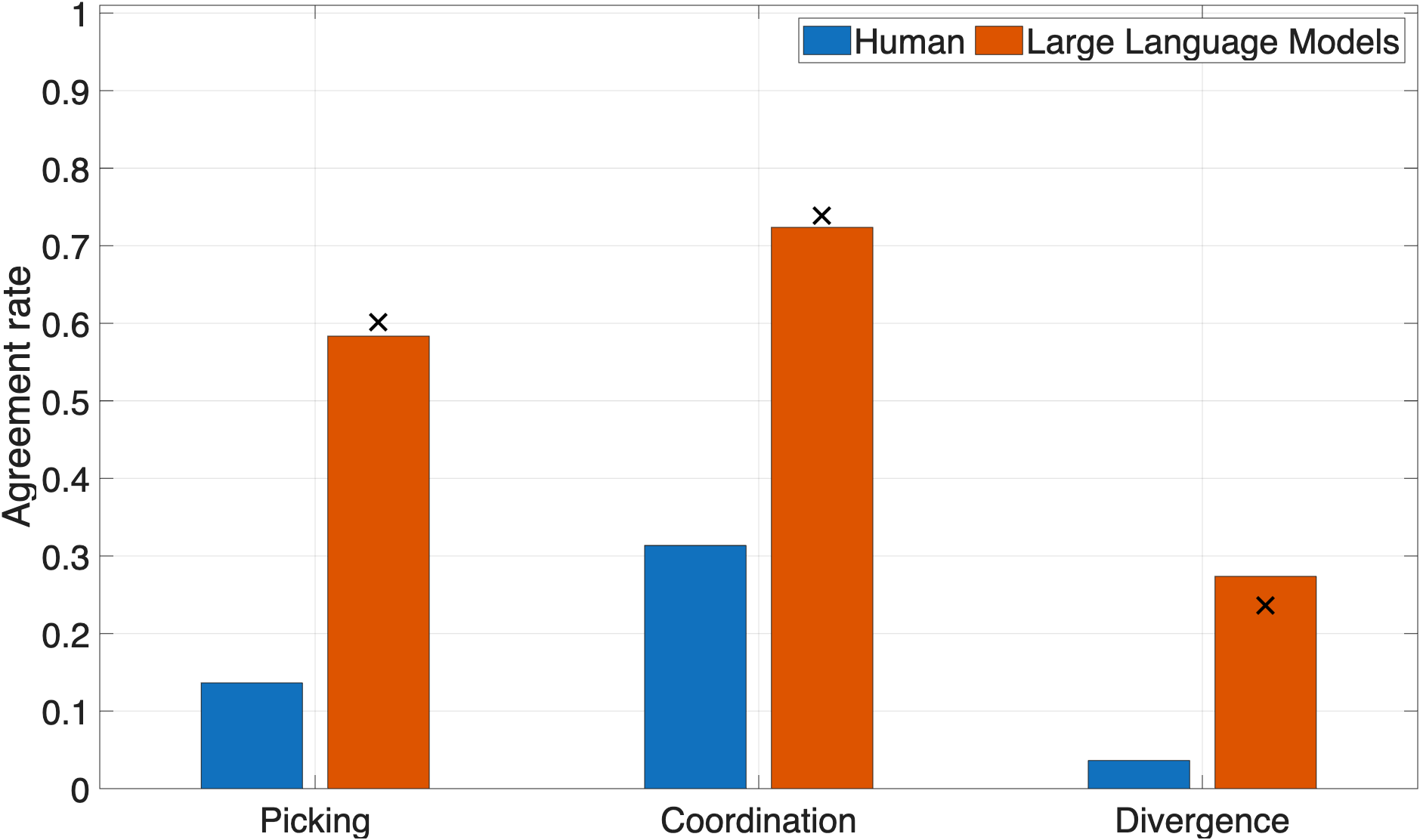}
\caption*{{\footnotesize Notes: Bars report average agreement rate across question topics for human and LLM subjects by treatment. LLM agreement rates are the average over all LLMs when facing the same model. The black cross-shaped markers indicate the median.}}
 \vspace{-.6cm}
\end{figure}
Our main experiment reveals that LLMs display strategic monoculture. It also uncovers a sharp asymmetry between tasks (coordination vs. divergence) and subject types (LLM vs. human): Relative to humans, LLM subjects are exceptionally strong at ``finding each other'' on a common focal response, but comparatively weak at ``staying out of each other’s way'' when divergence is rewarded. \cref{fig:agreement_rate_aggregate} depicts some of these results, which we discuss below.

First, \emph{LLM subjects' choices are more correlated than those of humans in the absence of incentives}. When comparing the performance of LLM subjects to human subjects in the picking arm, where there is no incentive to match or mismatch, each of the 16 LLMs we evaluate is substantially more likely to provide the same answer as another independent instance of the same LLM. 
 These findings indicate elevated levels of primary algorithmic monoculture.   

Second, \emph{both human and LLM subjects respond to strategic coordination incentives}. Specifically, agreement rates rise in the coordination arm in which matching actions are rewarded, and fall in the divergence arm in which they are penalized. This replicates the finding of \citet{mehta1994nature} that humans coordinate through secondary and Schelling salience, and shows that LLM subjects can similarly coordinate via strategic monoculture.

Third, \emph{LLM subjects substantially outperform humans in coordinating on the same action, but perform substantially worse in coordinated divergence.} When LLMs play against independent copies of themselves, their average agreement rate in the coordination arm is 72 percent, far higher than the 31 percent among human subjects. By contrast, in the divergence arm---where high agreement is undesirable---LLM subjects agree far more often than human subjects, especially when playing a copy of themselves (27 percent vs. 3.5 percent).  

Finally, \emph{all of the abovementioned results continue to hold---but at a smaller magnitude---even when we consider pairs of different LLMs}.
 
We provide additional evidence of strategic algorithmic monoculture, leveraging the unique features of AI subjects. Specifically, unlike humans, LLM subjects offer a window into their thinking process in the form of textual reasoning. Using a large-scale text analysis of models’ textual reasoning, we show that LLM subjects explicitly recognize the role of (secondary and Schelling) salience in coordination tasks and the need to choose obscure options in divergence tasks. These patterns indicate that LLM subjects often articulate the correct strategic logic, even in the divergence arm, where their performance lags behind.

We also probe the mechanisms through which LLM subjects coordinate.
This analysis is guided by our taxonomy of primary versus strategic monoculture and two simple theoretical observations (see \cref{sec:model}). First, in symmetric environments without communication, like the ones studied in our experiment, coordinated divergence is easy to achieve so long as agents are \textit{(i)} sufficiently heterogeneous or \textit{(ii)} able to reliably randomize. 
Second, if two agents tend to give the same response to similar questions, they can trivially coordinate on the same action successfully. Our theoretical analysis points to a potential tradeoff: While homogeneity is an asset for coordination on the same action, it can be a liability for coordinated divergence. This tradeoff and the relation to randomization capabilities offer a new lens for evaluating algorithmic monoculture.  

Consistent with the theoretical predictions that homogeneity contributes to performance in the coordination arm but hurts performance in the divergence arm, we find that using different LLMs reduces agreement rates across arms. \citep[This pattern also obtains when we endow LLM subjects with personas based on characteristics of our human subjects, as in][]{mei2024turing}. Furthermore, higher agreement rates in the picking arm are correlated with higher agreement rates in the coordination and divergence arms. In other words, primary monoculture is indicative of behavior in the presence of strategic incentives.  

We conduct several additional analyses to investigate the role of LMM capabilities. First, we rule out the possibility that LLMs' poor performance results from access to a limited pool of valid answers. Second, we show that it does not fully result from LLMs' limited capacity to randomize. Specifically, when instructed to generate a large list of valid answers and then to choose one randomly, LLMs improve their performance in the divergence arm.   
Additionally, we replicate our main experiment using higher and lower \emph{temperatures}. (The temperature is a parameter that modulates the stochasticity of LLM-generated text.) Our temperature manipulation confirms the theoretical prediction regarding the role of randomization in the divergence arm, but also reveals a tradeoff with respect to performance in the coordination and divergence arms: higher temperature lowers agreement rates across all arms, improving performance in the divergence arm, while harming performance in the coordination arm; lower temperature does the opposite. Notably, even in extreme temperature settings, the qualitative ordering persists: LLM subjects remain stronger than human subjects in the coordination arm, while human subjects remain stronger than LLM subjects in the divergence arm. Thus, the divergence deficit cannot be fully resolved merely by ``turning up the heat.''

Information about the identity of the co-player may be crucial for coordination through strategic monoculture, which requires reasoning about how the other player might choose and reason. Information can also help diverge by providing group identity \citep{Ho2006635}. 
To investigate the role of this channel, we vary the information LLM subjects receive about the identity of the other subject by mentioning in their instructions that they are facing an identical copy or ``another person.'' 
Our information manipulations have a sizable effect on LLM subjects' textual reasoning about the identity of the co-player. Despite this fact, they have little effect on outcomes on average. 

\paragraph{Organization of the paper.} The remainder of this section reviews the related literature. \cref{sec:model} presents our simple theoretical framework. \cref{sec:experimental_design} reviews the design of the main experiment. \cref{sec:results} presents its results.  \cref{sec:text_analysis} presents analyses of the textual reasoning produced by LLM subjects. \cref{sec: additional experiments} presents the design of the supplemental experiments and their results. \cref{sec: discussion} concludes.

\subsection{Related Literature}

Our paper is related to several strands of the literature. First, it is related to the well-established literature on coordination \citep{schelling1960strategy,sugden95}, and especially to experimental studies \citep[e.g.][]{bardsley2010explaining,crawford2008power,isoni2014efficiency}. The most closely related paper in this literature is \citet{mehta1994nature}, whose experimental design inspires our design. We add to \citet{mehta1994nature} by introducing coordinated divergence games, and more importantly, by studying LLM subjects. We further contribute to the literature by providing evidence on LLM subjects' reasoning, rather than relying exclusively on choices, overcoming a common limitation in human subject studies \citep[][]{ericsson1998study, rangel2008framework}. 

Our paper is also related to the literature on homogenization, particularly algorithmic monoculture \citep{Kleinberg2021Monoculture}. Theoretical studies in this literature show that relying on correlated signals or on similar algorithms can lead to detrimental aggregate outcomes across a range of domains, including collective decision-making \citep{page2008difference}, hiring and recruiting \citep{ali2026college,Kleinberg2021Monoculture,Peng2024Monoculture}, and financial markets \citep{sentiment,sentiment2}. \citet{cao2026whenAIfails} presents field evidence of algorithmic monoculture emerging in the cryptocurrencies market due to access to generative AI.
Empirical studies of algorithmic monoculture focused mainly on primary monoculture \citep{Jiang2025Hivemind,Kim2025Correlated,kennet2025,zhang2025cultivating}.\footnote{\citet{imas2025agentic} study a bargaining setting and find no evidence of increased homogenization in AI-mediated interactions relative to human interactions.} 
We contribute to this literature by introducing the taxonomy of primary versus strategic monoculture, and by conducting experiments that allow us to separately identify \emph{strategic} monoculture.

Our findings also speak to a broader concern regarding multi-agent system alignment \citep{bommasani2021opportunities, brynjolfsson2024economics, hammond2025multi}. They suggest that the very forces that make LLM agents powerful coordinators also create systemic fragility in environments where diversity of actions is socially valuable.

Finally, our paper is related to the growing literature that analyzes AI agents by adopting tools from game theory \citep[e.g.][]{conitzer2023foundations,iakovlev2025value,liang2026clones} and behavioral science \citep[e.g.][]{jackson2025aibehavioralscience}, and by expanding these toolkits with specialized approaches tailored to AI agents. For example, \citet{fish2024algorithmic} introduce new techniques to analyze LLM agents' reasoning and to causally link reasoning to actions. Other examples of experimental studies of AI agents include   \citet{akata2025playing}, \citet{cook2026llms}, \citet{fish2025econevals}, \citet{Hosseini2025Distributive}, \citet{meta_fundamental_ai_research_diplomacy_team_fair_human-level_2022},  \citet{ross2024llm} and \citet{werner2024experimentalevidenceconversationalartificial}. Like our study, \citet{araujo2026pie} vary the information LLM
subjects receive about the humanity of the co-player. \citet{imas2025agentic} and \citet{mei2024turing} also compare LLM and human subjects in strategic environments.\footnote{\citet{ChenEcon} compare LLM and human subjects in single-agent decision problems.  } \citet{dreyfuss2024human}, \citet{he2025human}, and \citet{vafa_large_2024} compare LLM and human decisions, but their focus is on human beliefs about LLM choices. \citet{gao2026llm} consider LLMs' ability to predict human behavior.  

\section{Theoretical Framework}
\label{sec:model}

We concentrate on two simple classes of games:  \emph{two-player pure coordination games} and \emph{two-player pure coordinated divergence games}. 
Using standard game-theoretic formalism, the normal form of these games can be described as follows. There are two players labeled $1$ and $2$. Player~$1$ must choose an action from the set $A_1=\left\{a_{11}, a_{12},\dots,a_{1n}   \right\}$, where $n>1$. Similarly, player~$2$ must choose an action from the set $A_2=\left\{a_{21}, a_{22},\dots,a_{2n}   \right\}$. In a coordination game, given a profile of actions, $(a_{1i},a_{2j})$, both players get a payoff of 1 if $i=j$ and zero otherwise. In a coordinated divergence game, both players get a payoff of 1 if $i \ne j$ and zero otherwise. \cref{fig:games} presents an example of the matrix normal form of such games with $n=5$. 

\begin{figure}[ht]
\centering
\vspace{.5mm}
\caption{Matrix form examples}
\setlength{\extrarowheight}{2pt}
\begin{subfigure}{0.45\textwidth}
\centering
\caption{\ Coordination}
\setlength{\extrarowheight}{2pt}
    \begin{tabular}{cc|c|c|c|c|c|}
      & \multicolumn{1}{c}{} & \multicolumn{1}{c}{$A$}  & \multicolumn{1}{c}{$B$} & \multicolumn{1}{c}{$C$} & \multicolumn{1}{c}{$D$}& \multicolumn{1}{c}{$E$}\\\cline{3-7}
        & $A$ & $(1,1)$ & $(0,0)$ & $(0,0)$ & $(0,0)$ & $(0,0)$\\\cline{3-7}
      & $B$ & $(0,0)$ & $(1,1)$ & $(0,0)$ & $(0,0)$ & $(0,0)$\\\cline{3-7}
      & $C$ & $(0,0)$ & $(0,0)$ & $(1,1)$ & $(0,0)$ & $(0,0)$\\\cline{3-7}
      & $D$ & $(0,0)$ & $(0,0)$ & $(0,0)$ & $(1,1)$ & $(0,0)$\\\cline{3-7}
      & $E$ & $(0,0)$ & $(0,0)$ & $0,0)$ & $(0,0)$ & $(1,1)$\\\cline{3-7}
    \end{tabular}
    \end{subfigure}
\hfill
\begin{subfigure}{0.45\textwidth}
\centering
\caption{\ Coordinated divergence}
\setlength{\extrarowheight}{2pt}
\begin{tabular}{cc|c|c|c|c|c|}
      & \multicolumn{1}{c}{} & \multicolumn{1}{c}{$A$}  & \multicolumn{1}{c}{$B$} & \multicolumn{1}{c}{$C$} & \multicolumn{1}{c}{$D$} & \multicolumn{1}{c}{$E$}\\\cline{3-7}
        & $A$ & $(0,0)$ & $(1,1)$ & $(1,1)$ & $(1,1)$ & $(1,1)$\\\cline{3-7}
      & $B$ & $(1,1)$ & $(0,0)$ & $(1,1)$& $(1,1)$& $(1,1)$\\\cline{3-7}
      & $C$ & $(1,1)$ & $(1,1)$ & $(0,0)$ & $(1,1)$ & $(1,1)$\\\cline{3-7}
      & $D$ & $(1,1)$ & $(1,1)$ & $(1,1)$ & $(0,0)$ & $(1,1)$\\\cline{3-7}
      & $E$ & $(1,1)$ & $(1,1)$ & $(1,1)$ & $(1,1)$ & $(0,0)$\\\cline{3-7}
    \end{tabular}
\end{subfigure}
\label{fig:games}
\vspace{.1 cm}
\end{figure}

We note that coordination games have $n$ pure Nash equilibria and that coordinated divergence games have $n\times (n-1)$ pure Nash equilibria. In both cases, players are indifferent between all these equilibria. Additionally, there are strong symmetries between players, strategies, and equilibria. As a result, in these settings, for players to make meaningful choices (other than randomize uniformly), they must rely on labels, in contrast with the classic game-theoretic assumption that players' choices do not depend on labels \citep[see, e.g., ][]{crawford2007fatal,harsanyi1988general, schelling1960strategy}. 

\begin{example*}
Consider \citeauthor{schelling1960strategy}'s \emph{Heads or Tails coordination game}. In this game, players win a reward if both of them choose heads or both choose tails, and get nothing otherwise. \citet{schelling1960strategy} and \citet{mehta1994nature} show that human players achieve high rates of success by choosing heads. 

Assuming behavior is independent of labels, the Heads or Tails game is equivalent to the game in which players are incentivized to choose different sides of the coin. But, in reality, since players are symmetric, unless they have a sophisticated way to rely on their names or other asymmetric cues,  there is no way for them to coordinate.     
\end{example*}

This discussion explains our modeling choice of following the coordination literature \citep[e.g.,][]{mehta1994nature,sugden95} and assuming that strategies share the same labels and that this fact is commonly known to both players. We denote $A_1=A_2=A$.  

\begin{definition}
    A strategy (for player $i$) is a distribution over actions in $A_i$. A strategy is \emph{neutral} if it is independent of the relabeling of strategies. A strategy profile is \emph{anonymous} if it is independent of player labels.  
\end{definition}

\begin{proposition} \label{prop:symmertic}
The uniform distribution over $A$ is the unique neutral anonymous strategy profile in coordination and coordinated divergence games.       
\end{proposition}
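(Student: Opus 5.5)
The plan is to formalize ``neutral'' as invariance under the symmetric group acting on the common label set $A$, and then use a transitivity argument to force uniformity. Concretely, a strategy $\sigma\in\Delta(A)$ is \emph{neutral} if for every permutation $\pi$ of $A$ that preserves the game (i.e., relabels actions while leaving the payoff structure unchanged) we have $\sigma(\pi(a))=\sigma(a)$ for all $a\in A$. Anonymity says that both players use the same strategy, $\sigma_1=\sigma_2=\sigma$, since a profile independent of player labels cannot assign different distributions to players $1$ and $2$. So the claim reduces to showing two things: that the only neutral distribution is uniform, and that the uniform profile is indeed neutral and anonymous.

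First, I will verify that every permutation $\pi$ of $A$ is a symmetry of both games. In the coordination game, the payoff to $(a_i,a_j)$ is $\mathbf{1}[i=j]$. In the divergence game it is $\mathbf{1}[i\neq j]$. Since $\pi(a_i)=\pi(a_j)$ if and only if $i=j$, applying the same $\pi$ to both players' actions leaves every payoff unchanged. This is why the paper's assumption $A_1=A_2=A$ matters: the relabeling is applied simultaneously and consistently to both players. As a result, neutrality must hold for the full symmetric group on $A$, and not merely some subgroup.

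Second, I will use transitivity. For any $a,b\in A$, the transposition $\pi=(a\ b)$ is admissible, so neutrality gives $\sigma(a)=\sigma(\pi(a))=\sigma(b)$. Hence $\sigma$ is constant on $A$, and normalization yields $\sigma(a)=1/n$ for every $a$. Anonymity then forces both players to use this same uniform distribution. Conversely, the uniform distribution is invariant under every permutation and is identical across players, so it is both neutral and anonymous. This establishes existence and uniqueness.

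The main obstacle is definitional rather than computational: the paper's phrase ``independent of the relabeling of strategies'' must be read as invariance under relabelings that preserve the game, applied identically to both players. I would state this reading explicitly at the start. I would also note that if labels were allowed to differ across players, the argument would still go through, since transpositions act transitively on each $A_i$. Once this formalization is fixed, the rest is immediate.
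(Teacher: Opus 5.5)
Your argument is correct. The paper gives no written proof of this proposition and treats it as immediate from the definitions and its preceding remark about the symmetry among players, strategies, and equilibria; your argument supplies exactly the step being taken for granted, namely that every permutation of $A$, applied to both players simultaneously, is a game automorphism, so transpositions force $\sigma$ to be constant. One minor point: anonymity is not needed for uniqueness, because neutrality of each player's strategy already makes it uniform, and the resulting profile is then automatically anonymous.
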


\begin{proposition}\label{prop: coordination}
    In coordination games, using uniform randomization, each player can guarantee an expected payoff of $\frac{1}{n}$ independently of the others' actions. Furthermore, this is the highest expected payoff that an agent can guarantee. 
\end{proposition}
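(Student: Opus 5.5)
The plan has two parts: a direct computation showing that uniform randomization achieves $\frac{1}{n}$, and a minimax argument showing that nothing better can be guaranteed. Throughout, take the coordination game with common action set $A=\{a_1,\dots,a_n\}$ and payoff $1$ exactly when both players choose the same index.

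\emph{Guarantee.} Suppose player~$1$ plays the uniform distribution $u$ on $A$, and player~$2$ plays an arbitrary (possibly mixed) strategy $q=(q_1,\dots,q_n)$. Player~$1$'s expected payoff is
\[
\sum_{i} \frac{1}{n}\, q_i \;=\; \frac{1}{n}.
\]
This holds for every $q$, so uniform randomization secures $\frac{1}{n}$ regardless of the other player's action. The argument is symmetric for player~$2$.

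\emph{Optimality.} Fix any strategy $p=(p_1,\dots,p_n)$ for player~$1$. Its guaranteed payoff is $\min_q \sum_i p_i q_i$. This minimum is attained at a pure strategy of the opponent, so it equals $\min_i p_i$. Since $\sum_i p_i = 1$, we have $\min_i p_i \le \frac{1}{n}$, with equality only when $p=u$. Concretely, if $p$ is not uniform, some action $a_j$ has $p_j<\frac{1}{n}$, and an opponent choosing $a_j$ holds player~$1$'s payoff to $p_j<\frac{1}{n}$. Hence $\frac{1}{n}$ is the maximal guarantee, and uniform randomization is the unique strategy that attains it.

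The only point needing care is interpreting ``guarantee'' as the max--min value against an adversarial opponent. The key step is reducing the inner minimum over mixed opponent strategies to the minimum over pure ones, which holds by linearity of the payoff in $q$. Everything else is routine.
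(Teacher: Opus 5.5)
Your proposal is correct. The guarantee half is the same computation as the paper's, but you reach optimality by a different route.

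\textbf{Your route.} You compute the exact security level of an arbitrary strategy $p$, namely $\min_q p\cdot q=\min_i p_i$. You do this by letting the opponent pick a pure action tailored to $p$, one with the smallest $p_j$. You then use $\sum_i p_i=1$ to get $\min_i p_i\le \frac{1}{n}$.

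\textbf{The paper's route.} The paper fixes a single opponent strategy, the uniform one. Against it, every action of player $i$ matches with probability exactly $\frac{1}{n}$. Hence every $p$ earns exactly $\frac{1}{n}$, and no strategy can secure more.

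\textbf{What the paper's version buys.}
\begin{itemize}
\item It is shorter and needs no reduction from mixed to pure opponent strategies.
\item It exhibits uniform play as a universal punishing strategy for the opponent, so $(u,u)$ is a saddle point of player $i$'s payoff $p\cdot q$.
\item The same uniform opponent handles the divergence game in the same line, since every action then mismatches with probability $\frac{n-1}{n}$. That is why the paper proves both propositions together.
\end{itemize}

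\textbf{What your version buys.} It gives strictly more information: each strategy's worst-case payoff in closed form, and uniqueness of the maximin strategy. Uniform randomization is the only strategy that guarantees $\frac{1}{n}$. The paper's argument cannot deliver uniqueness, because against the uniform opponent every $p$ earns exactly $\frac{1}{n}$.

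\textbf{A minor remark.} The linearity step is needed only for the exact formula $\min_q p\cdot q=\min_i p_i$. For the upper bound alone, exhibiting the single pure action $a_j$ with $p_j\le\frac{1}{n}$ already suffices, as your ``concretely'' sentence shows.
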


\begin{proposition} \label{prop:divergence}
In coordinated divergence games, using uniform randomization, each player can guarantee an expected payoff of $\frac{n-1}{n}$ independently of the actions of others. Furthermore, this is the highest expected payoff that an agent can guarantee. 
\end{proposition}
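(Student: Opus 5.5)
The plan has two parts, mirroring the proof of \cref{prop: coordination}: first show that uniform randomization achieves $\frac{n-1}{n}$, then show that no strategy guarantees more.

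For the first part, fix player~$1$ playing the uniform distribution $u$ over $A$. Let $q=(q_1,\dots,q_n)$ be an arbitrary mixed strategy of player~$2$; a pure action is a degenerate $q$. Player~$1$'s expected payoff is the probability that the two actions differ:
\[
1-\sum_{k=1}^n \tfrac{1}{n}\, q_k \;=\; 1-\tfrac{1}{n} \;=\; \tfrac{n-1}{n}.
\]
This value does not depend on $q$, so it is guaranteed regardless of the opponent's play. By the symmetry of the game the same holds for player~$2$.

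For the optimality part, take any strategy $p=(p_1,\dots,p_n)$ of player~$1$. We bound its worst-case payoff by letting the opponent play a best reply. Against a pure action $a_k$, player~$1$'s payoff is $1-p_k$. Since $\sum_k p_k=1$, there is some $k^*$ with $p_{k^*}\ge \frac{1}{n}$, by pigeonhole or by comparing the maximum with the average. Against $a_{k^*}$ the payoff is therefore at most $1-\frac{1}{n}=\frac{n-1}{n}$. Hence
\[
\min_q \text{payoff}(p,q)\;\le\;\frac{n-1}{n}\qquad\text{for every } p,
\]
so no strategy guarantees more than $\frac{n-1}{n}$. Equality requires $\max_k p_k=\frac1n$, i.e., $p=u$, which gives uniqueness of the maximin strategy as a byproduct.

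Neither step presents a real obstacle. The only point needing care is interpreting ``guarantee'' as the maximin value over all opponent strategies, mixed ones included, and noting that restricting the opponent to pure deviations suffices for the upper bound. If a cleaner formulation is wanted, I would instead note that the divergence payoff equals $1$ minus the coordination payoff. The guarantee for divergence then equals $1-\max_p\max_q(\text{coordination payoff})$ only after reinterpretation, so I prefer the direct pigeonhole argument above.
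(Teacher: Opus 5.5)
Your proof is correct. The lower-bound half is the same as the paper's; the tightness half takes a different route. The paper uses a single, strategy-independent punishment: if the opponent randomizes uniformly, every action of player $i$ matches with probability exactly $\frac{1}{n}$. So every strategy of player $i$ earns exactly $\frac{n-1}{n}$, and nothing can guarantee more. You instead tailor the punishment to $p$, having the opponent play a pure action $a_{k^*}$ with $p_{k^*}\ge\frac{1}{n}$.

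The paper's version is a one-liner that proves the coordination and divergence propositions at once. Pinning the match probability at exactly $\frac{1}{n}$ caps the coordination guarantee at $\frac{1}{n}$ and the divergence guarantee at $1-\frac{1}{n}$ in one stroke. It also establishes the minmax bound, that the opponent alone can hold player $i$ to $\frac{n-1}{n}$, which implies the maximin bound you need.

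Your version computes the exact worst-case payoff $1-\max_k p_k$ of every strategy. That gives uniqueness of the guaranteeing strategy for free and shows that pure deviations by the opponent suffice. The analogous argument for coordination would need the opposite pigeonhole, namely some $k$ with $p_k\le\frac{1}{n}$.

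One correction to the aside in your last paragraph: the identity should read $\max_p\min_q(1-c)=1-\min_p\max_q c$, not $1-\max_p\max_q c$. The paper avoids this issue by phrasing both propositions in terms of the match probability, which uniform play by either side pins at $\frac{1}{n}$.
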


\begin{proof}[Proof of \cref{prop: coordination,prop:divergence}]
For the lower bound, observe that by randomizing uniformly over $A_i$, player $i$ is guaranteed to match the action of the other player, $-i$, with probability $1/n$. To see that this bound is tight, observe that if player~$-i$ randomizes uniformly over $A_{-i}$, each action of player $i$ matches the action of player $-i$ with probability $1/n$.       
\end{proof}

\cref{prop:symmertic,prop: coordination,prop:divergence} show that, from a classic game-theoretic standpoint, when there are more than two actions, coordinated divergence games are more attractive for players than coordination games: they yield higher payoffs in the neutral anonymous equilibrium and allow players to guarantee higher (minimax) payoffs. 

We now introduce algorithmic players. An \emph{algorithmic player} is a mapping $\mathcal{A}:\mathcal{M}\rightarrow \Delta(A)$ from the set of (valid) inputs $\mathcal{M}$ to a distribution over actions $\Delta(A)$. Note that, in this definition, we prioritize simplicity of notation and eliminate the need to specify the problem details (i.e., $A$ is hard-coded into the definition).\footnote{By avoiding the broader definition---which may better fit reality---we avoid the need for statements that accommodate agents behaving quite differently in different contexts.}     

\begin{definition}\label{def: agreement rate}
 Given two algorithm-input pairs $(\mathcal{A},m)$ and $(\mathcal{A'},m')$, the \emph{agreement rate} between $(\mathcal{A},m)$ and $(\mathcal{A'},m')$ is the probability with which  $\mathcal{A}(m)=\mathcal{A'}(m')$, which is given by $\mathcal{A}(m)\cdot\mathcal{A'}(m')$.\footnote{The symbol $\cdot$ denotes the inner product.}
 The \emph{self-agreement rate} of  $(\mathcal{A},m)$ is the agreement rate of $(\mathcal{A},m)$ and $(\mathcal{A},m)$. 
\end{definition}

The agreement rate is a measure of algorithmic monoculture \citep{Kleinberg2021Monoculture}. In our experiment, we will use its empirical counterpart to evaluate subjects' performance.

\begin{proposition}
When algorithmic players play a coordination (coordinated divergence) game, the payoffs they generate are equal to (1 minus) their agreement rate. 
\end{proposition}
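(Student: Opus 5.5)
The plan is to compute each player's expected payoff directly from the definitions and compare it with \cref{def: agreement rate}. The players are two algorithm--input pairs $(\mathcal{A},m)$ and $(\mathcal{A}',m')$, where $A_1=A_2=A=\{a_1,\dots,a_n\}$ carry common labels. Write $p=\mathcal{A}(m)\in\Delta(A)$ and $q=\mathcal{A}'(m')\in\Delta(A)$, and view them as vectors in $\mathbb{R}^n$ with coordinates $p_i$ and $q_j$. The two players draw their actions independently, since each algorithm maps its own input to its own distribution and there is no communication. So the profile $(a_i,a_j)$ occurs with probability $p_iq_j$.

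For the coordination game, each player's payoff is the indicator $\mathbf{1}\{i=j\}$. Its expectation is
\[
\sum_{i,j} p_i q_j \mathbf{1}\{i=j\}=\sum_{i=1}^n p_i q_i = p\cdot q .
\]
This is exactly the probability that $\mathcal{A}(m)=\mathcal{A}'(m')$, i.e., the agreement rate.

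For the coordinated divergence game, the payoff is $\mathbf{1}\{i\neq j\}=1-\mathbf{1}\{i=j\}$. Linearity of expectation and $\sum_{i,j}p_iq_j=1$ then give an expected payoff of $1-p\cdot q$, i.e., one minus the agreement rate. Because both games give the two players identical payoffs, the same expression holds for each player. It therefore also holds for the payoff the pair generates, read per player.

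The only step needing care is independence of the two draws. It is what justifies writing the joint distribution as the product $p_iq_j$, and it is also what underlies the inner-product formula in \cref{def: agreement rate}. I would state it explicitly as part of the model: the two algorithmic players act without communication, so their randomizations are independent. A secondary point is interpretation. Identical labels, $A_1=A_2=A$, are needed so that ``$i=j$'' in the payoff definitions coincides with ``$\mathcal{A}(m)=\mathcal{A}'(m')$'' in the definition of agreement. Beyond this, the argument is a one-line calculation, so I expect no real obstacle.
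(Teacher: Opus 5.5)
Your proposal is correct and is essentially the argument the paper intends. The paper gives no separate proof and treats the claim as immediate from defining the agreement rate as the match probability $\mathcal{A}(m)\cdot\mathcal{A'}(m')$, a formula that already assumes the two independent draws you make explicit; your calculation, using $\mathbf{1}\{i\neq j\}=1-\mathbf{1}\{i=j\}$ for divergence, is exactly that one-line check.
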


\begin{corollary}
If two users deploy \emph{1)} the same algorithm, \emph{2)} with the same message, and \emph{3)} the algorithm is deterministic, then they will receive the lowest possible payoff in coordinated divergence games and the highest possible payoff in coordination games.      
\end{corollary}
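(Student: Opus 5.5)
The plan is to reduce the corollary to the preceding proposition, which identifies payoffs with agreement rates. We then show that conditions 1)--3) force the agreement rate to equal its maximal possible value of one.

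Let the common algorithm be $\mathcal{A}$ and the common message be $m$. By the first two hypotheses, both players' behavior is described by the same distribution $p=\mathcal{A}(m)\in\Delta(A)$. The relevant agreement rate is therefore the self-agreement rate of $(\mathcal{A},m)$, which is $p\cdot p=\sum_{a\in A}p_a^2$ by \cref{def: agreement rate}.

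By the third hypothesis (determinism), $p$ is a point mass on a single action, say $p=e_a$. Hence $p\cdot p=1$.

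Next, observe that for any two distributions $q,q'\in\Delta(A)$ the agreement rate satisfies
\[
0\le q\cdot q'\le \max_a q'_a\le 1.
\]
So an agreement rate of $1$ is the largest attainable value. Invoking the preceding proposition:
\begin{itemize}
\item In the coordination game, each player's payoff equals the agreement rate, namely $1$. This is the maximum, since payoffs in the game lie in $\{0,1\}$.
\item In the coordinated divergence game, each player's payoff equals $1$ minus the agreement rate, namely $0$. This is the minimum possible payoff.
\end{itemize}

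There is no real obstacle here. The only point needing care is the interpretation of ``highest/lowest possible payoff'': it should mean the extreme values $1$ and $0$ of the payoff range, not the guaranteed (minimax) payoffs of \cref{prop: coordination,prop:divergence}. We should also note that the two players' draws are independent. That independence is what makes the agreement rate the inner product, although under determinism independence is immaterial.
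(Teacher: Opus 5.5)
Your proposal is correct and follows essentially the same route as the paper, which gives no separate proof and presents the corollary as an immediate consequence of the preceding proposition equating payoffs with the agreement rate (or one minus it). Conditions 1)--3) force the self-agreement rate $\mathcal{A}(m)\cdot\mathcal{A}(m)$ to equal one, which is exactly the step you supply, so the payoffs take the extreme values $1$ in coordination games and $0$ in coordinated divergence games.
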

\paragraph{Summary.}Taken together, our theoretical analysis finds that heterogeneity and randomization play a crucial role in coordinated divergence. With randomization (directly by the algorithm, or through randomized messages), a high success rate can be achieved, even when facing an adversary. Without randomization, such success relies on algorithmic players' being sufficiently different. Algorithm-instruction pairs that yield similar results frequently are bound to fail in coordinated divergence games. The opposite conclusions hold with respect to coordination games.

\section{Experimental Design}
\label{sec:experimental_design}

We adapt the experimental design of \citet{mehta1994nature} to study coordination among AI and human subjects. The main deviations we make from their design are that 1) in addition to their picking and coordination treatment arms, we add a divergence arm, and 2) we study LLM subjects, not just humans.\footnote{Importantly, our specific tasks and instructions are different from theirs. }  

\paragraph{Treatment arms.} The main experiment consists of three treatment arms. In each of the arms, subjects (human or AI) are asked to provide an answer to 12 open-ended questions like ``please name a city in the world'' or ``please name a color."\footnote{Question topics include: professional athlete,	car manufacturer, city in the world, color, disease, flower, food item, geometric shape, letter of the English alphabet, month, positive number, and character trait. The use of open-ended questions removes the possibility that one of the valid responses will be more salient due to the way we communicate the question to the subjects. For example, it is well known that LLMs find options that appear earlier more salient \citep{Eicher2024Biases,guo-etal-2025-illusion, PezeshkpourH24}.}  In the \emph{picking} arm, subjects are instructed to provide a valid answer. In the  \emph{coordination} arm, subjects are instructed to provide a valid answer that matches that of another randomly selected subject. And in the  \emph{divergence} arm, subjects are instructed to provide a valid answer that differs from that of another randomly selected subject.    
Importantly, treatment arms differ only in strategic incentives. This experimental design allows us to isolate the effects of strategic incentives (i.e., strategic monoculture for LLMs or secondary and Schelling salience for humans). 

\paragraph{Incentives.} Human subjects receive a participation fee of $\$1$. Additionally, they are informed that they may get a bonus payment of $\$1$ that will be determined by their choices (and those of their randomly selected partner) in a randomly selected round.\footnote{In the \textit{picking} arm, subjects needed to provide only valid answers for a $50\%$ chance of receiving the bonus.} The complete instructions and representative screenshots are available in \Cref{app:huma_expt}. LLM subjects are not provided with explicit monetary incentives (as they were trained to follow instructions). Their textual instructions follow the template presented in \cref{fig:prompts}.    

\begin{figure}[t]
\centering
\caption{LLM textual instructions}
\label{fig:prompts}
\newcommand{\boxH}{7.22cm} 
\begin{minipage}{0.32\textwidth}
\centering
\begin{tcolorbox}[
    title={\centering Picking},
    colback=gray!20,
    colframe=black!50,
    fonttitle=\bfseries\small,
    fontupper=\small,
    boxrule=1pt,
    arc=2mm,
    left=6pt, right=6pt, top=6pt, bottom=6pt,
    width=\linewidth,
    height=\boxH,
    valign=center
]
\setlength{\baselineskip}{1.1\baselineskip}
\centering 
\justifying \noindent Please name \textit{any city in the world}. Please return your selection in the following format:   \texttt{`<answer>selected city</answer>'}
\end{tcolorbox}
\end{minipage}
\hfill
\begin{minipage}{0.32\textwidth}
\centering
\begin{tcolorbox}[
    title={\centering Coordination},
    colback=gray!20,
    colframe=black!50,
    fonttitle=\bfseries\small,
    fontupper=\small,
    boxrule=1pt,
    arc=2mm,
    left=6pt, right=6pt, top=6pt, bottom=6pt,
    width=\linewidth,
    height=\boxH,
    valign=center
]
\setlength{\baselineskip}{1.1\baselineskip}
\justifying \noindent Your task is to choose the name of \textit{any city in the world}. At the same time, another LLM will also select the name of \textit{any city in the world}. You will receive a reward only if both of you select \textbf{the same} name. If your choices \textbf{differ}, you receive nothing. Think thoroughly before you answer. Please return your selection in the following format:   \texttt{`<answer>selected city</answer>'}
\end{tcolorbox}
\end{minipage}
\hfill
\begin{minipage}{0.32\textwidth}
\centering
\begin{tcolorbox}[
    title={\centering Divergence},
    colback=gray!20,
    colframe=black!50,
    fonttitle=\bfseries\small,
    fontupper=\small,
    boxrule=1pt,
    arc=2mm,
    left=6pt, right=6pt, top=6pt, bottom=6pt,
    width=\linewidth,
    height=\boxH,
    valign=center
]
\setlength{\baselineskip}{1.1\baselineskip}
\justifying
\noindent Your task is to choose the name of \textit{any city in the world}. At the same time, another LLM will also select the name of \textit{any city in the world}. You will receive a reward only if both of you select \textbf{different} names. If your choices \textbf{match}, you receive nothing. Think thoroughly before you answer. Please return your selection in the following format:   \texttt{`<answer>selected city</answer>'}
\end{tcolorbox}
\end{minipage}
\vspace{.1cm}
\footnotesize \caption*{Notes: Italicized and boldface text are inserted to highlight differences between treatments and rounds. They did not appear in the textual instructions. Italicized text is replaced between rounds (questions). Boldface text highlights differences between the coordination and divergence arms.}
 \vspace{-.6cm}
\end{figure}

\paragraph{Deployment (human subjects).} We recruited 301 human subjects using Prolific on November 24th, 2025.  To qualify for the study, participants were required to be located in the USA, be fluent in English, and have an approval rate of at least $98\%$.\footnote{We chose to focus on a homogeneous subject pool since culture may be crucial for coordination \citep{schelling1960strategy}.} Participants were randomly assigned to one (and only one) of the three treatments. \cref{tab:demographics} shows that human subjects' demographic characteristics were balanced across treatment arms.  

Human subjects completed the experiment through a Qualtrics survey. Each subject answered all $12$ questions for their assigned treatment in the same session. The order of questions was drawn uniformly and independently at random.  Before answering the questions, participants received detailed instructions, and they were required to pass a comprehension quiz. 
\cref{screenshots} presents screenshots of the instructions shown to participants in each treatment, along with an example question. 
The median completion time of the study was about $4$ minutes.

We take a series of measures to minimize LLM usage by human subjects, as advocated by \citet{rilla2025recognising}. Specifically, we 1) use CAPTCHA, 2) disable copying and pasting, 3) disable right-clicking, 4) implement a pop-up alert if the subject switches tabs, 5) embed hidden instructions in the interface, and 6) keep track of tab switching. The majority of participants did not switch tabs, but many switched once, plausibly to copy their Prolific ID (see \cref{fig:tab_switching_histogram}).

\paragraph{Deployment (LLM subjects).} We assess a suite of 16 open- and closed-source LLMs, spanning multiple providers and architectural families. The 16 models differ in size and reasoning capabilities (see \Cref{tab:model_details} for details). To ensure comparability, each model was queried using its default or recommended temperature setting. For each treatment arm, we queried each model 50 times for each question topic (yielding a total of $3\times 16 \times 50 \times 12 = 28,800$ LLM responses). Responses were sampled independently with no shared context or memory across queries. Data was collected in November 2025. 

Models were asked to return their selections in a specific format. However, they sometimes deviated from the format. To process these responses, we use Gemini-2.5-Flash to interpret the selection (\cref{app:llm_cleanup} provides the textual instructions). If Gemini-2.5-Flash cannot interpret a response, it is excluded from the analysis.\footnote{Approximately 4 percent of responses are out of format. After applying the LLM-based interpretation procedure described above, the fraction of excluded responses falls to about 2 percent.  \cref{tab:invalid_responses} reports the number of out-of-format and excluded responses by model.}

\paragraph{Response standardization.} To evaluate the validity of responses and whether or not they agree, we bring answers to a common form. To do so, we first remove diacritics and special characters and convert the text to lowercase letters. We then manually correct spelling mistakes. Finally, we use GPT-5 to standardize responses to a common form, including converting adjectives to their corresponding noun forms, reconciling plural and singular variants, and consolidating alternative names, acronyms, or variants referring to the same underlying concept \citep{CharnessLLM2023,korinek}. We execute this process separately for human and LLM subjects.    

\paragraph{Evaluation metric.} Following \citet{mehta1994nature}, the primary metric we use to measure  \textit{similarity} of responses is the \emph{empirical agreement rate}. As its name suggests, the empirical agreement rate is the empirical counterpart of the agreement rate  (\cref{def: agreement rate} in \cref{sec:model}). As noted in the introduction, \citet{mehta1994nature} call this measure the \emph{coordination index}. We refrain from using this label as it is a misnomer in the context of coordinated divergence.

The empirical agreement rate in a pair of populations (for a specific task) is equal to the frequency with which a random pair of members of these populations provides the same answer. For groups $i$ and $j$, let $N_i$ and $N_j$ denote the total number of
responses elicited from each group, $r\in\{1,2,\dots,R\}$ denote the set of valid responses, and $m^i_r$ and $m^j_r$ denote the number of occurrences of response $r$ in the corresponding group. Then, the empirical agreement rate is given by the formula\footnote{The indicator term $\mathbb{1}_{i=j}$ equals one when we are evaluating self-agreement (i.e., when $i=j$). It reflects the requirement to draw a random pair (without replacement).}  
\begin{equation}
A_{ij}
:=
\sum_{r=1}^{R}
\frac{m^{i}_r}{N_i} \times
\frac{m^j_r - \mathbb{1}_{i=j}}{N_j - \mathbb{1}_{i=j}}. 
\end{equation}
\paragraph{Preregistration.} We preregistered our main hypotheses, our primary analyses, and the sample inclusion criteria and target size. Our preregistration is archived on \href{http://aspredicted.org}{aspredicted.org} and included in \Cref{app:prereg} for convenience.\footnote{Although we preregistered using 15 LLMs, we accidentally collected data for an additional LLM. We decided to report all of the results. None of our findings is sensitive to the exclusion of any LLM.}

\section{Results}\label{sec:results}

 We begin by reviewing \cref{fig:agreement_rate_aggregate}, which compares the average (over all 12 topics) agreement rates for human and LLM subjects by treatment.\footnote{\cref{fig:agreement_rate_topics} reports agreement rates for each topic separately.} For LLM subjects, we report the average (across all models) self-agreement rate (i.e., when the LLM plays against an independent copy of itself). 

The \lcnamecref{fig:agreement_rate_aggregate} reveals several results. First, LLM subjects exhibit substantially higher agreement rates relative to human subjects in the picking arm, which provides no incentives to coordinate (14\% in humans and 58 \% in LLMs). This finding indicates elevated levels of primary algorithmic monoculture  among LLM subjects (relative to primary salience in humans). 

Second, both human and LLM subjects respond to strategic coordination incentives. Specifically, compared to the picking treatment, agreement rates are higher in the coordination treatment, which incentivizes subjects to match their responses ($31\%$ vs $14\%$ in humans, $72\%$ vs $58\%$ in LLMs). The human-subject findings therefore corroborate the finding of \citet{mehta1994nature} on humans coordinating through secondary salience or Schelling salience (not just primary salience). Analogously, the LLM-subject findings indicate coordination through strategic monoculture.\footnote{Although our experiment was not designed to discern the types of strategic monoculture, the choice data indicate that  Schelling salience plays a role. For example, \cref{tab:distr_final}---which reports the most common responses by treatment, topic, and subject type---reveals that the number 1 was the fourth most common LLM choice in the picking arm (with 5.8\% of the responses) is the most common choice in the coordination arm (with 78.8\% of responses). Similarly, the letter ``e,'' the most common letter in English language texts,  accounts for 50.1\% of responses in the coordination arm, despite being the fifth most popular LLM response in the picking arm. See also 
\cref{fig:share_response_change_relative_baseline}, which shows many instances in which the modal response of a specific model differs from the picking arm.} We also find that agreement rates are lower in the divergence treatment, which incentivizes subjects to differ in their responses, compared to the picking treatment ($4\%$ vs $14\%$ in humans, $27\%$ vs $58\%$ in LLMs).  

Third, LLM subjects excel in the coordination arm relative to humans ($72\%$ agreement rate relative to  $31\%$ in humans). Additionally, we find that  LLM subjects lag behind human subjects in the divergence arm ($27\%$ agreement rate relative to  $4\%$ in humans). 

\cref{fig:agreement_rate_aggregate} averages the results over all 16 LLMs we study. In \cref{tab:agreement_rates_model_itself}, we present the agreement rates by model. The \lcnamecref{tab:agreement_rates_model_itself} reveals that most models respond to incentives and that LLM self-agreement rates are higher than human agreement rates across all models and treatments. All differences are statistically significant ($p<0.05$ using two-sided Welch's t-test), with the exception of DeepSeek-8B in the coordination treatment. 

 \cref{tab:agreement_rates_model_itself} also reveals that LLM choices are substantially more homogeneous relative to human choices across all treatments. Specifically, in both the picking and the coordination arms, the average number of different choices is significantly lower for LLM subjects, suggesting that some options are considerably more salient for LLM agents as compared to humans. Interestingly, the \lcnamecref{tab:agreement_rates_model_itself} also reveals that the gap narrows down substantially in the divergence arm, and that some of the best performing models in this treatment arm essentially eliminate the gap.      

\cref{fig:self_agreement_rate_reasoning} reports the average agreement rate among reasoning and non-reasoning models.\footnote{Reasoning models break tasks into steps and write down their logic using a variety of approaches (e.g., chain of thought or tree of thought reasoning). These models excel in many domains, including math and logic \citep{LiSystem22025}.} We find that reasoning models have lower agreement rates in the picking arm ($p<0.05$ using Welch's t-test), that their performance in coordination tasks is comparable to non-reasoning models, and that they do better at divergence tasks ($p<0.05$ using Welch's t-test).

\begin{table}[t]
\centering
\caption{Average agreement rates and number of different answers by treatment}
\label{tab:agreement_rates_model_itself}
\renewcommand{\arraystretch}{1.2}
\begin{threeparttable}
\resizebox{.85\textwidth}{!}{%
\setlength{\tabcolsep}{7pt}
\begin{tabular}{lcccccc}
\toprule
\toprule
 & \multicolumn{2}{c}{Picking} & \multicolumn{2}{c}{Coordination} & \multicolumn{2}{c}{Divergence}\\
 \cmidrule(lr){2-3} \cmidrule(lr){4-5} \cmidrule(lr){6-7}
Subject type & Agree. rate & Answers & Agree. rate & Answers & Agree. rate & Answers\\
\midrule
Human 
& 0.14 & 31.75 
& 0.31 & 11.33 
& 0.04 & 19.50 \\
Claude-3.5H 
& 0.71 & 3.58 
& 0.66 & 3.75 
& 0.28 & 10.00 \\
Claude-4S 
& 0.89 & 1.58 
& 0.79 & 2.08 
& 0.26 & 10.83 \\
DeepSeek-R1 
& 0.65 & 3.00 
& 0.75 & 2.00 
& 0.11 & 19.00 \\
DeepSeek-70B 
& 0.37 & 7.00 
& 0.74 & 2.75 
& 0.21 & 12.00 \\
DeepSeek-8B 
& 0.27 & 12.42 
& 0.39 & 8.75 
& 0.12 & 17.92 \\
DeepSeek-14B 
& 0.35 & 9.42 
& 0.66 & 3.50 
& 0.19 & 15.08 \\
Gemini-2.0F 
& 0.66 & 3.25 
& 0.75 & 2.50 
& 0.42 & 7.50 \\
Gemini-2.5P 
& 0.61 & 4.50 
& 0.85 & 1.92 
& 0.12 & 18.00 \\
Gemma3-27B 
& 0.92 & 1.42 
& 0.92 & 1.17 
& 0.93 & 1.67 \\
GPT-4o 
& 0.54 & 4.58 
& 0.74 & 2.42 
& 0.39 & 7.25 \\
Llama-3.3 
& 0.68 & 3.50 
& 0.81 & 1.92 
& 0.32 & 12.00 \\
o4-mini 
& 0.57 & 3.92 
& 0.77 & 2.50 
& 0.14 & 23.58 \\
Phi-4 
& 0.69 & 2.83 
& 0.65 & 3.25 
& 0.28 & 10.58 \\
Phi-4(R) 
& 0.59 & 3.42 
& 0.74 & 2.42 
& 0.24 & 12.92 \\
Qwen3-14B 
& 0.45 & 5.33 
& 0.70 & 3.08 
& 0.23 & 12.75 \\
Qwen3-32B 
& 0.37 & 6.75 
& 0.65 & 3.42 
& 0.12 & 19.25 \\
\bottomrule
\end{tabular}%
}
\begin{tablenotes}[para]
\footnotesize \justifying \noindent Notes: The table reports the average agreement rate and number of distinct answers across tasks for human and LLM subjects by treatment. For LLM subjects, we report the self-agreement rate. To ensure comparability between humans and LLM subjects (that were queried 50 times), we randomly subsample 50 human responses per treatment and calculate the average number of different responses.
\end{tablenotes}
\end{threeparttable}
\end{table}

To assess the role of LLM subjects facing exact copies of each other, we explore the performance of LLM subjects when facing a different LLM. \cref{fig:pairwise_heatmap} shows the average (over all 12 topics) agreement rate for each pair of models by treatment condition (the diagonal corresponds to the self-agreement rates from \cref{tab:agreement_rates_model_itself}). We find that the higher agreement rates among LLMs persist even when models are paired with different LLM subjects rather than an identical copy of themselves.  The average agreement rate between a pair of LLMs is 37 percent in the picking treatment. This indicates that primary monoculture (salience) is still substantially higher than the human benchmark of 14 percent, even when the LLM subjects are not identical copies. The average agreement rate in the coordination treatment is 53 percent, still substantially higher than the 31 percent agreement rate obtained by human subjects. These findings point to coordination between different LLMs through strategic monoculture. Lastly, the agreement rate in the divergence arm is 9 percent, still substantially higher than the human benchmark, but substantially lower than the self-agreement rate. Consistent with our theoretical analysis, this finding points to large benefits from breaking symmetries in coordinated divergence games.  

\begin{figure}[t]
\caption{Pairwise agreement rates by treatment}
\label{fig:pairwise_heatmap}
\centering
\includegraphics[width=\textwidth]{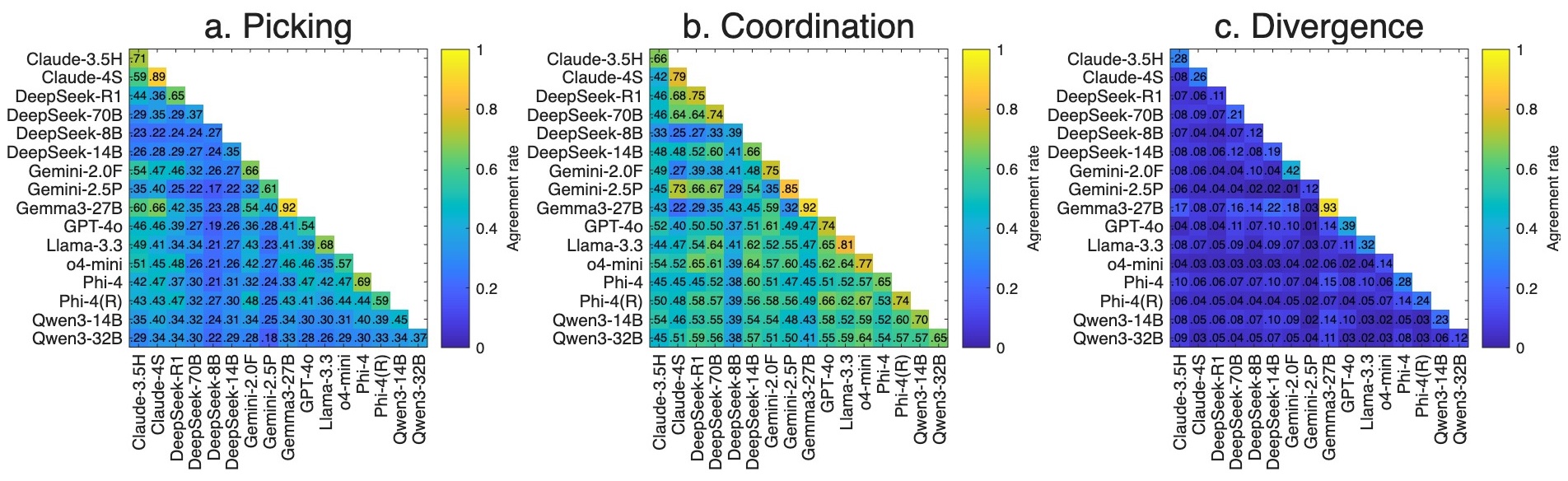}
\caption*{{\footnotesize Notes: The figure displays a heatmap of pairwise agreement rates for all pairs of LLM subjects.  Diagonal cells correspond to self-agreement rates (reported in \cref{tab:agreement_rates_model_itself}). Lighter (yellow) colors indicate higher agreement and darker (blue) colors indicate lower agreement.}}
 \vspace{-.6cm}
\end{figure}

 Finally, we evaluate the correlation between primary monoculture and performance in the coordination and divergence arms. Panel~(a) of \cref{fig:baseline_monoculture_strategic_coordination} plots, for each pair of models, the agreement rate in the picking arm (on the $x$ axis, corresponding to Panel~(a) of \cref{fig:pairwise_heatmap}) against the agreement rate in the coordination arm (on the $y$ axis, corresponding to Panel~(b) of \cref{fig:pairwise_heatmap}). The fact that most points lie above the 45 degree line reflects the fact that the agreement rate increases in the coordination arm relative to the picking arm for almost any pair of LLMs. The figure also reveals a positive correlation between primary monoculture and the agreement rate in the coordination arm. 
 
 Panel~(b) of \cref{fig:baseline_monoculture_strategic_coordination} repeats the analysis for the divergence arm. All but one pair of models lie under the 45-degree line (reflecting reduced agreement rates in the divergence arm). Furthermore, the figure reveals a positive correlation between primary monoculture and the agreement rate in the divergence arm. To summarize, \cref{fig:baseline_monoculture_strategic_coordination} shows that pairs of LLMs that provide similar answers in the absence of incentives tend to continue to provide similar answers both when incentivized to coordinate and when incentivized to diverge.  
\begin{figure}[t]
\caption{Primary monoculture and strategic coordination}
\label{fig:baseline_monoculture_strategic_coordination}
\centering
\includegraphics[width=.8\textwidth]{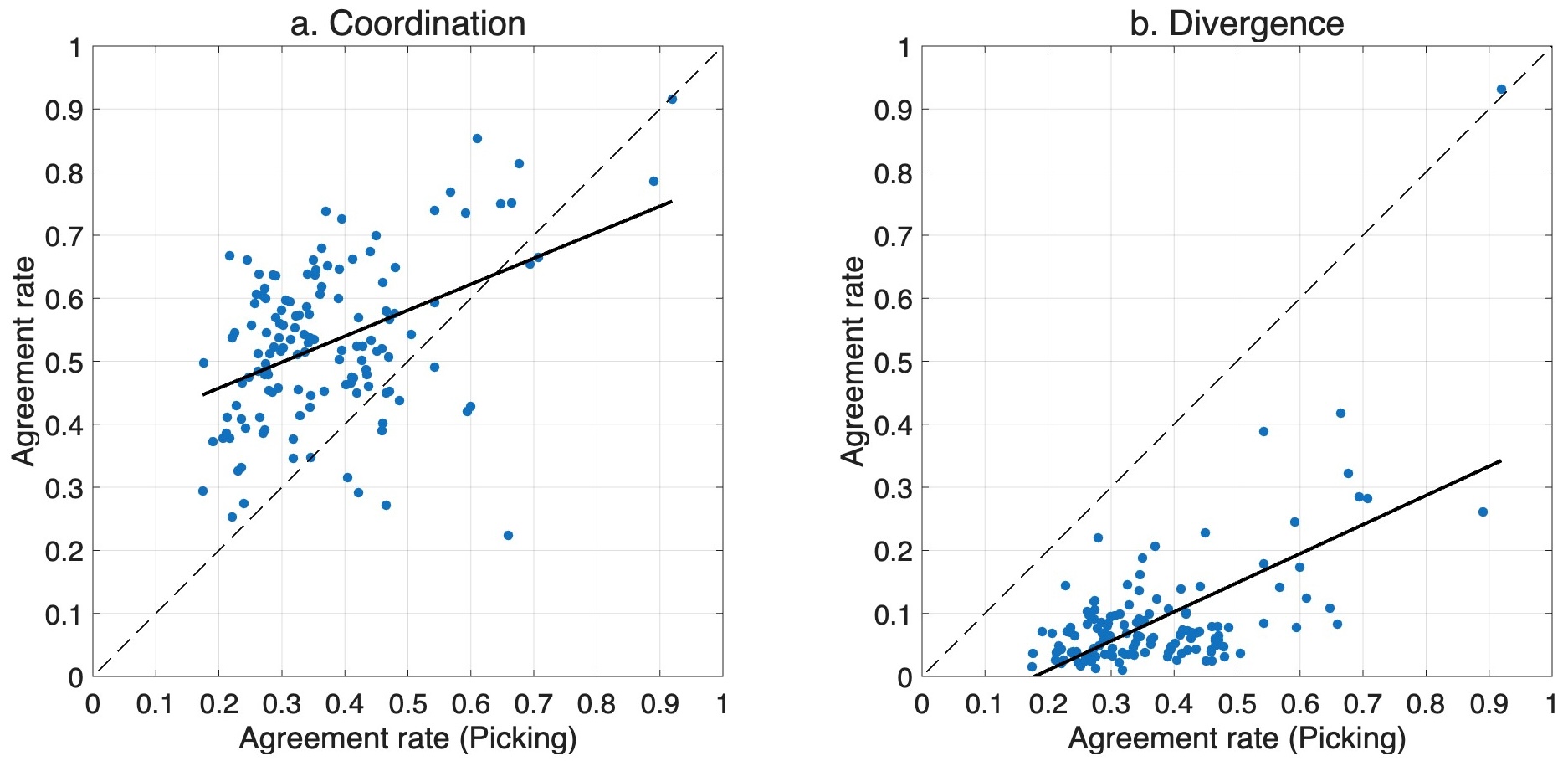}
\caption*{{\footnotesize Notes: Each point on a graph represents a pair of models. The $x$-axis reports their (average across topics) agreement rate in the picking arm, and the $y$-axis reports their (average across topics) agreement rates in the corresponding treatment arm. Panel~(a) shows the coordination arm and panel~(b) shows the divergence arm. Solid lines show the fitted ordinary least squares (OLS) regression lines, and the dashed line indicates the 45° line.}}
\vspace{-.4cm}
\end{figure}

\section{Textual Analysis}\label{sec:text_analysis}
In this section, we analyze the textual reasoning (Chain-of-Thought, or CoT for short) of LLM subjects.\footnote{Some models only provide a summary of their reasoning. We omit these models from analyses that consider the length of CoT, and otherwise we analyze the summary of the reasoning.} We begin by analyzing the amount of LLM reasoning in the various treatment arms. 
To do this, we compare CoT length across treatment arms. In line with the difficulty to perform the task perfectly, we find that reasoning is lowest in the picking arm, and highest in the divergence arm (see \cref{tab:stats_sentences_cot_v2}).\footnote{Interestingly, human subjects in the divergence arm also take more time to complete the experiment (see \cref{tab:demographics}). }

Next, we turn to analyzing the semantic meaning of LLM reasoning.\footnote{\citet{fish2024algorithmic} establish a causal link between LLM ``thoughts" and actions in their setting.} We do so using two complementary methods: semantic similarity \citep[as in][]{fish2024algorithmic} and LLM as a judge \citep[as in][]{asirvatham2026gpt}. 

\subsection{Semantic Similarity}
Our semantic similarity analysis follows the methodology of \citet{fish2024algorithmic}.  We begin by splitting the LLM-generated reasoning text into
individual sentences. This yields a dataset of 226,083 individual sentences. As we discussed above, in contrast to \citet{fish2024algorithmic}, the amount of text is \emph{not} balanced across treatment arms (for statistics on each model and treatment, see \cref{tab:stats_sentences_cot_v2}). 

After splitting into sentences, we filter for sentences containing synonyms of the terms ``salient,'' ``obscure,'' and ``randomize.'' (The full list of synonyms is available in \cref{app:text_analysis}.) We note that the mere mention of a term like ``salient'' does not indicate an intention to choose a salient action. For example, the agent may be planning to \emph{avoid} the salient choice. To address this issue, we incorporate the semantic meaning of each sentence into the analysis. 

Following \citet{fish2024algorithmic}, we analyze the semantic meaning of sentences using standard semantic similarity methods \citep[e.g.,][]{Kenter2015, Mikolov2013EfficientEO}. Specifically, we first use OpenAI's text-embedding-3-large to convert each screened sentence into a $3,072$-dimensional vector. Next, for each term $X$ of the three terms we screened for, we create two reference sentences, \textsc{Choose$X$} and \textsc{Avoid$X$}.\footnote{Each reference vector is created by taking the average of the $3,072$ dimensional vector corresponding to six sentences representative of the category. The exact sentences considered for each category are listed in \Cref{app:text_analysis}.} Given a screened sentence, we classify it according to the closest (in the sense of cosine similarity) reference vector.

\cref{fig:histogram_cot_sentences_aggregated_6cat} summarizes our findings. The figure reveals that an overwhelming fraction of the sentences in the picking arm discuss salience favorably. The numbers are even higher in the coordination arm, consistently with our findings on models' behavior in \cref{sec:results}. By contrast, most sentences in the divergence arm deal with obscurity. Furthermore, non-salience is emphasized relative to the coordination arms. \cref{fig:textual_reasoning_model} presents disaggregated results by model. These findings suggest that LLMs’ reasoning is broadly consistent with the strategic environment of each treatment.

 \begin{figure}[t]
\caption{Distribution of screened sentences by treatment}
\label{fig:histogram_cot_sentences_aggregated_6cat}
\centering
\includegraphics[width=.8\textwidth]{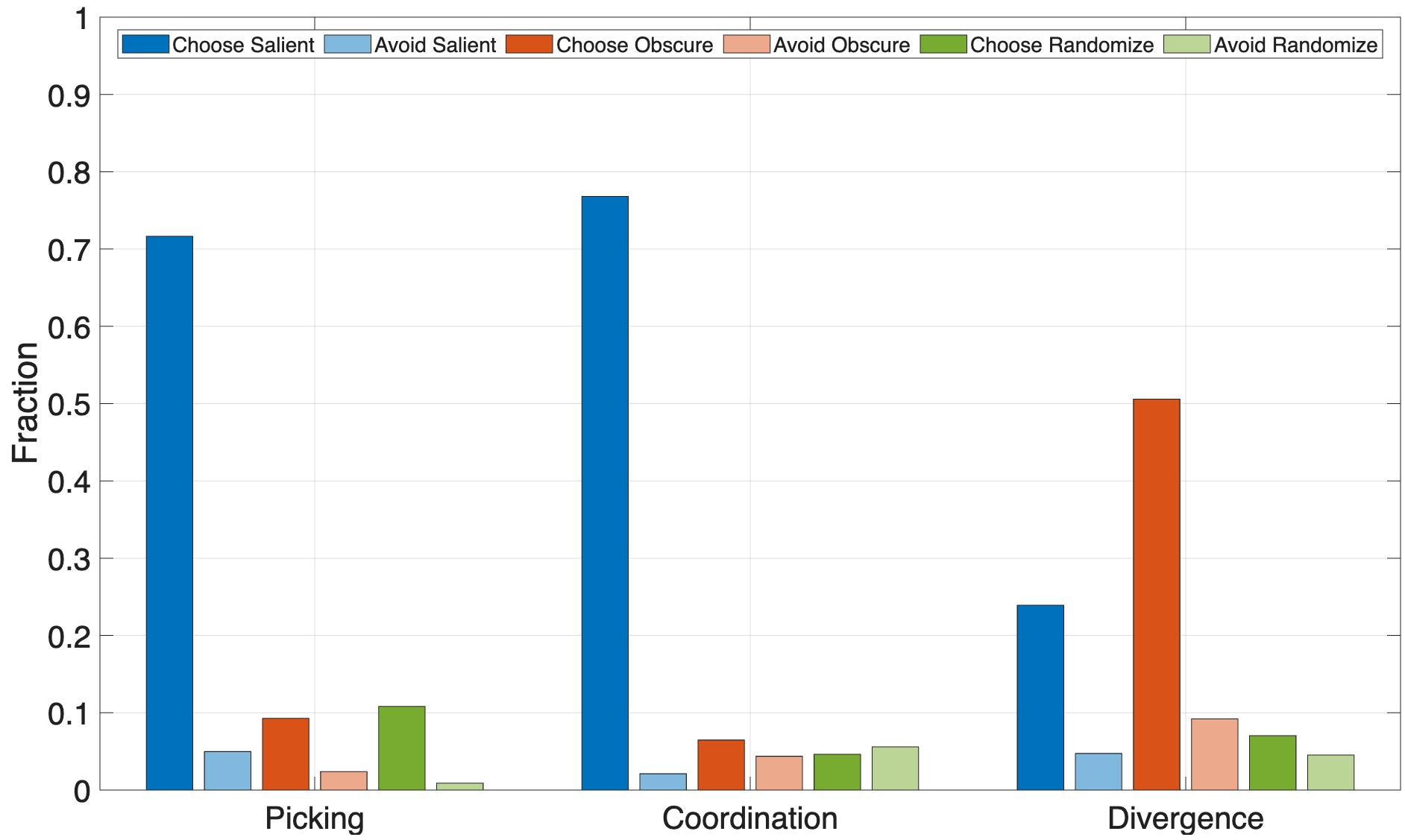}
\caption*{{\footnotesize Notes: The figure reports the share of screened sentences that are semantically closest (in cosine similarity) to one of six reference vectors: \textsc{ChooseSalient}, \textsc{AvoidSalient}, \textsc{ChooseObscure}, \textsc{AvoidObscure}, \textsc{ChooseRandomize}, and \textsc{AvoidRandomize}.}}
\vspace{-.6cm}
\end{figure}

\subsection{LLM as a Judge}
This section presents a complementary textual analysis using LLM as a judge. This approach has several advantages relative to the semantic similarity analysis. First, it considers the entire text rather than screened sentences. Second, it enables us to consider the text as a whole, rather than on individual sentences. Third, it allows a more refined analysis of the semantic meaning of LLM-generated chains of thought. 

In the LLM-as-a-judge analysis, we feed the LLM-subjects' textual reasoning
from the experiment to another LLM that serves as a ``judge'' \citep[][review this approach and its applications]{gu2024survey}. The LLM judge is instructed to extract the main strategic reasoning expressed in the text as well as the perceived identity of the co-player (human or AI). The judge is informed about the general structure of the experiment and the topic of the question at hand, but is not given information about the experimental arm the specific CoT came from, or about the LLM that generated the response.  

Operationally, for each reasoning text, we use an independent call to our judge, which is implemented using Gemini 2.5 Flash. The judge is provided with 8 categories of strategic reasoning to choose from: primary salience, secondary salience, high-order salience, Schelling salience, obscurity, high-order obscurity, randomization, and ``not mentioned.'' For each category, the judge is provided with a definition and examples. Additionally, it is provided with 8 choices for beliefs about the identity of the co-player (various degrees of certainty about the humanity of the co-player, as well as the possibility of it being an exact copy).\footnote{This question is especially relevant for one of the experimental variations presented in the next section.} The full judge prompt is available in \cref{app:CoT_textual_analysis_judge_prompt}. 

 \cref{fig:Judge}  summarizes the LLM-judge's determinations about LLM subjects' strategic reasoning.
 Our findings are in line with our results on LLM subjects' choices. First, in the picking arm, the most common strategic categories are ``not mentioned'' and primary salience. This reflects the lack of incentive to coordinate and points to a potential explanation for the high levels of primary monoculture among LLM subjects. Second, in the coordination arm, the most common strategic categories are secondary salience and Schelling salience. This provides further evidence of strategic algorithmic monoculture. Importantly, the increase in the strategic reasoning categories that reflect strategic algorithmic monoculture (68 percent) is substantially higher than the increase in agreement rate we can detect directly from choice data (14 percent). Third, in the divergence arm, the most common strategic categories are obscurity and higher-order obscurity. Again, this reflects strategic algorithmic monoculture (in the sense that agreement rates are regulated strategically), and the change in strategic reasoning is more drastic than the change we measure in choice data (68 percent vs. 31 percent).  \Cref{fig:textual_reasoning_model} presents the results for each model separately.

 \begin{figure}[t]
\caption{LLM-judge classification of strategic reasoning by treatment}
\label{fig:Judge}
\centering
\includegraphics[width=.8\textwidth]{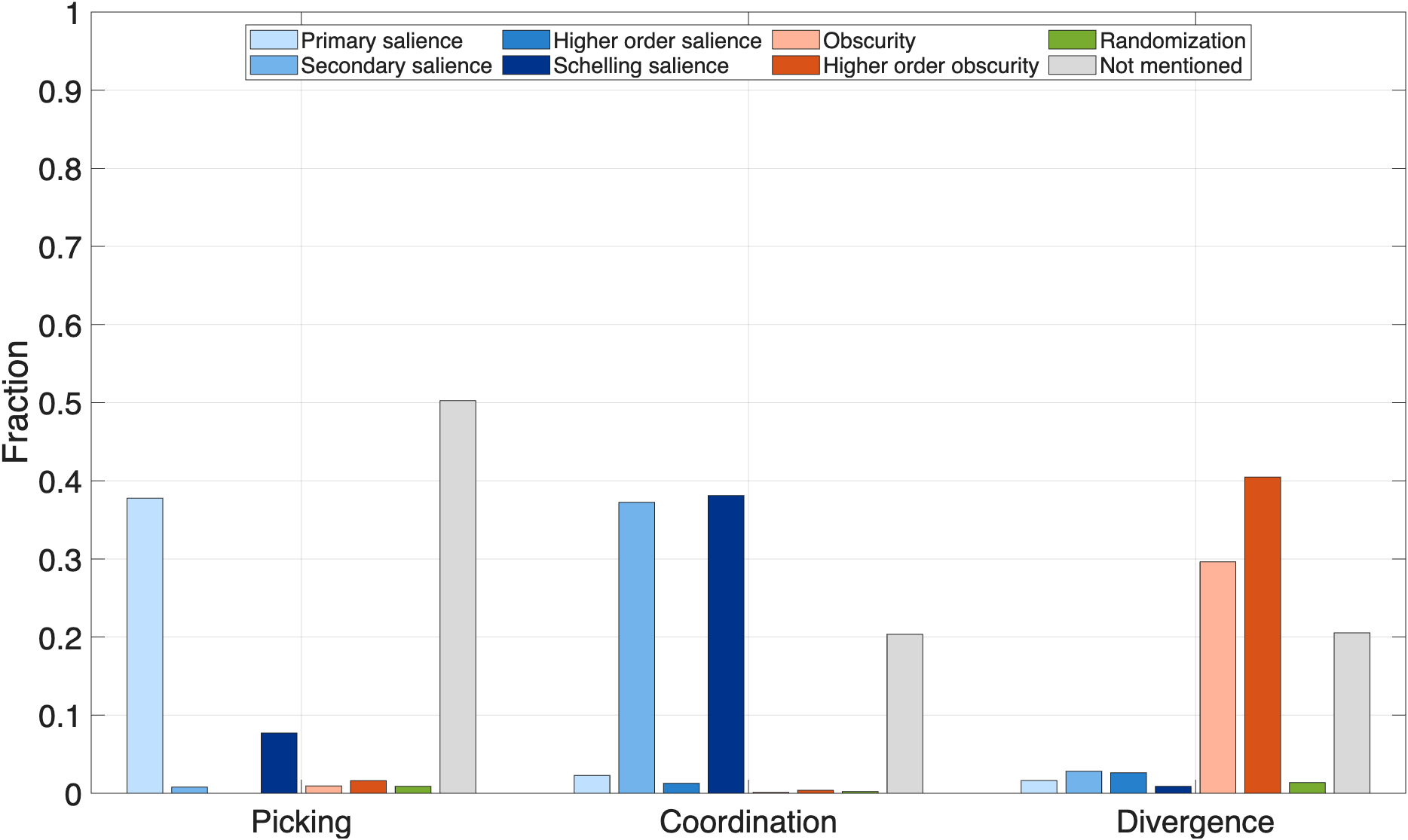}
\caption*{{\footnotesize Notes: The figure reports the share of LLM-reasoning texts that fall under each of the eight strategic reasoning categories according to the LLM-judge's determinations. The judge is implemented using Gemini 2.5 Flash. For details, see \cref{app:CoT_textual_analysis_judge_prompt}.}}
\vspace{-.4cm}
\end{figure}

\section{Additional Variations}\label{sec: additional experiments}
To probe the mechanisms behind our findings, we conducted several variations of the experiment. This section collects our findings.

\subsection{Capabilities}\label{sec:capabilities}
A possible explanation for the high agreement rates among LLM subjects is that they only have access to a small number of valid answers. In the context of the divergence arm, it is also possible that they have limited ability to randomize between answers \citep{kleinberg2024random}. In this section, we assess these capabilities. 

To do so, we instruct each of the LLMs we study to generate a list of 100 valid answers (or, when there are fewer answers, as many as possible). After completing this task, we ask each LLM to choose one answer from the list uniformly at random (the exact prompts are provided in \Cref{capabilities prompts}). We repeat this process 50 times for each LLM. We refer to the resulting choices as the \textit{random} arm.

\Cref{tab:list_generation} provides statistics on the average number of valid answers each model generated. It reveals that models can readily produce many valid answers, ruling out lack of access as an explanation.  

Next, we conduct two complementary analyses using the results from the random arm. First, we calculate the agreement rate between LLM subjects in the random arm and in the divergence arm. We find that agreement rates are substantially lower than those achieved by LLM subjects in the divergence arm: on average, 4\% when using the same model and 3\% with a random pair of models (the corresponding numbers in the divergence arm of the main experiment were 27\% and 9\%). In the second analysis, we calculate the agreement rate between pairs of LLM subjects in the random arm.  The average self-agreement rate is 12\%, and the average agreement rate across models is 4\%, still substantially lower than the divergence arm of the main experiment.\footnote{Panel~(a) of \Cref{fig:agreement_divergence_vs_random} displays pairwise agreement rates for the first analysis and Panel~(b) of \Cref{fig:agreement_divergence_vs_random} for the second analysis. The higher agreement rates in the second analysis indicate that LLM subjects are not fully successful at following the instruction to randomize uniformly, and that part of the improvement in performance in the first analysis results from breaking the symmetry between LLM subjects.} Altogether, these findings indicate that LLM subjects are technically capable of achieving better outcomes in the divergence arm, suggesting that their poor performance results from other reasons. 

\subsection{Temperature}

Since subjects are symmetric, our theoretical analysis suggests that success in the divergence arm crucially relies on the ability to randomize. On the other hand, randomization hinders coordination. To further investigate the role of the ability to randomize, we replicate the experiment using the highest and lowest temperatures allowed by each model (recall that the temperature is a parameter that modulates the randomness of LLM-generated text. Higher temperature corresponds to a higher degree of stochasticity in token prediction).\footnote{Our analysis covers 10 models, as we exclude models that provided unintelligible outputs under extreme temperatures, models that did not allow extreme temperature settings, and one model that was deprecated by the time we conducted this analysis.}

\cref{fig:temperature_analysis_pool} presents the aggregated results. Consistent with our theoretical analysis, when using the lowest temperatures, the agreement rate of these symmetric algorithms increases across all treatments ($p<0.05$ for each pair, using Welch's t-test). When using the highest temperatures, agreement rates are lower across treatments, but the differences are not statistically significant. Finally, we note that even at these extreme temperatures, human subjects continue to outperform LLM subjects in the divergence arm, and the opposite holds in the coordination arm. \cref{fig:temperature_analysis_heterogeneity} shows the results of \cref{fig:temperature_analysis_pool} for each LLM separately. The results continue to hold qualitatively. Altogether, our findings point to a tradeoff for users: increasing the temperature improves LLM subjects' performance in coordinated divergence tasks, but also hurts their performance in coordination tasks.  

\begin{figure}[t]
\caption{Agreement rate under different temperature values by treatment}
\label{fig:temperature_analysis_pool}
\centering
\includegraphics[width=.8\textwidth]{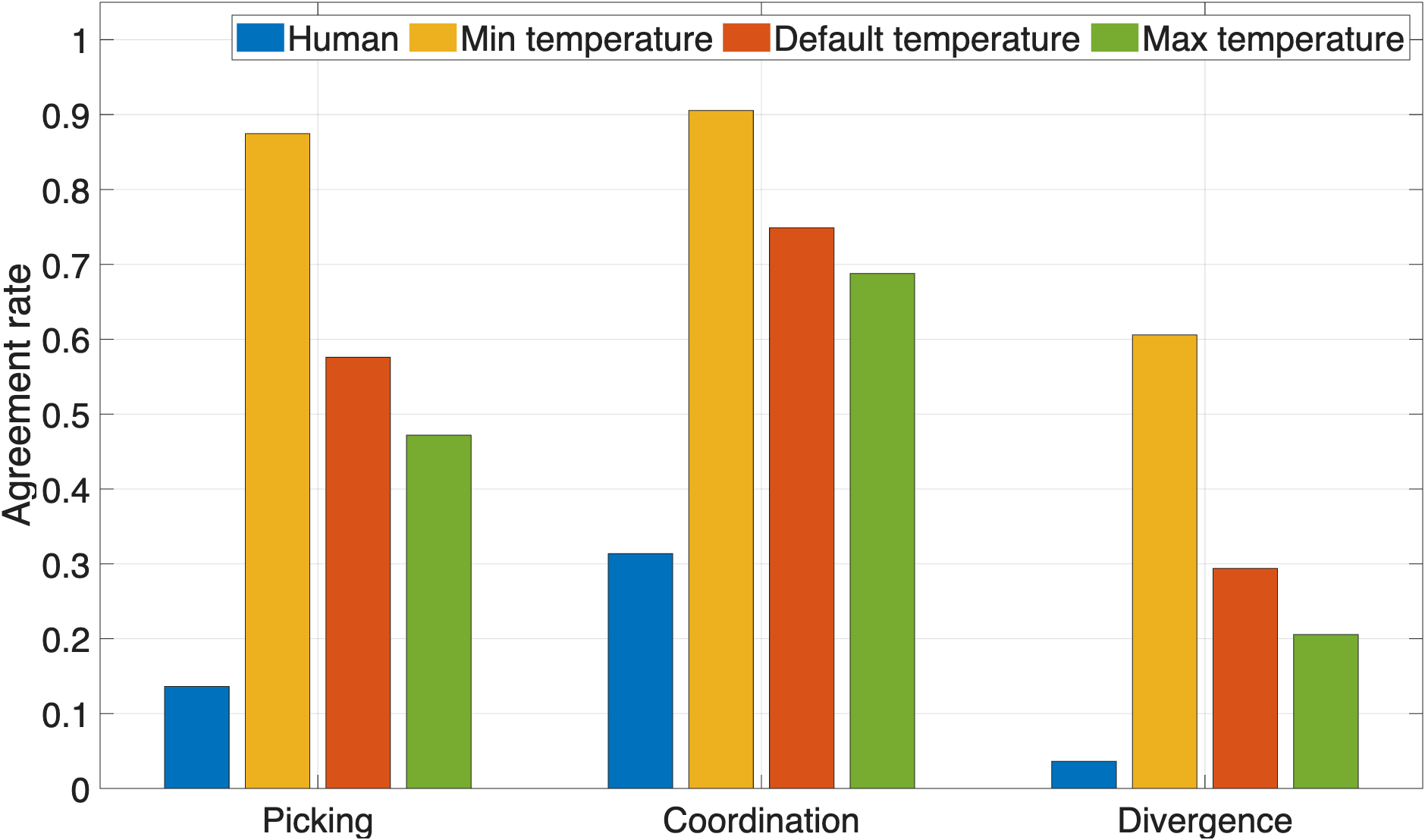}
\caption*{\footnotesize Notes: Bars represent the average self-agreement rates across topics for LLM subjects under different temperatures, for each treatment arm. For comparability, the default temperature average is calculated only over models that were not excluded, and human agreement rates are also displayed. Differences between default and low temperatures are statistically significant across all treatment arms ($p < 0.05$), whereas differences between default and high temperatures are not.}
\vspace{-.6cm}
\end{figure}

\subsection{Information}

Coordination through secondary salience and Schelling salience requires reasoning about the other player. In this section, we examine whether changing the information that is provided to LLM subjects about the identity of the other player affects coordination. In the main experiment (\cref{sec:experimental_design}), LLM subjects were informed that the other subject was another LLM. In this section, we change the prompt so that the LLM is informed that (1) it is facing an identical \emph{copy} of itself (the \emph{copy} condition), or (2) that it is facing  \emph{``another person"} (the \emph{person} condition). The exact textual instructions for these two conditions are provided in  \cref{fig:prompts_copy,fig:prompts_another_person}. 

To quantify the effect of providing this information, we estimate the following linear regression (separately for each condition and treatment):
\begin{equation}
\label{eq:information_ols}
A_{it} = \alpha +  \beta \ T_{it}  + \mu_i + \delta_t + \varepsilon_{it}, 
\end{equation}
where the variable $A_{it}$ is the self-agreement rate of model $i$ in topic $t$ and $T_{it}$ is an indicator equal to one if the model is assigned to the 
treatment condition (either copy or person), and zero otherwise. The specification also includes model fixed effects, $\mu_i$, and topic fixed effects, $\delta_t$. The coefficient of interest is $\beta$, which measures the causal effect of changing the information about the other player's identity on strategic behavior.

\cref{tab:information_regression}
 displays the estimated $\beta$ coefficients with 95\% confidence intervals for both the copy and the person conditions. LLM performance improved in both treatment arms under the copy condition. Specifically, agreement rates increase in the coordination arm and decrease in the divergence arm for most LLMs. However, these effects are small and not statistically significant. Similarly, we find no statistically significant effects in the person condition.

\begin{table}[t]
\centering
\resizebox{0.85\textwidth}{!}{ 
\renewcommand{\arraystretch}{1}
  \begin{threeparttable}
\caption{Effect of information treatment on self-agreement rates}
\label{tab:information_regression}
\setlength{\tabcolsep}{50pt}
\begin{tabular}{l c c}
\hline
\hline
 & (1) & (2) \\
 & Coordination & Divergence \\
\hline
Copy & $0.017$ & $-0.020$ \\
 & (0.014) & (0.024) \\[6pt]
Person & $0.009$ & $0.012$ \\
 & (0.011) & (0.015) \\
\hline
\end{tabular}
\begin{tablenotes}
\item \footnotesize{ Notes: The table presents the results of the linear regression specified in \cref{eq:information_ols}, which includes model and topic fixed effects.
The dependent variable is the self-agreement rate in the coordination and divergence arms, respectively.  
\textit{Copy} and \textit{Person} are indicators that equal one in the copy and person conditions, respectively.   
Standard errors in parentheses are computed using the CR2 variance estimator and clustered by model. 
* p$<$0.10, ** p$<$0.05, *** p$<$0.01.}
\end{tablenotes}
\end{threeparttable}
}
\end{table}

The average effects reported above mask a substantial heterogeneity between reasoning and non-reasoning models. \cref{fig:belief_effect_reasoning} reveals that non-reasoning models (whose performance is weaker at the baseline condition) benefit substantially (and significantly) in the divergence arm from learning that they are playing a copy. \cref{fig:beliefs_effect_by_model} shows the effect by model.

\paragraph{Textual analysis.} We repeat the textual analyses from \cref{sec:text_analysis}. We find that the length of reasoning in the divergence arm under the copy condition is even higher than in the main divergence arm (see \cref{tab:stats_sentences_cot_v2}). Additionally, relative to the divergence arm of the main experiment, our semantic similarity analysis finds that in the copy condition, discussion of randomization becomes more prevalent while the overall level of obscurity decreases slightly. By contrast, the person condition has little effect across experimental arms (see \Cref{fig:textual_copy_person} for aggregated results and \Cref{fig:textual_copy_person_by_model} for results disaggregated by model). Similarly, the LLM-as-a-Judge analysis also documents an increase in the randomization category under the copy condition in the divergence arm. It goes beyond the semantic similarity analysis by documenting a sharp increase in the Schelling salience category under the copy condition in the coordination arm (mostly replacing secondary salience reasoning).  

Recall that the LLM judge was also instructed to classify the reasoning with respect to beliefs about the identity of the other player. We find that the judge's classification is consistent with the expected effect of our experimental conditions. Specifically, across all conditions, the results of the classification are nearly identical in the divergence in coordination arms. In the baseline experiment, the most common categories are LLM co-player and ``not mentioned,'' in the copy condition, the most common responses are copy and ``not mentioned,'' and in the person condition the most common responses are Human and ``not mentioned'' (see \cref{fig:beliefs judge} for aggregated results and \cref{fig:identity_condition_model} for results disaggregated by model).   

\subsection{Personas} 
In this section, we consider an established approach for increasing heterogeneity: personas. As in \citet{mei2024turing}, we replicate the main experiment informing LLM subjects that they are playing for a human with specific characteristics.\footnote{We thank Alex Imas for proposing this analysis. We use only 14 models, as two models had been deprecated by the time we conducted this analysis.} Specifically, to mimic the distribution of traits in the human-subject sample, we randomly draw 50 different human subjects and construct personas based on their gender, age, occupation, race/ethnicity, and education level. Operationally, we provide the persona information first (see \Cref{persona prompt}) and then use the prompt from the person condition.\footnote{We use the person condition prompt because the main analysis prompt says that the co-player is ``another LLM'' which contradicts the persona instructions. For the picking arm, we use the main analysis prompt, as there is no such contradiction.} We use the same 50 personas for all topics and treatment arms, but rely on a separate, independent LLM calls (yielding a total of $3\times 50 \times 12 = 1,800$ independent calls per model). 

\Cref{fig:persona_self_agreement} summarizes the results. Although personas shift LLM subjects' behavior closer to human subjects, our main results remain qualitatively unchanged. In particular, even when endowed with heterogeneous personas, LLM subjects significantly lag behind humans in the divergence arm. \Cref{fig:persona_all_pair_models,fig:pairwise_heatmap_persona,fig:persona_prompt_topic} show similar results in replications of other analyses from \cref{sec:results}. Interestingly, when assessing all LLM pairs, adding personas does not improve performance in the divergence arm but hurts performance in the coordination arm.

\begin{figure}[t]
\caption{Agreement rate by LLMs with personas}
\label{fig:persona_self_agreement}
    \centering
    \includegraphics[width=.8\linewidth]{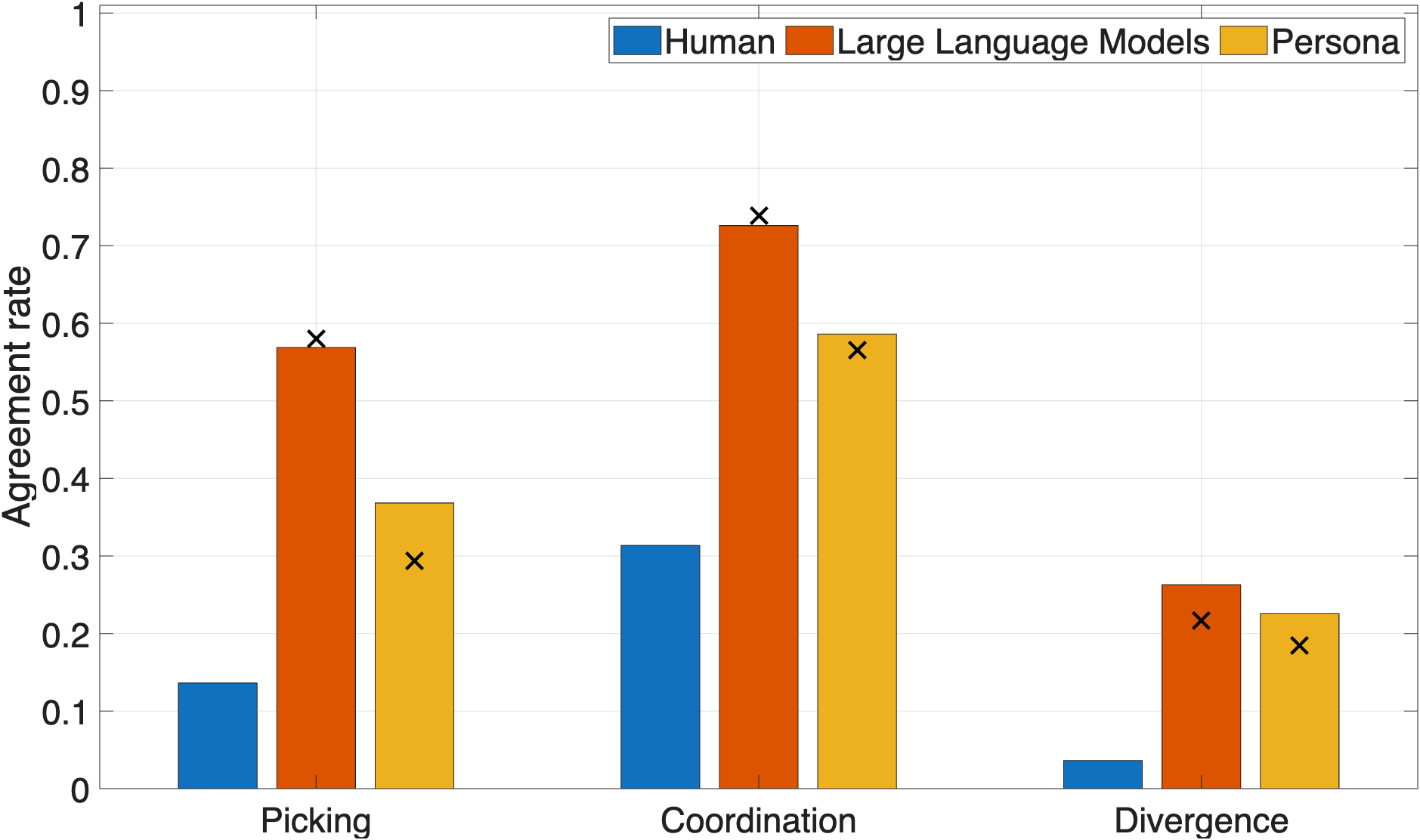}
\vspace{.01mm}
\caption*{\footnotesize Note: Bars report average agreement rate across question topics for human and LLM subjects by treatment.
LLM agreement rates are the average over all LLMs when facing the same model. The black cross-shaped markers indicate the median.}
\vspace{-.6cm}
\end{figure}

\section{Discussion}\label{sec: discussion}
The role of AI agents in society is rapidly increasing. These agents do not operate in a vacuum. Rather, they interact with each other, forming complex multi-agent systems. These systems pose novel risks, many of which are linked to undesirable coordination (collusion) and miscoordination \citep{hammond2025multi}. 

Against this backdrop, we make two contributions. Conceptually, we introduce the taxonomy of primary versus strategic algorithmic monoculture, paralleling the distinction of the human-coordination literature between primary salience and other forms of salience \citep{mehta1994nature, schelling1960strategy}. Empirically, we implement a simple experimental design that allows us to separate \emph{strategic} monoculture from primary monoculture.

The results of our experiment reveal a sharp asymmetry. With no incentives to agree or disagree, LLM choices are substantially more homogeneous than human choices, consistent with elevated primary monoculture. And while both humans and LLMs respond to incentives---increasing or decreasing agreement when it is rewarded---their performance varies. LLMs dramatically outperform humans in coordination on the same action,  but humans dramatically outperform LLMs in divergence. In short, LLMs are exceptionally effective at converging on focal responses, but comparatively poor at sustaining socially valuable heterogeneity. 

Our findings contribute to the discussion of monoculture, model multiplicity, and related multi-agent risks \citep{bommasani2021opportunities,goreckimonoculture, raghavan2025competitiondiversitygenerativeai}. 
The concerns expressed in this literature manifest, for example, when admissions or hiring decisions are delegated to algorithms, or assisted by them \citep{Kleinberg2021Monoculture}. 
Intuitively, as the number of models increases, even if their errors are partially correlated, one can hope that college applicants or job candidates will have \emph{recourse}---i.e., that they will not be systematically excluded due to noise. We contribute to this literature by showing that LLMs regulate the similarity of their choices in response to strategic incentives. 
In richer environments---such as hiring or admissions---strategic incentives \citep[including avoiding adverse selection or congestion,][]{ali2026college,baek2026strategichiringalgorithmicmonoculture} may further shape the correlation between various algorithms, a consideration that is often overlooked. 
We hope that our perspective on algorithmic monoculture will advance our understanding of these important risks and help in identifying ways to mitigate them.

\bibliographystyle{apalike}
\bibliography{biblio}

\clearpage

\appendix

\setcounter{figure}{0}
\renewcommand{\thefigure}{A.\arabic{figure}}  

\setcounter{table}{0} 
\renewcommand{\thetable}{A.\arabic{table}} 

\section*{\centering{Supplementary Materials}}
\newpage

\section{Additional Tables and Figures}  
\label{sec:appendix_figures_tables}

\begin{table}[h!]
\centering
\caption{Summary statistics (human subjects)}   
\label{tab:demographics}
\renewcommand{\arraystretch}{1.3}
\resizebox{.9\textwidth}{!}{
\begin{threeparttable}
\setlength{\tabcolsep}{12pt}
\begin{tabular}{lcccccc}
\toprule
\toprule
 & \multicolumn{2}{c}{Picking} & \multicolumn{2}{c}{Coordination} & \multicolumn{2}{c}{Divergence} \\
\cmidrule(lr){2-3} \cmidrule(lr){4-5} \cmidrule(lr){6-7}
Variable & Mean & SD & Mean & SD & Mean & SD \\
\midrule
Age & 47.47 & 14.03 & 47.09 & 11.91 & 45.33 & 12.63 \\
Female & 0.50 & 0.50 & 0.54 & 0.50 & 0.57 & 0.50 \\
\multicolumn{7}{l}{Education} \\
\quad Primary school & 0.01 & 0.10 & 0.03 & 0.17 & 0.02 & 0.14 \\
\quad Secondary school & 0.26 & 0.44 & 0.24 & 0.43 & 0.33 & 0.47 \\
\quad College or university & 0.56 & 0.50 & 0.53 & 0.50 & 0.53 & 0.50 \\
\quad Post-graduate degree (e.g., Master's, PhD) & 0.16 & 0.37 & 0.19 & 0.39 & 0.11 & 0.32 \\
\quad Prefer not to say & 0.01 & 0.10 & 0.01 & 0.10 & 0.00 & 0.00 \\
\multicolumn{7}{l}{Race or Ethnicity} \\
\quad Asian & 0.07 & 0.25 & 0.04 & 0.20 & 0.02 & 0.14 \\
\quad Black or African American & 0.17 & 0.38 & 0.06 & 0.24 & 0.17 & 0.37 \\
\quad Hispanic or Latino & 0.05 & 0.21 & 0.02 & 0.14 & 0.04 & 0.20 \\
\quad White or Caucasian & 0.71 & 0.45 & 0.85 & 0.36 & 0.75 & 0.44 \\
\quad Other & 0.00 & 0.00 & 0.03 & 0.17 & 0.02 & 0.14 \\
Duration (in minutes) & 3.91 & 0.23 & 3.84 & 0.20 & 4.92 & 0.21 \\
Tab switches & 0.50 & 0.62 & 0.40 & 0.62 & 0.55 & 0.58 \\
\\
Number of Observations & \multicolumn{2}{c}{105} & \multicolumn{2}{c}{99} & \multicolumn{2}{c}{97} \\
\bottomrule
\end{tabular}
\begin{tablenotes}[para]
\end{tablenotes}
\end{threeparttable}
}
\end{table}

\begin{table}[h!]
\caption{Features of LLM subjects}
\label{tab:model_details}
\centering
\resizebox{.9\textwidth}{!}{
\setlength{\tabcolsep}{15pt}
\renewcommand{\arraystretch}{1.4}
\begin{threeparttable}
\begin{tabular}{c c c c c c}
\hline
\hline
\textbf{Type} & \textbf{Provider} & \textbf{Label} & \textbf{Model} & \textbf{Reasoning} & \textbf{Default Temperature} \\
\hline
\multirow{6}{*}{Closed source} 
 & \multirow{2}{*}{Anthropic} & Claude-3.5H & Claude-3.5-Haiku & No & 1\\
 &  & Claude-4S & Claude-4-Sonnet & Yes & 1\\
 & \multirow{2}{*}{Google} & Gemini-2.0F & Gemini-2.0-Flash & No & 1\\
 &  & Gemini-2.5P & Gemini-2.5-Pro & Yes & 1\\
 & \multirow{2}{*}{OpenAI} & GPT-4o & GPT-4o & No & 1\\
 &  & o4-mini & o4-mini & Yes & 1\\

\hline

\multirow{10}{*}{Open source}
 & \multirow{4}{*}{DeepSeek} & DeepSeek-8B & DS-Llama-8B & Yes & 0.5\\
 &  & DeepSeek-70B & DS-Llama-70B & Yes & 0.5\\
 &  & DeepSeek-14B & DS-Qwen-14B & Yes & 0.5\\
 &  & DeepSeek-R1 & DS-R1-671B & Yes & 1\\
 &  Google & Gemma3-27B & Gemma3-27B & Yes & 0.5\\
 & Llama & Llama-3.3 & Llama-3.3-70B & No & 1\\
 & \multirow{2}{*}{Microsoft} & Phi-4 & Phi-4-14B & No & 0.5\\
 &  & Phi-4(R) & Phi-4-Reasoning & Yes & 0.5\\
 & \multirow{2}{*}{Qwen} & Qwen3-32B & Qwen3-32B & Yes & 0.5\\
 &  & Qwen3-14B & Qwen3-14B & Yes & 0.5\\
\hline
\end{tabular}
\begin{tablenotes}
    \footnotesize
    \item \textit{Notes:} The table provides information on LLM subjects. In particular, it details the full name of the model alongside the label used in the paper, and whether the model has reasoning abilities.
\end{tablenotes}
\end{threeparttable}}
\end{table}

\begin{table}[h!]
\centering
\caption{Most frequent responses by treatment, topic, and subject type}
\resizebox{\textwidth}{!}{
\begin{tabular}{clclclclclclc}
\midrule
\midrule
& \multicolumn{4}{c}{\textbf{Picking}} & \multicolumn{4}{c}{\textbf{Coordination}} & \multicolumn{4}{c}{\textbf{Divergence}} \\
\cmidrule(lr){2-5} \cmidrule(lr){6-9} \cmidrule(lr){10-13}
& \multicolumn{2}{c}{Human} & \multicolumn{2}{c}{LLMs} & \multicolumn{2}{c}{Human} & \multicolumn{2}{c}{LLMs} & \multicolumn{2}{c}{Human} & \multicolumn{2}{c}{LLMs} \\
\cmidrule(lr){2-3} \cmidrule(lr){4-5} \cmidrule(lr){6-7} \cmidrule(lr){8-9} \cmidrule(lr){10-11} \cmidrule(lr){12-13}
Topic & Response & Freq. & Response & Freq. & Response & Freq. & Response & Freq. & Response & Freq. & Response & Freq.\\
\midrule
\multirow{5}{*}{athlete} & michael jordan & 25.00 & lebron james & 67.60 & michael jordan & 28.30 & lebron james & 37.00 & lebron james & 7.40 & lebron james & 13.40 \\
& lebron james & 19.20 & lionel messi & 13.90 & lebron james & 21.20 & lionel messi & 33.50 & michael jordan & 4.30 & shohei ohtani & 7.60 \\
& lionel messi & 3.80 & serena williams & 7.10 & tom brady & 8.10 & cristiano ronaldo & 13.80 & simone biles & 4.30 & simone biles & 7.40 \\
& serena williams & 3.80 & cristiano ronaldo & 4.00 & travis kelce & 7.10 & michael jordan & 13.00 & usain bolt & 4.30 & serena williams & 5.80 \\
& dak prescott & 2.90 & michael jordan & 2.40 & kobe bryant & 6.10 & serena williams & 0.50 & aaron judge & 3.20 & mikaela shiffrin & 4.40 \\
\midrule
\multirow{5}{*}{cars} & ford & 28.60 & toyota & 92.10 & ford & 47.00 & toyota & 97.80 & ford & 11.60 & volvo & 14.40 \\
& toyota & 22.90 & ford & 5.00 & toyota & 23.00 & ford & 2.10 & honda & 5.30 & mazda & 13.90 \\
& honda & 12.40 & tesla & 1.50 & honda & 13.00 & honda & 0.10 & hyundai & 5.30 & subaru & 13.90 \\
& chevrolet & 4.80 & bmw & 0.60 & chevrolet & 8.00 & abarth & 0.00 & kia & 5.30 & toyota & 8.40 \\
& kia & 4.80 & honda & 0.30 & nissan & 2.00 & acura & 0.00 & nissan & 5.30 & tesla & 5.30 \\
\midrule
\multirow{5}{*}{city} & paris & 14.40 & tokyo & 44.70 & new york & 50.00 & paris & 53.30 & oslo & 3.30 & kyoto & 16.60 \\
& new york & 10.60 & paris & 39.20 & paris & 23.00 & new york & 27.10 & barcelona & 2.20 & ulaanbaatar & 16.60 \\
& london & 5.80 & kyoto & 6.90 & london & 4.00 & london & 9.50 & berlin & 2.20 & reykjavik & 7.00 \\
& chicago & 3.80 & london & 3.30 & los angeles & 4.00 & tokyo & 7.70 & dallas & 2.20 & valparaiso & 2.40 \\
& houston & 3.80 & sydney & 2.90 & chicago & 2.00 & istanbul & 0.60 & dubai & 2.20 & tokyo & 2.20 \\
\midrule
\multirow{5}{*}{color} & blue & 43.80 & blue & 77.50 & blue & 60.00 & blue & 81.50 & purple & 11.50 & teal & 26.50 \\
& red & 17.10 & cerulean & 7.40 & red & 29.00 & red & 17.60 & orange & 8.30 & chartreuse & 13.70 \\
& green & 11.40 & red & 4.10 & green & 3.00 & green & 0.60 & magenta & 7.30 & purple & 8.50 \\
& purple & 7.60 & teal & 2.20 & purple & 3.00 & magenta & 0.10 & pink & 7.30 & turquoise & 8.20 \\
& black & 5.70 & sky blue & 1.60 & black & 2.00 & purple & 0.10 & red & 7.30 & cerulean & 8.00 \\
\midrule
\multirow{5}{*}{disease} & cancer & 32.70 & diabetes & 29.20 & cancer & 51.00 & influenza & 32.90 & cancer & 9.50 & whipple's & 7.80 \\
& aids & 7.70 & influenza & 21.50 & covid & 15.00 & diabetes & 18.30 & lupus & 7.40 & tuberculosis & 7.20 \\
& diabetes & 5.80 & alzheimer's & 20.30 & aids & 5.00 & cancer & 15.70 & crohn's disease & 6.30 & kawasaki & 6.60 \\
& influenza & 5.80 & malaria & 10.00 & influenza & 5.00 & covid-19 & 14.70 & diabetes & 5.30 & creutzfeldt-jakob & 6.00 \\
& measles & 4.80 & tuberculosis & 7.30 & heart disease & 4.00 & alzheimer's & 11.40 & lyme disease & 4.20 & kuru & 5.00 \\
\midrule
\multirow{5}{*}{flower} & rose & 48.60 & rose & 70.90 & rose & 84.00 & rose & 93.50 & rose & 9.50 & sunflower & 10.40 \\
& daisy & 20.00 & sunflower & 12.60 & daisy & 8.00 & sunflower & 3.80 & daisy & 7.40 & orchid & 10.20 \\
& lily & 6.70 & daisy & 5.20 & tulip & 3.00 & daisy & 1.10 & tulip & 7.40 & jasmine & 6.70 \\
& iris & 4.80 & tulip & 4.40 & lily & 2.00 & magnolia & 0.50 & daffodil & 6.30 & bleeding heart & 3.90 \\
& tulip & 3.80 & lily & 2.60 & sunflower & 2.00 & hydrangea & 0.40 & sunflower & 6.30 & dahlia & 3.90 \\
\midrule
\multirow{5}{*}{food} & pizza & 23.80 & pizza & 52.90 & pizza & 43.00 & pizza & 74.50 & pizza & 7.30 & sushi & 13.50 \\
& burger & 8.60 & sushi & 14.40 & burger & 9.00 & apple & 17.10 & potato & 4.20 & quinoa & 6.60 \\
& bread & 7.60 & apple & 12.90 & apple & 8.00 & bread & 5.70 & egg & 3.10 & pizza & 5.00 \\
& apple & 5.70 & spaghetti & 2.80 & bread & 8.00 & rice & 0.90 & apple & 2.10 & pineapple & 3.60 \\
& banana & 3.80 & avocado & 2.70 & chicken & 3.00 & avocado toast & 0.30 & asparagus & 2.10 & kimchi & 3.00 \\
\midrule
\multirow{5}{*}{geometric} & triangle & 32.70 & triangle & 38.50 & square & 49.00 & circle & 62.70 & triangle & 15.80 & pentagon & 19.90 \\
& square & 26.90 & circle & 32.40 & circle & 23.00 & square & 20.60 & rhombus & 12.60 & hexagon & 11.10 \\
& circle & 17.30 & square & 11.20 & triangle & 22.00 & triangle & 12.90 & octagon & 10.50 & heptagon & 9.50 \\
& hexagon & 5.80 & hexagon & 5.40 & hexagon & 2.00 & cube & 2.60 & hexagon & 9.50 & dodecahedron & 7.60 \\
& octagon & 5.80 & cube & 4.00 & cube & 1.00 & hexagon & 1.00 & square & 5.30 & trapezoid & 6.50 \\
\midrule
\multirow{5}{*}{letter} & a & 41.90 & a & 27.30 & a & 73.00 & e & 50.10 & a & 7.30 & q & 32.70 \\
& e & 9.50 & m & 16.30 & e & 6.00 & a & 46.30 & j & 6.30 & z & 12.70 \\
& b & 8.60 & q & 9.70 & b & 3.00 & m & 2.60 & w & 6.30 & k & 11.10 \\
& z & 4.80 & g & 9.20 & c & 3.00 & b & 0.40 & z & 6.30 & a & 9.90 \\
& d & 3.80 & e & 7.80 & s & 3.00 & c & 0.30 & l & 5.20 & x & 7.60 \\
\midrule
\multirow{5}{*}{month} & april & 12.40 & january & 22.80 & november & 36.00 & january & 51.00 & february & 16.70 & september & 29.80 \\
& may & 11.40 & october & 17.00 & january & 29.00 & july & 15.00 & march & 13.50 & february & 18.70 \\
& november & 11.40 & march & 16.90 & december & 6.00 & december & 8.50 & august & 11.50 & april & 9.80 \\
& january & 10.50 & july & 14.20 & june & 6.00 & october & 8.50 & june & 10.40 & august & 7.40 \\
& march & 10.50 & april & 13.10 & july & 5.00 & june & 7.10 & april & 9.40 & november & 6.90 \\
\midrule
\multirow{5}{*}{positive number} & 2 & 20.00 & 7 & 37.20 & 2 & 33.00 & 1 & 78.80 & 8 & 12.50 & 7 & 14.80 \\
& 4 & 11.40 & 42 & 32.80 & 1 & 22.00 & 7 & 13.70 & 12 & 5.20 & 1 & 10.70 \\
& 5 & 10.50 & 5 & 11.80 & 7 & 11.00 & 42 & 5.30 & 2 & 4.20 & 2 & 9.00 \\
& 1 & 8.60 & 1 & 5.80 & 4 & 7.00 & 5 & 1.10 & 7 & 4.20 & 17 & 8.10 \\
& 8 & 7.60 & 17 & 2.60 & 10 & 5.00 & 17 & 0.50 & 22 & 3.10 & 42 & 7.20 \\
\midrule
\multirow{5}{*}{trait} & honesty & 18.10 & empathy & 28.40 & honesty & 31.60 & honesty & 56.70 & honesty & 20.20 & empathy & 19.90 \\
& kindness & 10.50 & honesty & 15.60 & kindness & 11.20 & kindness & 20.20 & humility & 7.40 & resilience & 19.80 \\
& empathy & 5.70 & compassion & 13.60 & funny & 8.20 & empathy & 12.50 & kindness & 4.30 & perseverance & 6.40 \\
& funny & 5.70 & kindness & 10.20 & friendly & 4.10 & loyalty & 2.30 & loyal & 4.30 & meticulous & 4.70 \\
& friendly & 4.80 & resilience & 9.00 & smart & 4.10 & resilience & 2.10 & shy & 4.30 & curiosity & 4.30 \\
\midrule
\bottomrule
\end{tabular}}
\label{tab:distr_final}
\end{table}

\begin{table}[h!]
\centering
\caption{Number of reasoning sentences by model and treatment}
\label{tab:stats_sentences_cot_v2}
\renewcommand{\arraystretch}{1.5}
\begin{threeparttable}
\setlength{\tabcolsep}{8pt}
\resizebox{.9\textwidth}{!}{
\begin{tabular}{lccccccccc}
\toprule
\toprule
Model & Picking & Coordination & Divergence & Coordination (Copy) & Divergence (Copy) & Coordination (Person) & Divergence (Person) & Total \\
\midrule
Claude-3.5H   & 0       & 3,080   & 2,199   & 2,562   & 1,949   & 1,948   & 1,534   & 13,272 \\
Claude-4S     & 0       & 4,017   & 4,140   & 3,922   & 5,685   & 3,752   & 3,938   & 25,454 \\
DeepSeek-14B  & 12,206  & 26,689  & 51,691  & 34,635  & 67,941  & 28,964  & 52,977  & 275,103 \\
DeepSeek-70B  & 9,945   & 19,311  & 48,469  & 26,449  & 58,785  & 22,795  & 44,406  & 230,160 \\
DeepSeek-8B   & 11,840  & 22,776  & 31,985  & 31,288  & 53,601  & 24,044  & 35,806  & 211,340 \\
DeepSeek-R1   & 6,884   & 40,752  & 107,375 & 56,113  & 176,669 & 36,530  & 107,968 & 532,291 \\
Gemini-2.0F   & 0       & 247     & 265     & 1,539   & 1,601   & 368     & 257     & 4,277 \\
Gemini-2.5P   & 12,918  & 20,798  & 19,466  & 19,368  & 19,616  & 19,853  & 19,723  & 131,742 \\
Gemma3-27B    & 304     & 129     & 119     & 105     & 117     & 149     & 274     & 1,197 \\
GPT-4o        & 1       & 0       & 1       & 0       & 3       & 2       & 0       & 7 \\
Llama-3.3     & 0       & 1,993   & 1,815   & 2,928   & 3,826   & 1,535   & 1,403   & 13,500 \\
o4-mini       & 10      & 2,110   & 2,476   & 2,187   & 2,956   & 2,285   & 2,646   & 14,670 \\
Phi-4         & 5       & 2,472   & 1,755   & 2,418   & 3,750   & 4,205   & 3,246   & 17,851 \\
Phi-4(R)      & 89,822  & 130,431 & 205,138 & 135,132 & 251,805 & 140,195 & 189,997 & 1,142,520 \\
Qwen3-14B     & 9,318   & 38,859  & 68,919  & 52,553  & 95,254  & 42,774  & 69,758  & 377,435 \\
Qwen3-32B     & 10,437  & 35,728  & 75,511  & 41,016  & 101,187 & 37,836  & 71,130  & 372,845 \\
\hline
Total         & 163,690 & 349,392 & 621,324 & 412,215 & 844,745 & 367,235 & 605,063 & 3,363,664 \\
\bottomrule
\end{tabular}
}
\end{threeparttable}
\end{table}

\vspace{1em} 
\begin{figure}[h!]
\caption{Agreement rate by treatment and model capabilities}
\label{fig:self_agreement_rate_reasoning}
\centering
\includegraphics[width=.8\textwidth]{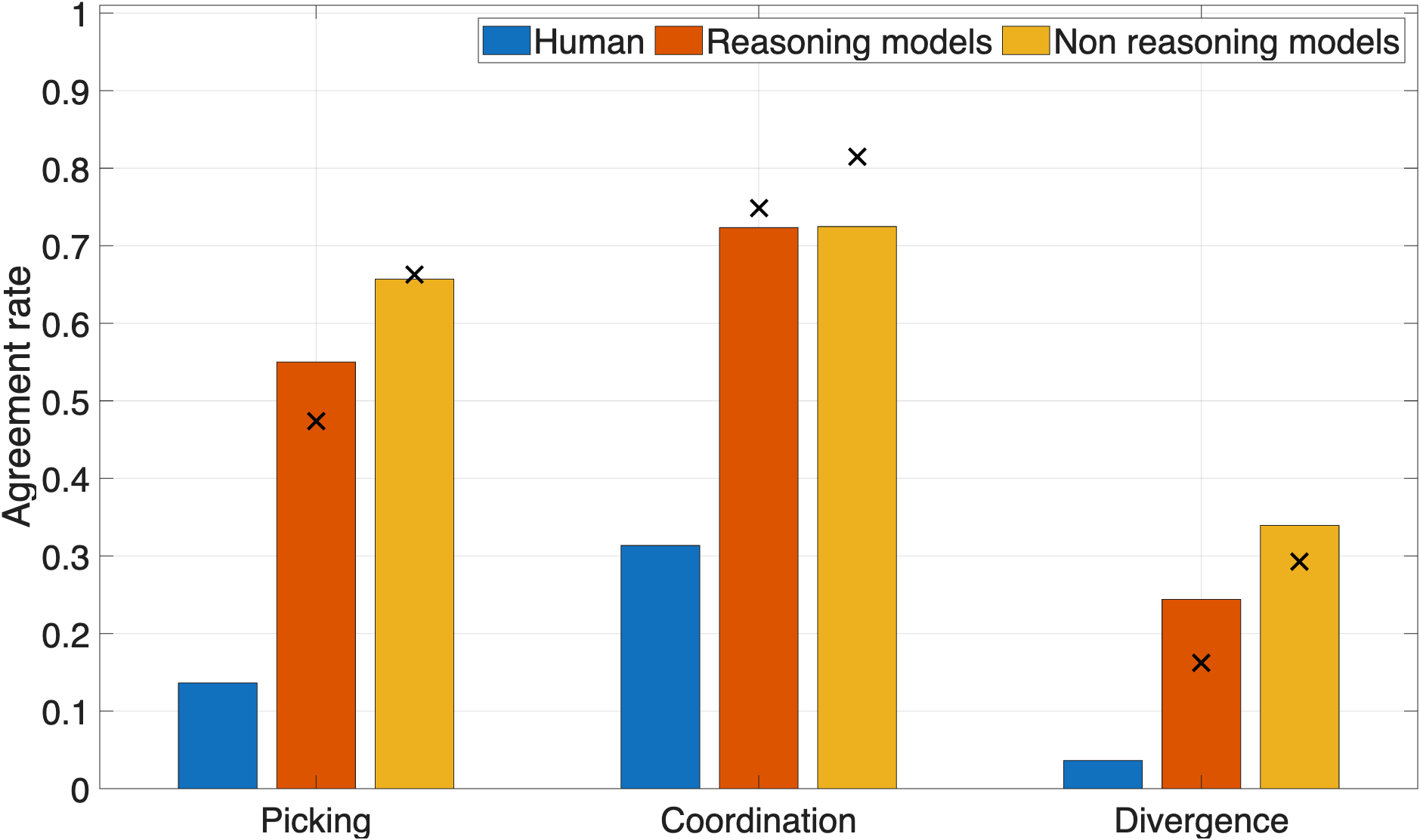}
\vspace{.5mm}
\caption*{{\footnotesize Notes: Bars report the average self-agreement rate across topics by treatment and reasoning abilities. LLM self-agreement rates are the average over all LLMs with the same reasoning capability. For comparability, we also report human agreement rates.  The black cross-shaped markers indicate the median.}}
\end{figure}

\begin{figure}[h!]
\caption{Effect of information on agreement rate by model capabilities and condition}
\label{fig:belief_effect_reasoning}
    \centering
    \includegraphics[width=.8\linewidth]{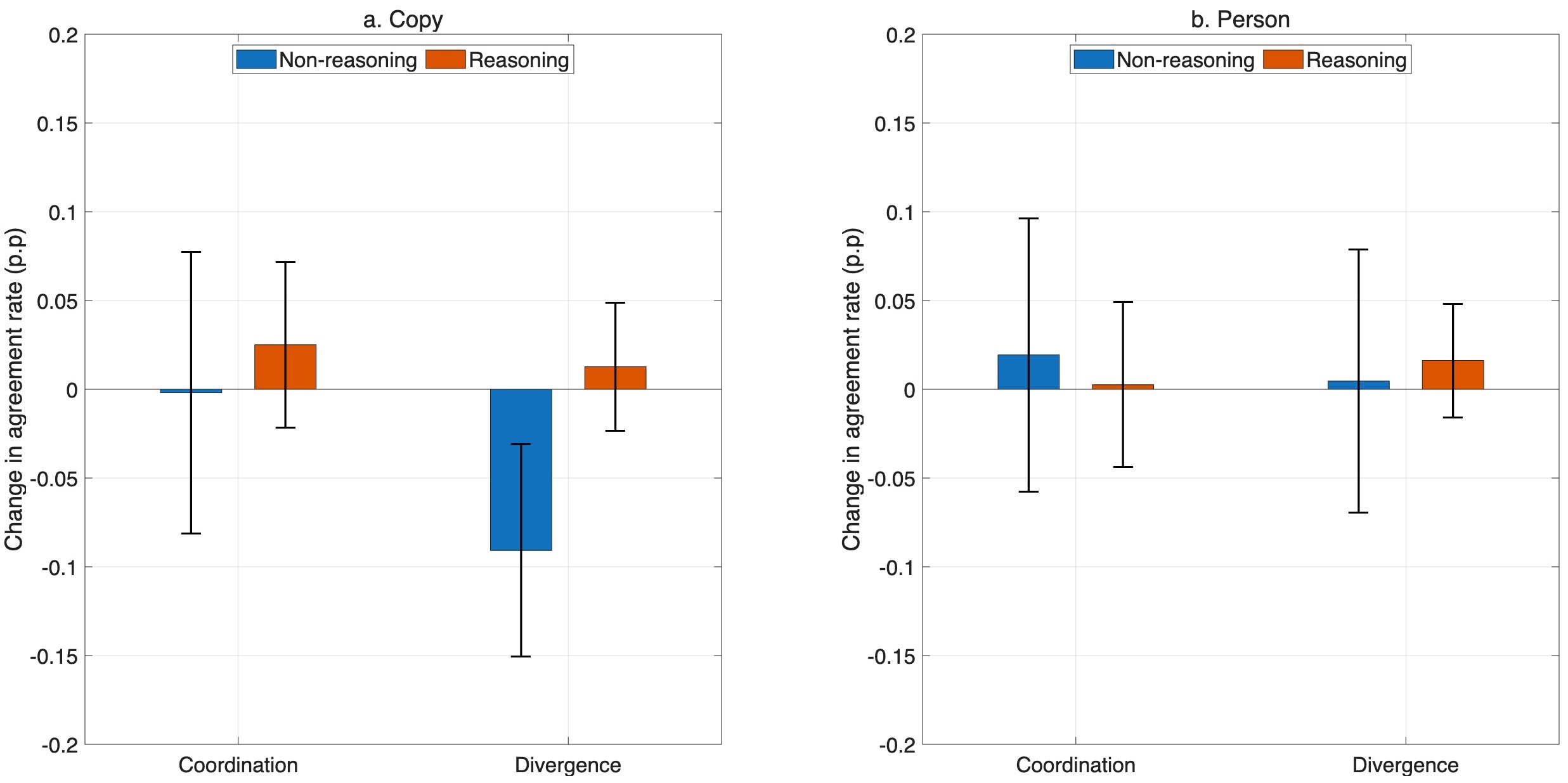}
\vspace{.01mm}
\caption*{\footnotesize Notes: Bars represent the average absolute change in self-agreement rates across topics for LLM subjects by treatment and reasoning capabilities. Panel (a) corresponds to the copy condition and Panel (b) to the person condition. Vertical lines indicate 95\% confidence intervals.}
\end{figure}

\begin{figure}[h!]
\centering
\caption{Textual analysis of strategic reasoning  in the copy and person condition}
\label{fig:textual_copy_person}
\begin{subfigure}[t]{\textwidth}
\centering
\caption{Distribution of screened sentences}
\includegraphics[width=.8\textwidth]{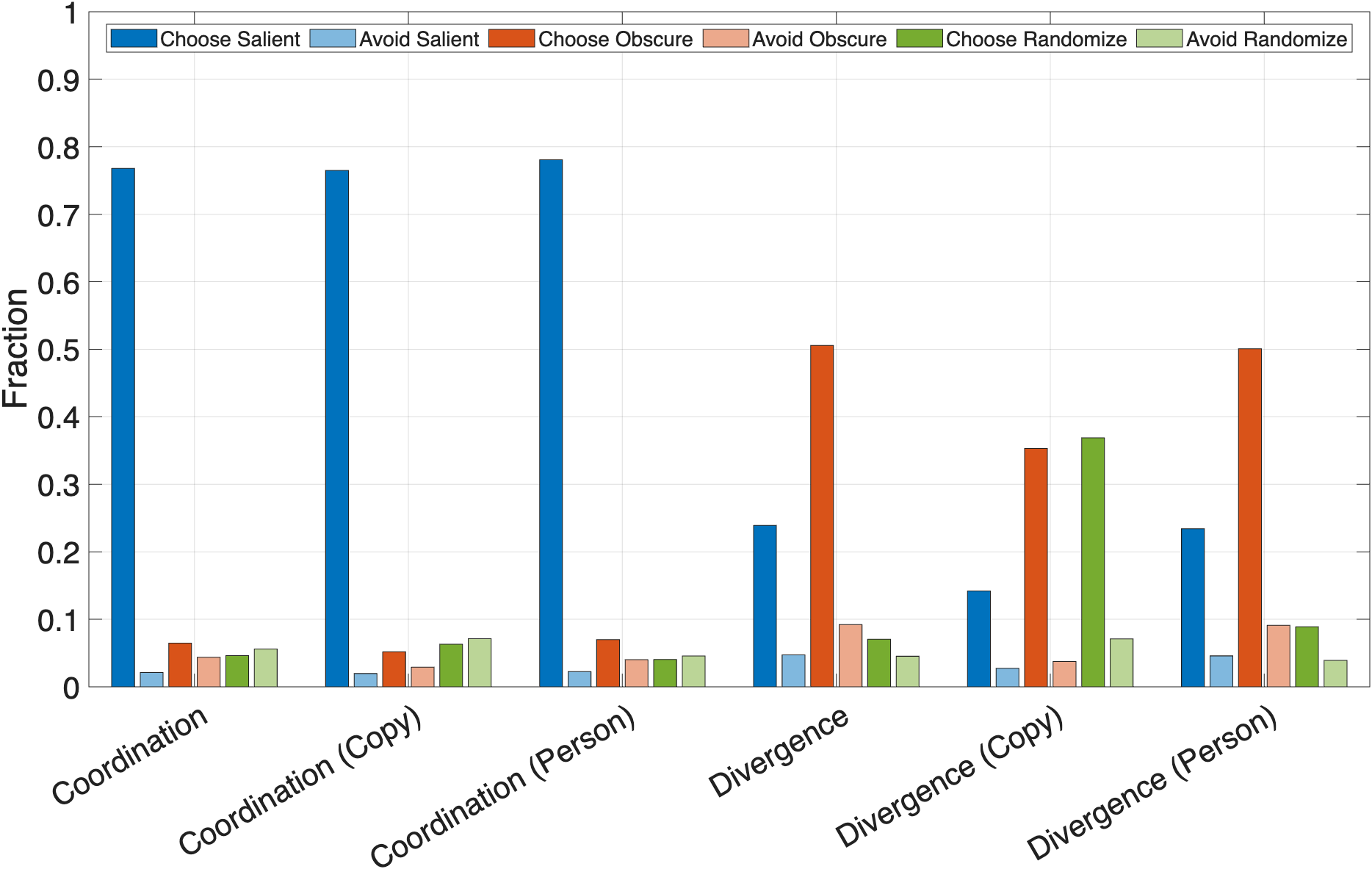}

\end{subfigure}

\vspace{0.8em}

\begin{subfigure}[t]{.8\textwidth}
\centering
\caption{LLM-judge classification of strategic reasoning}
\includegraphics[width=\textwidth]{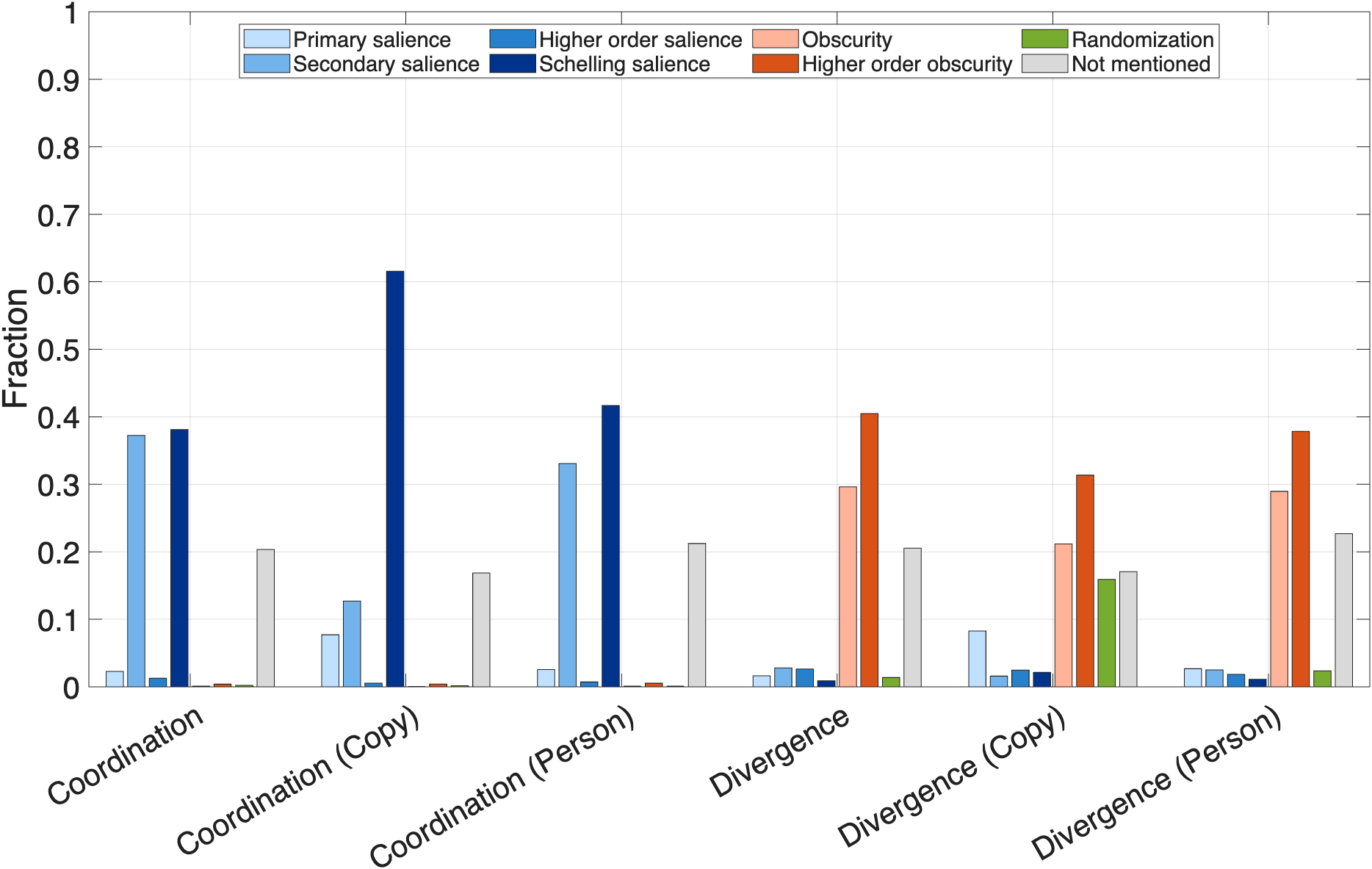}

\end{subfigure}

\vspace{0.5mm}

\caption*{\footnotesize
Notes: Panel (a) displays the distribution of screened sentences' closest reference vector by treatment. Panel (b) reports the share of LLM-reasoning texts that fall under each of the eight strategic reasoning categories according to the LLM-judge's determination. The judge is implemented using Gemini~2.5 Flash.  For details, see \cref{app:CoT_textual_analysis_judge_prompt}.
}

\end{figure}

\begin{figure}[h!]
\caption{LLM-judge classification of beliefs about co-player identity (by treatment and condition)}
\label{fig:beliefs judge}
\centering
\includegraphics[width=.8\textwidth]{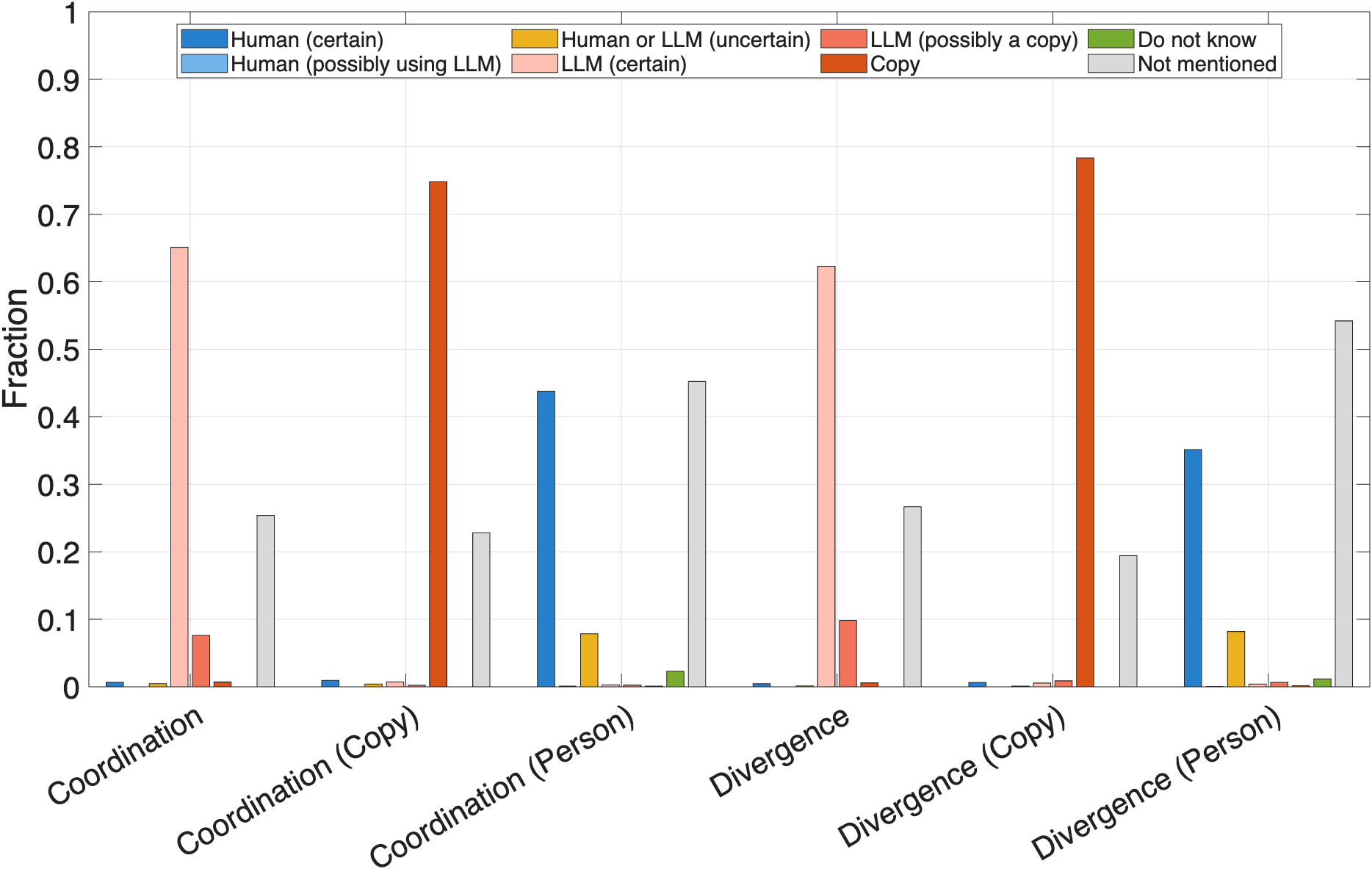}
\vspace{.5mm}
\caption*{{\footnotesize Notes: The figure reports the share of LLM-reasoning texts that fall under each of the eight co-player identity categories according to the LLM-judge's determinations. The judge is implemented using Gemini 2.5 Flash. For details, see \cref{app:CoT_textual_analysis_judge_prompt}.}}
\end{figure}

\begin{figure}[h!]
\caption{Agreement rate of all pairs of LLMs with personas by treatment}
\label{fig:persona_all_pair_models}
    \centering
    \includegraphics[width=.8\linewidth]{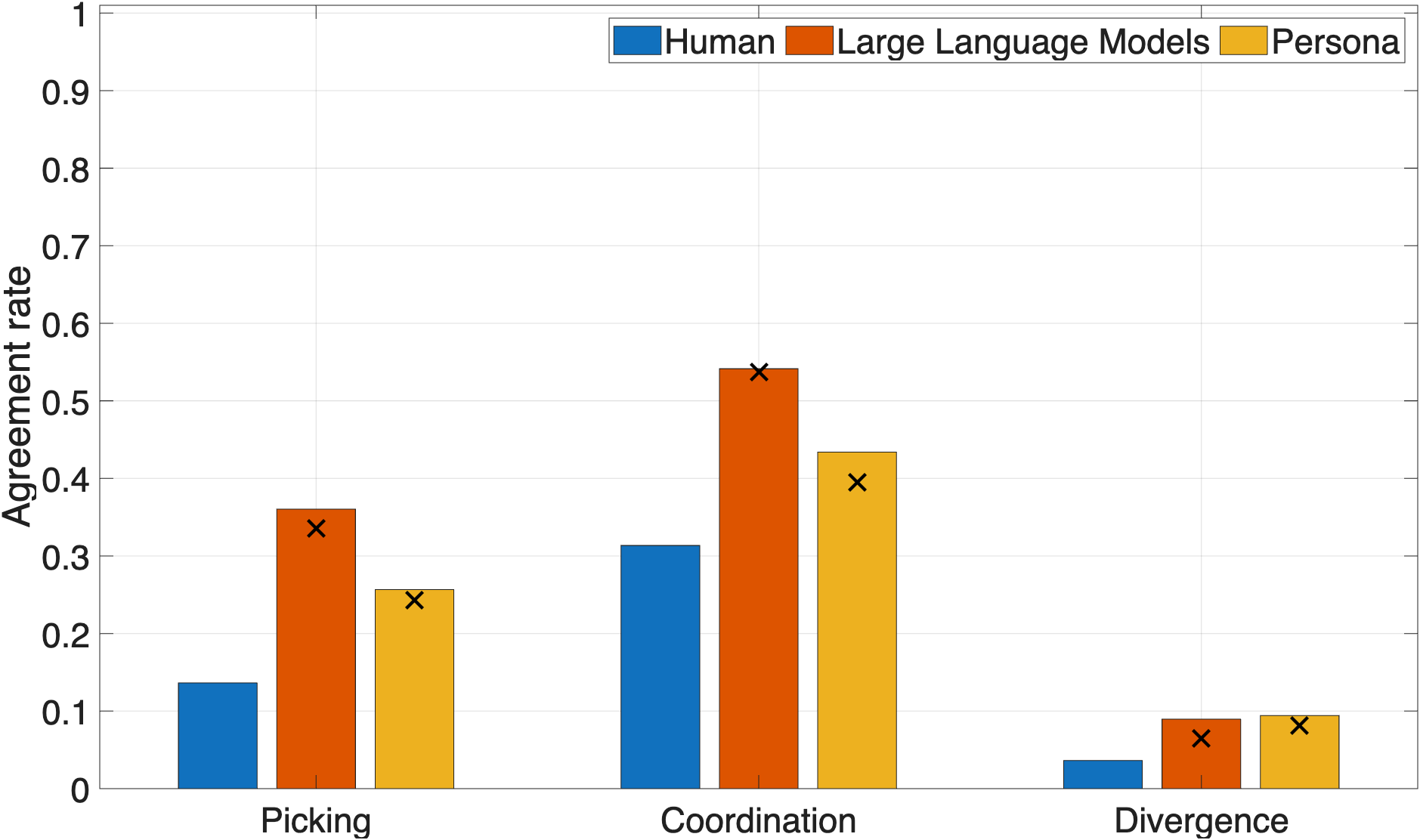}
\vspace{.01mm}
\caption*{\footnotesize Notes: Bars report average agreement rate across question topics for human and LLM subjects by treatment.
LLM agreement rates are the average over all LLMs. The black cross-shaped markers indicate the median.}
\end{figure}

\begin{figure}[h]
\caption{LLM-persona pairwise agreement rates by treatment}
\label{fig:pairwise_heatmap_persona}
\centering
\includegraphics[width=.8\textwidth]{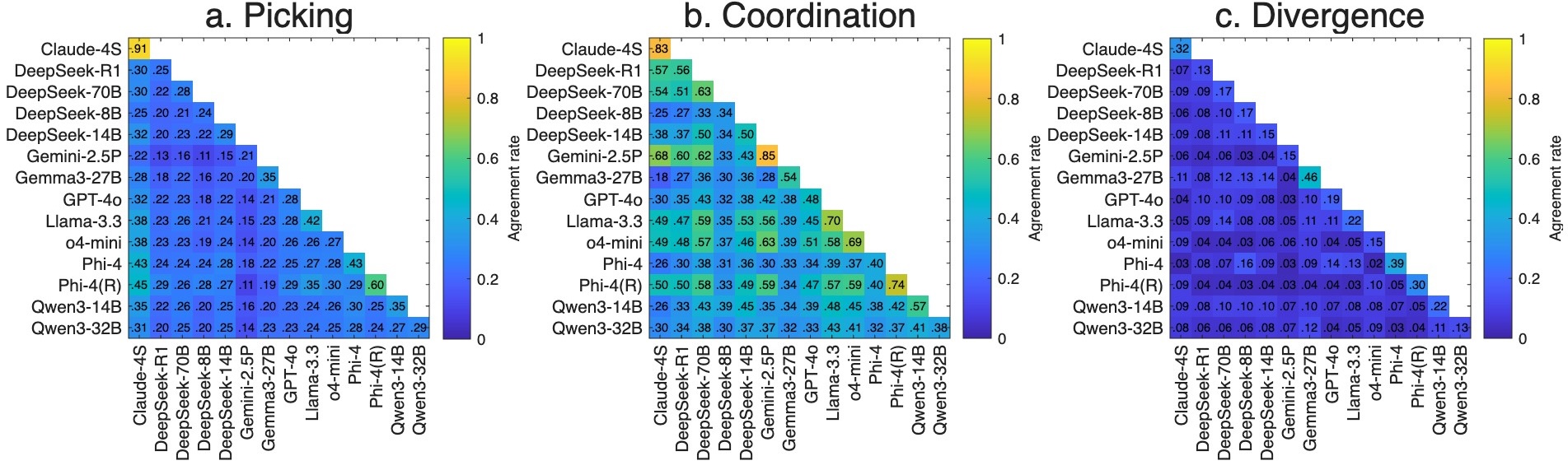}
\caption*{{\footnotesize Notes: The figure displays a heatmap of pairwise agreement rates for all pairs of LLM-persona subjects.  Diagonal cells correspond to self-agreement rates. Lighter (yellow) colors indicate higher agreement, and darker (blue) colors indicate lower agreement.}}
\end{figure}

\clearpage

\setcounter{figure}{0}
\renewcommand{\thefigure}{B.\arabic{figure}}  

\setcounter{table}{0} 
\renewcommand{\thetable}{B.\arabic{table}} 

\section{Further Analysis}\label{app:detailed_analysis}

\begin{table}[!ht]
\centering
\caption{Out-of-format and excluded responses by model}
\label{tab:invalid_responses}
\renewcommand{\arraystretch}{1.5}
\begin{threeparttable}
\resizebox{.85\textwidth}{!}{%
\setlength{\tabcolsep}{30pt}
\begin{tabular}{lcccc}
\toprule
\toprule
 & \multicolumn{2}{c}{Out of format}  & \multicolumn{2}{c}{Excluded}\\
\cmidrule(lr){2-3} \cmidrule(lr){4-5}
Model & Total & Percent & Total & Percent \\
\midrule
Claude-3.5H & 0 & 0.00 & 0 & 0.00 \\
Claude-4S & 0 & 0.00 & 0 & 0.00 \\
DeepSeek-R1 & 0 & 0.00 & 0 & 0.00 \\
DeepSeek-70B & 173 & 0.04 & 133 & 0.03 \\
DeepSeek-8B & 702 & 0.17 & 195 & 0.05 \\
DeepSeek-14B & 595 & 0.14 & 147 & 0.04 \\
Gemini-2.0F & 118 & 0.03 & 118 & 0.03 \\
Gemini-2.5P & 2 & 0.00 & 2 & 0.00 \\
Gemma3-27B & 30 & 0.01 & 0 & 0.00 \\
GPT-4o & 150 & 0.04 & 150 & 0.04 \\
Llama-3.3 & 6 & 0.00 & 6 & 0.00 \\
o4-mini & 0 & 0.00 & 0 & 0.00 \\
Phi-4 & 445 & 0.11 & 9 & 0.00 \\
Phi-4(R) & 455 & 0.11 & 383 & 0.09 \\
Qwen3-14B & 54 & 0.01 & 47 & 0.01 \\
Qwen3-32B & 107 & 0.03 & 91 & 0.02 \\
\hline
Total & 2,837 & 0.04 & 1,281 & 0.02\\
\bottomrule
\end{tabular}
}    
\begin{tablenotes}[para]
\end{tablenotes}
\end{threeparttable}
\end{table}

\begin{table}[ht]
\centering
\caption{Average number of valid answers generated by model and topic}
\label{tab:list_generation}
\renewcommand{\arraystretch}{1.5}
\setlength{\tabcolsep}{6pt}
\resizebox{.9\textwidth}{!}{%
\begin{threeparttable}
\begin{tabular}{lcccccccccccc}
\toprule \toprule
Model & \makecell{Professional\\Athlete} & \makecell{Car\\Manufacturer} & City & Color & Disease & Flower & \makecell{Food\\Item} & \makecell{Geometric\\Shape} & \makecell{Positive\\Number} & \makecell{Character\\Trait} & Letter & Month \\
\midrule
Claude-4S & 99.82 & 98.84 & 99.73 & 98.54 & 97.90 & 98.14 & 99.48 & 96.86 & 100.00 & 99.48 & 26 & 12 \\
DeepSeek-14B & 57.47 & 87.24 & 98.22 & 84.21 & 94.04 & 82.18 & 98.44 & 83.70 & 100.00 & 92.45 & 26 & 12 \\
DeepSeek-70B & 96.78 & 91.74 & 99.02 & 90.84 & 96.65 & 84.47 & 97.66 & 76.60 & 99.98 & 97.00 & 26 & 12 \\
DeepSeek-8B & 86.24 & 96.19 & 99.49 & 82.42 & 93.79 & 77.39 & 97.51 & 70.19 & 100.00 & 75.88 & 25.12 & 12 \\
DeepSeek-R1 & 98.90 & 97.54 & 99.98 & 99.46 & 99.40 & 99.62 & 99.68 & 98.02 & 100.00 & 99.54 & 26 & 12 \\
Gemini-2.5P & 99.96 & 99.10 & 99.84 & 99.14 & 99.68 & 99.94 & 99.72 & 98.28 & 100.00 & 99.98 & 26 & 12 \\
Gemma3-27B & 98.52 & 98.06 & 99.94 & 98.54 & 96.56 & 99.60 & 99.72 & 91.20 & 100.00 & 99.98 & 26 & 12 \\
GPT-4o & 98.42 & 97.96 & 99.98 & 97.50 & 98.74 & 98.32 & 99.28 & 93.38 & 100.00 & 98.20 & 26 & 12 \\
Llama-3.3 & 99.66 & 95.50 & 99.88 & 95.90 & 95.02 & 94.38 & 99.20 & 90.56 & 100.00 & 96.82 & 26 & 12 \\
o4-mini & 100.00 & 99.24 & 99.94 & 100.00 & 99.76 & 99.62 & 99.96 & 99.78 & 100.00 & 97.50 & 26 & 12 \\
Phi-4 & 94.58 & 90.50 & 99.76 & 99.36 & 96.78 & 96.60 & 98.84 & 81.76 & 100.00 & 99.08 & 26 & 12 \\
Phi-4(R) & 72.19 & 71.14 & 84.86 & 86.76 & 85.78 & 67.86 & 94.67 & 81.84 & 73.21 & 88.19 & 26 & 12 \\
Qwen3-14B & 93.53 & 94.38 & 99.17 & 98.64 & 97.41 & 98.33 & 99.40 & 96.25 & 100.00 & 98.76 & 26 & 12 \\
Qwen3-32B & 95.55 & 95.80 & 98.98 & 98.18 & 96.00 & 95.41 & 98.32 & 94.46 & 99.56 & 97.96 & 26 & 12 \\
\bottomrule
\end{tabular}%

\begin{tablenotes}[para]
Notes: The table reports the average number of valid answers produced by each model across topics, with standard deviations in parentheses. For each topic, we prompted each model 50 times to generate lists of 100 answers.
For topics with fewer valid answers (e.g., months and letters of the English alphabet), models were instead asked to generate all answers. Unparsable answers were excluded from the analysis: Phi-4(R) (77/600 answers, 13\%); Qwen3-32B (40/600  answers, 7\%); Qwen3-14B (39/600  answers, 6\%); DeepSeek-8B (27/600  answers, 4\%); DeepSeek-70B (7/600  answers, 1\%); DeepSeek-14B (5/600  answers, 1\%); Claude-4S (1/600  answers, 0\%).

\end{tablenotes}
\end{threeparttable}
}
\end{table}

\begin{figure}[h!]
\caption{Tab switching behavior (human subjects)}
\label{fig:tab_switching_histogram}
\centering
\includegraphics[width=.75\textwidth]{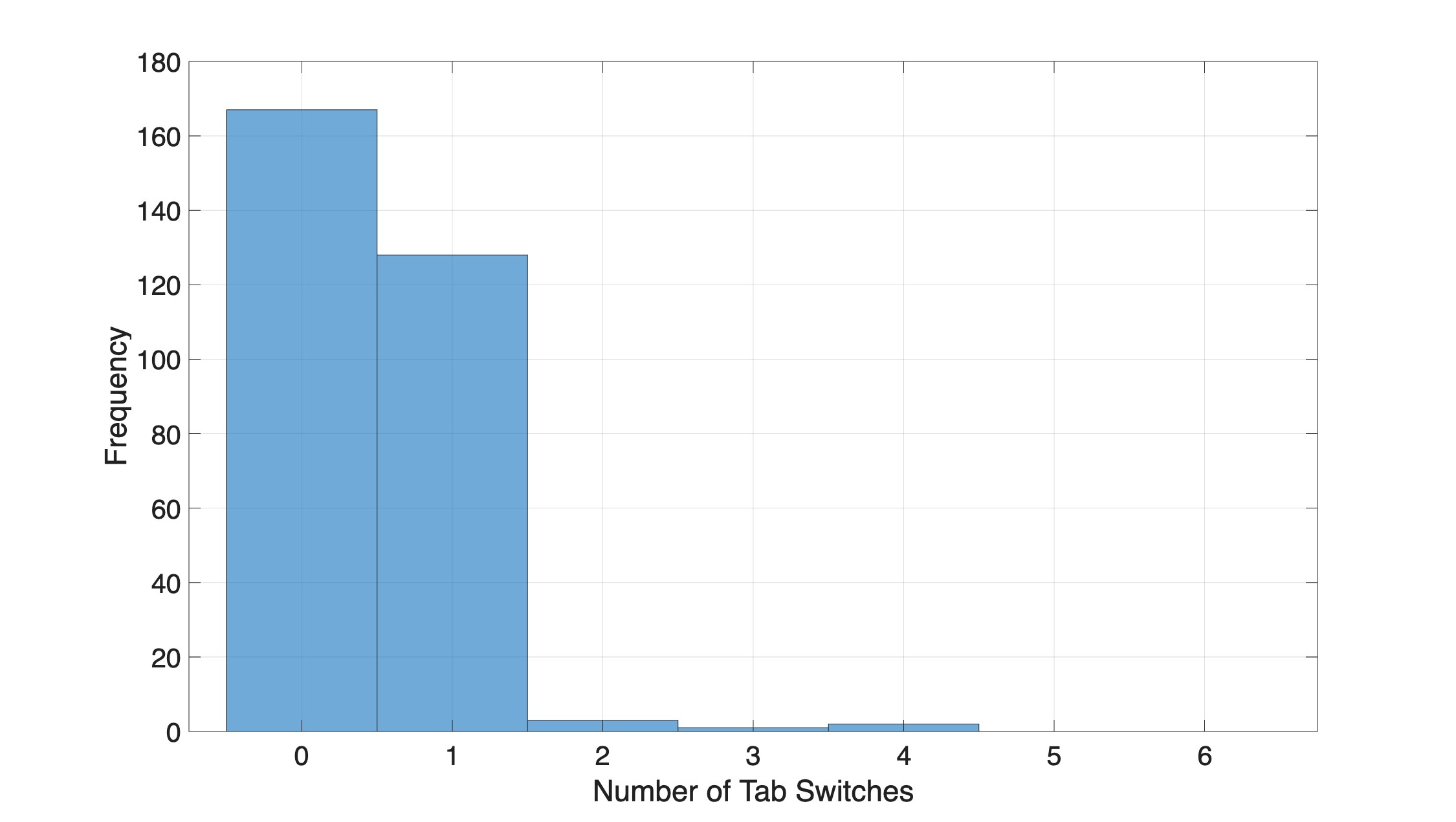}

\caption*{{\footnotesize Notes: The figure displays the distribution of tab switching counts, measured as the number of times a human subject switched browser tabs during the online survey.}}
\end{figure}

\begin{figure}[h!]
\caption{Fraction of topics with different modal response relative to the picking arm, by treatment and subject type}
\label{fig:share_response_change_relative_baseline}
\centering
\includegraphics[width=\textwidth]{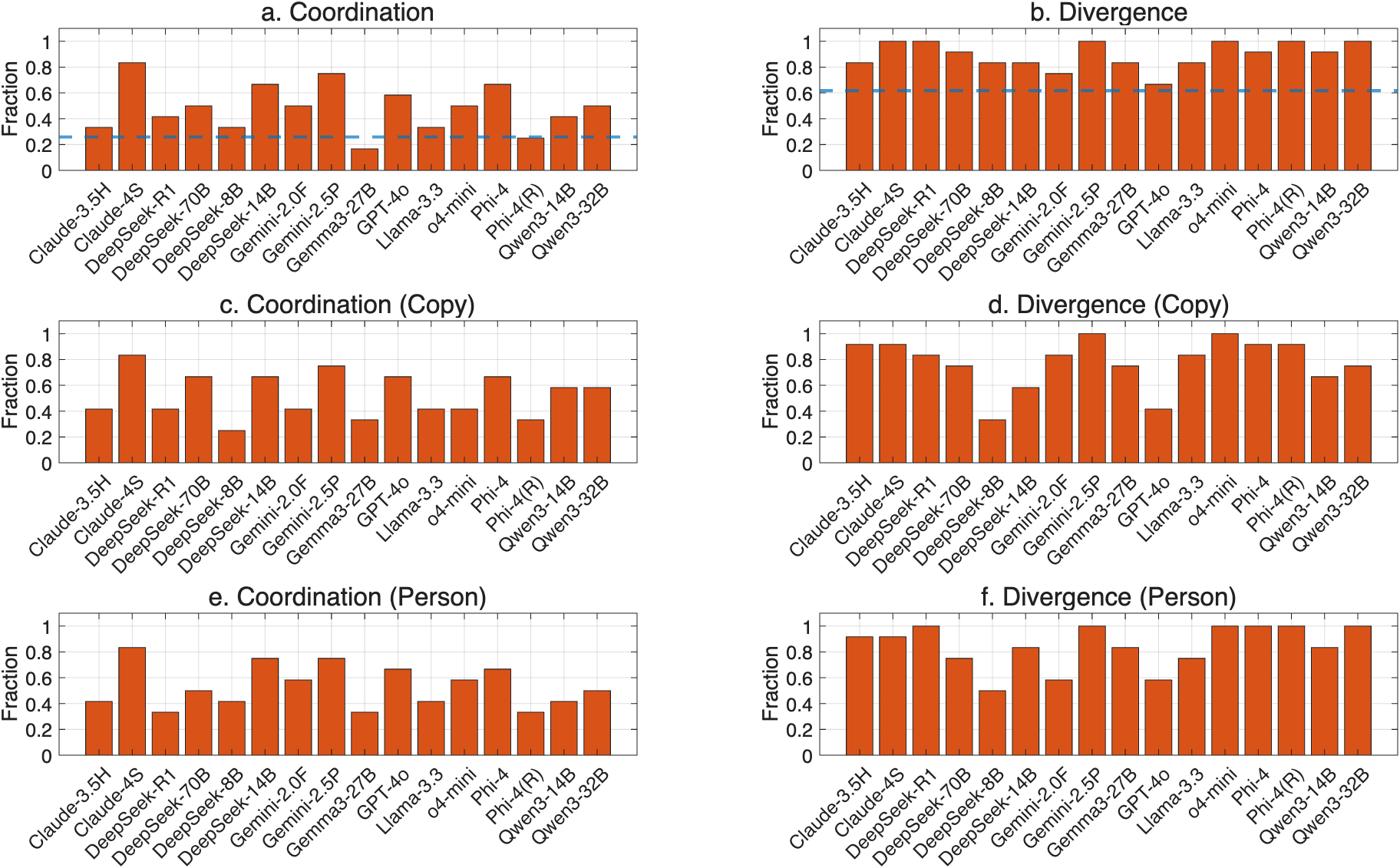}
\vspace{0.5mm}
\caption*{{\footnotesize Notes: This figure reports the fraction of topics in which the modal (most common) response changes relative to the picking treatment for human and LLM subjects.
Since LLM subjects were queried 50 times, to ensure comparability, we repeatedly sample 50 human responses per treatment arm. Specifically, we draw 1,000 such samples, compute the number of different responses for each sample, and report the average across these draws to reduce sampling noise. The horizontal dashed blue line reports this average for human subjects.}}
\end{figure}

\begin{figure}[h!]
\caption{Agreement rate by treatment, topic, and subject type}
\label{fig:agreement_rate_topics}
\centering
\includegraphics[width=.8\textwidth]{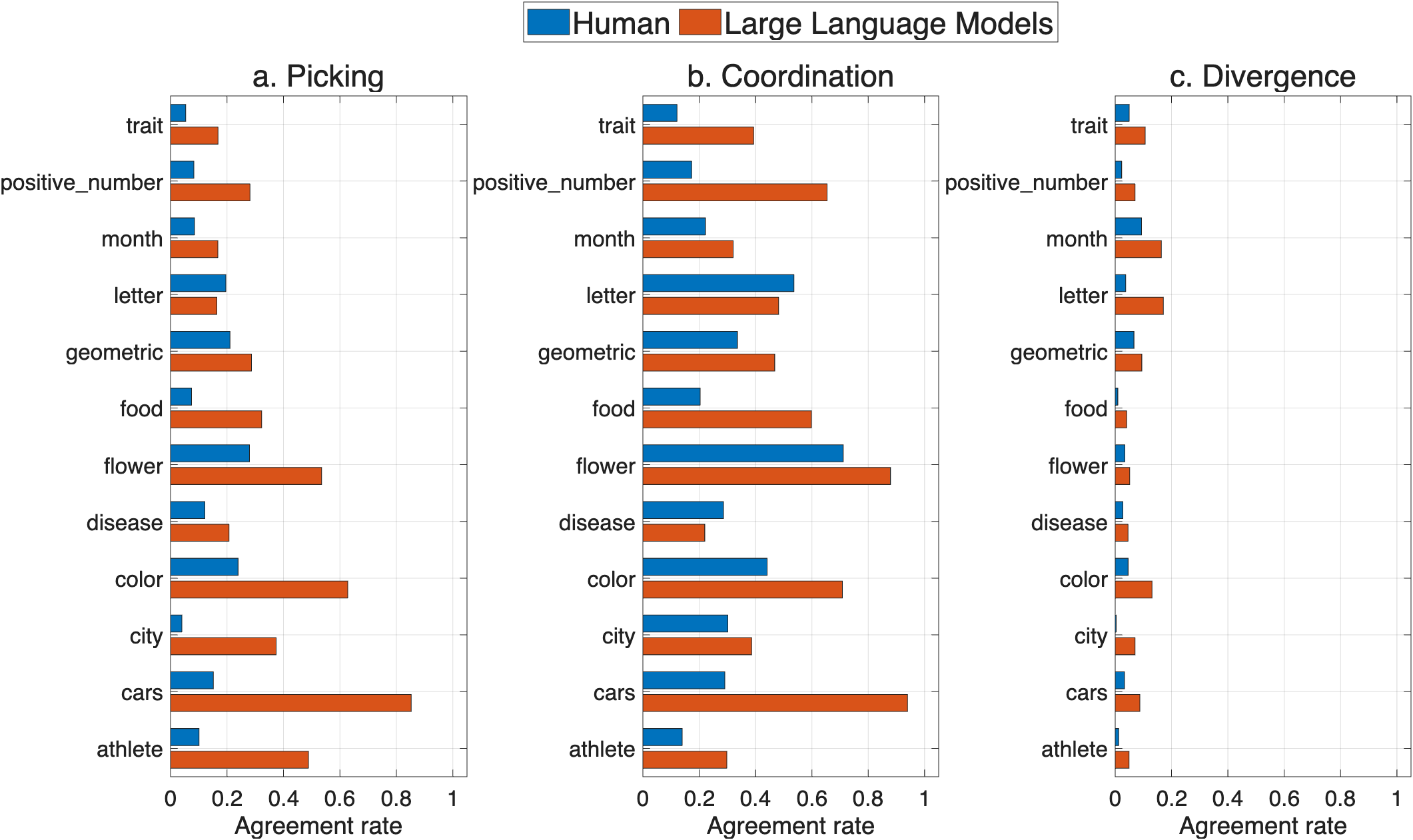}
\vspace{.3cm}
\caption*{{\footnotesize Notes: The figure reports the average agreement rate for each topic, by treatment and subject type. Red bars correspond to LLM subjects and blue bars to human subjects. For LLM subjects, we report the average self-agreement rate over all models.}}
\end{figure}

\begin{figure}[h!]
\centering
\caption{Textual analysis of strategic reasoning by treatment and model}
\label{fig:textual_reasoning_model}
\begin{subfigure}[t]{\textwidth}
\centering
\caption{Distribution of screened sentences}
\label{fig:histogram_cot_sentences_disaggregated_6cat}
\includegraphics[width=.8\textwidth]{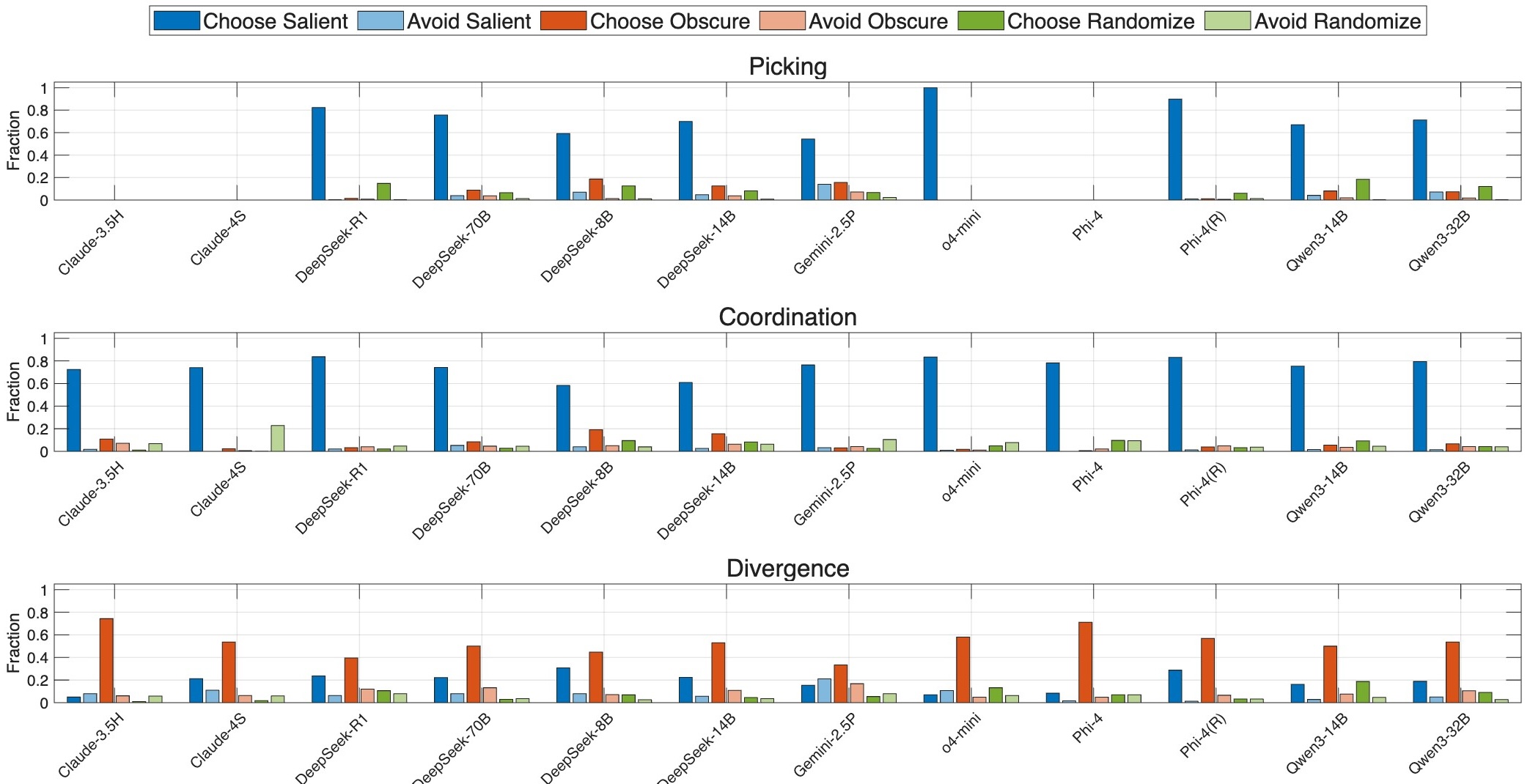}

\end{subfigure}

\vspace{0.8em}

\begin{subfigure}[t]{\textwidth}
\centering
\caption{LLM-judge classification of strategic reasoning}
\label{fig:llm_judge_by_model}
\includegraphics[width=.8\textwidth]{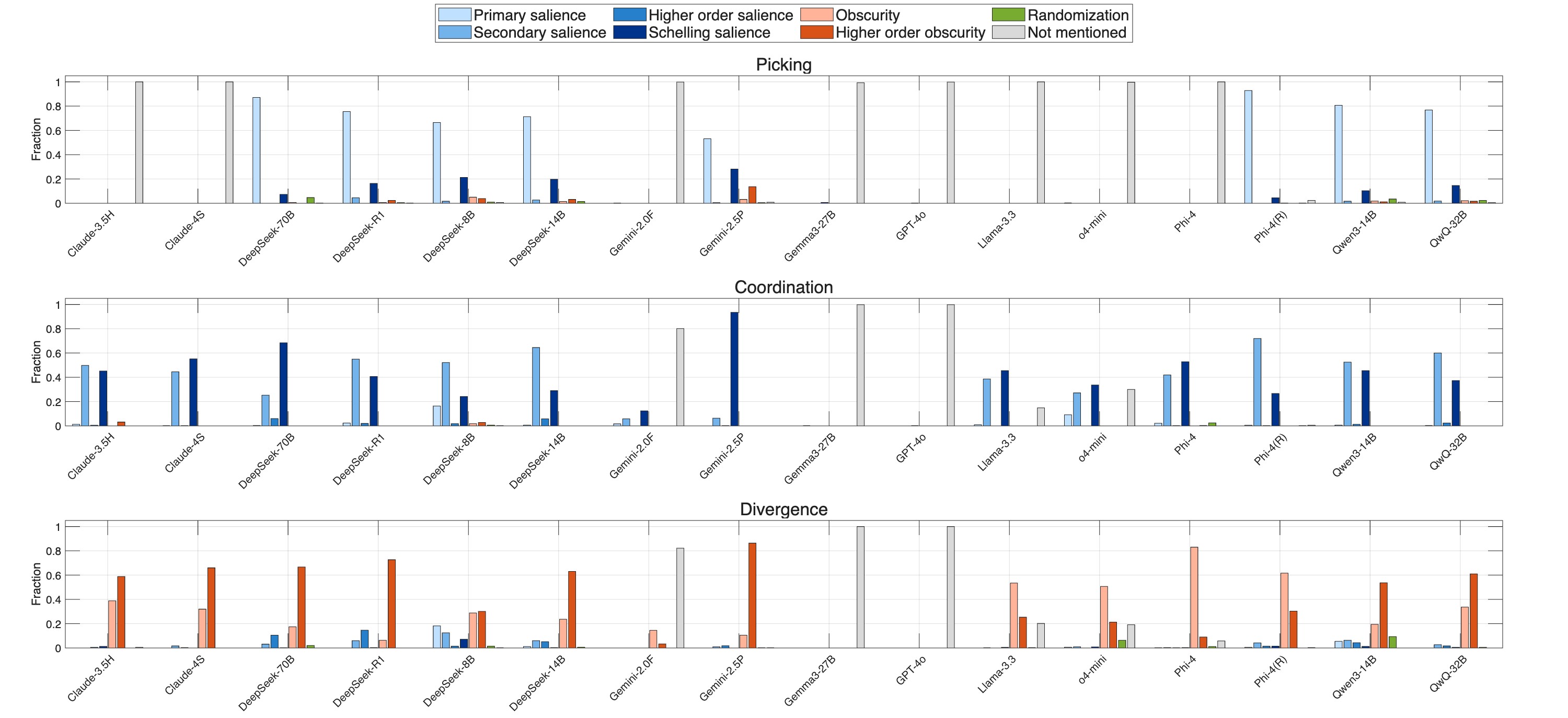}

\end{subfigure}

\vspace{0.5mm}

\caption*{\footnotesize
Notes: Panel (a) displays the distribution of screened sentences closest to each reference vector by treatment. Panel (b) reports the share of LLM-reasoning texts that fall under each of the eight strategic reasoning categories according to the LLM-judge's determination. The judge is implemented using Gemini 2.5 Flash.  For details, see \cref{app:CoT_textual_analysis_judge_prompt}.
}

\end{figure}

 \begin{figure}[h!]
\caption{Pairwise agreement rates for \cref{sec:capabilities}}
\label{fig:agreement_divergence_vs_random}
\centering
\includegraphics[width=.8\textwidth]{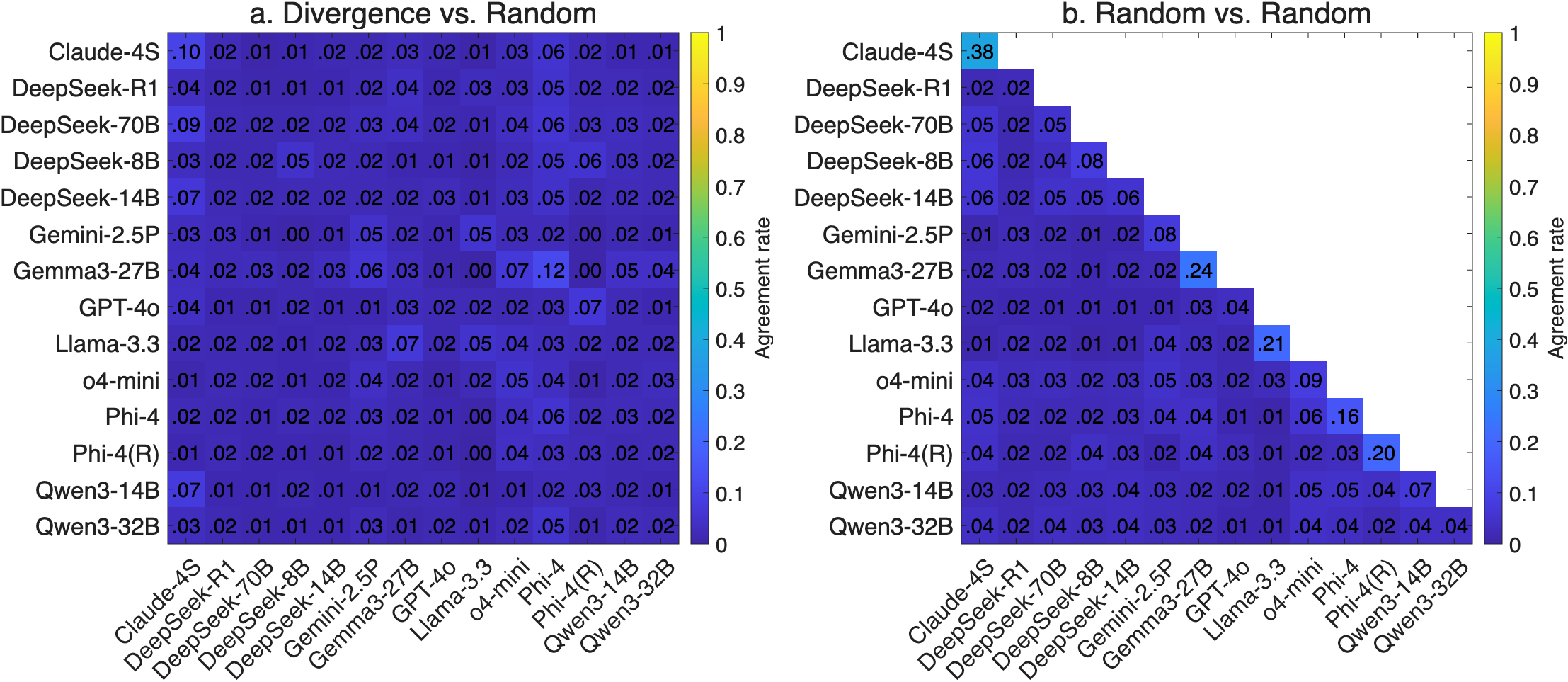}
\vspace{0.2cm}
\caption*{{\footnotesize Notes: The figure displays heatmaps of pairwise agreement rates for all pairs of LLM subjects as described in \cref{sec:capabilities}. Panel~(a) reports agreement rates between models from the divergence arm (rows) and models from the random arm (columns). Panel~(b) reports agreement rates between pairs of models from the random arm. Diagonal cells correspond to self-agreement rates. Lighter (yellow) colors indicate higher agreement
and darker (blue) colors indicate lower agreement.}}
\end{figure}

\begin{figure}[h]
\caption{Agreement rate by treatment, model, and  temperature}
\label{fig:temperature_analysis_heterogeneity}
\centering
\includegraphics[width=.8\textwidth]{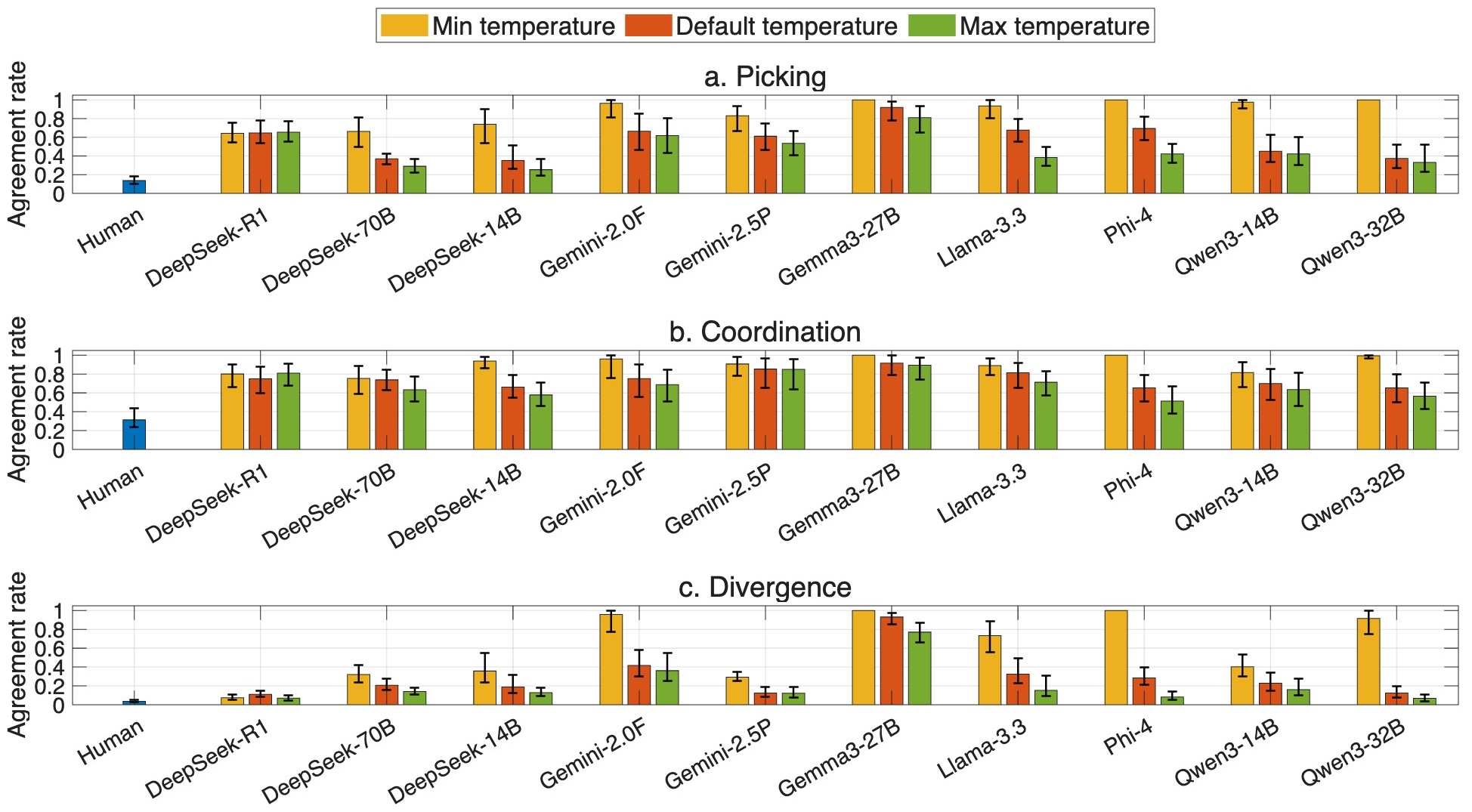}
\vspace{0.2cm}
\caption*{{\footnotesize Notes: Bars represent the average self-agreement rates across topics for human and LLM subjects, by treatment and temperature level. Vertical lines indicate 95\% confidence intervals.}}
\end{figure}

\begin{figure}[h!]
\caption{Effect of information on agreement rate by model, treatment, and condition}
\label{fig:beliefs_effect_by_model}
\centering
\includegraphics[width=.8\textwidth]{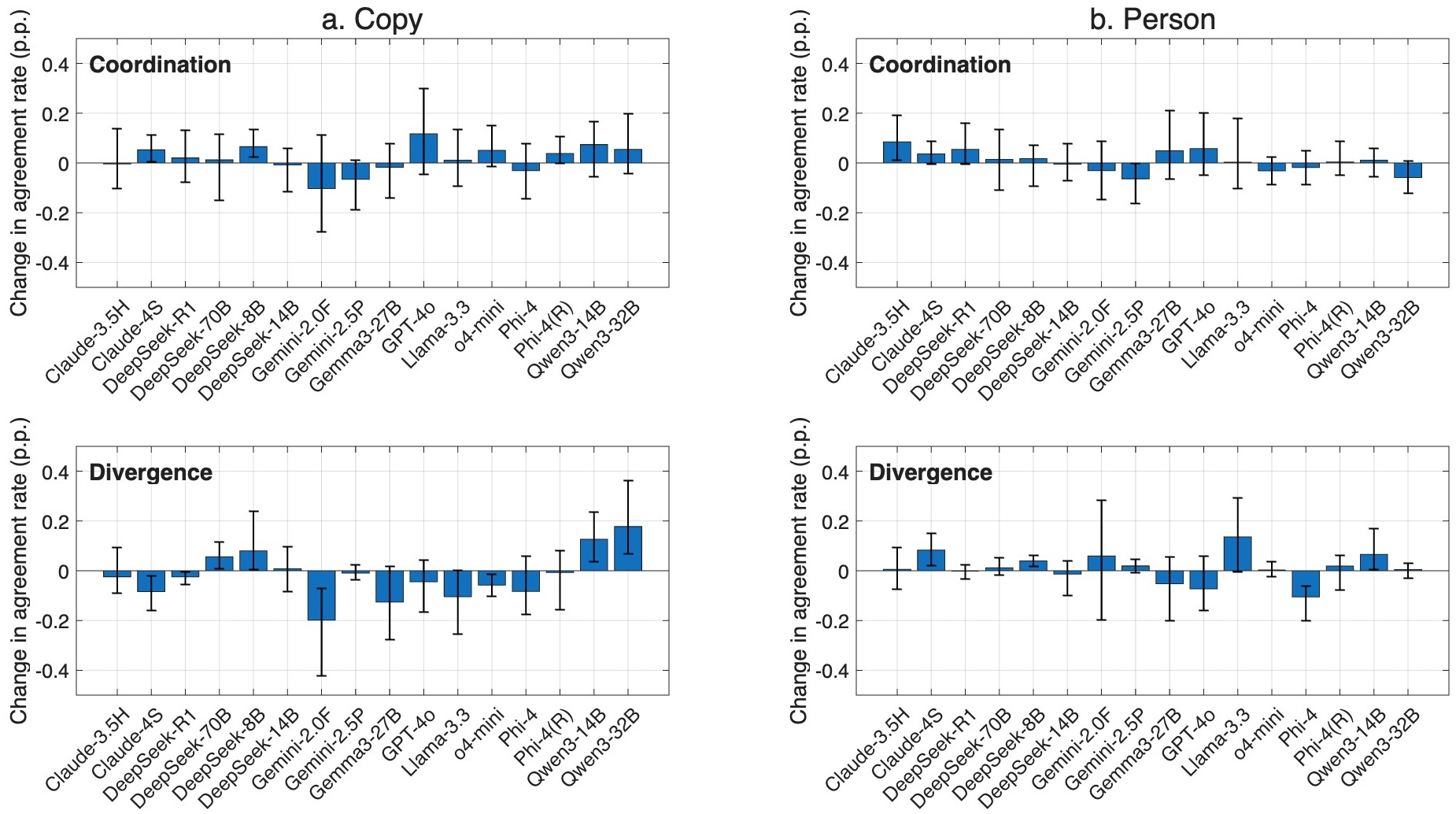}
\vspace{.5mm}
\caption*{{\footnotesize Notes: The figure plots the effect of information on the identity of the other player on average agreement rates. The top panels correspond to the coordination treatment arm and the bottom panels to the divergence treatment arm. The left-hand side panels correspond to the copy condition, and the right-hand side panels to the person condition. Vertical lines indicate 95\% confidence intervals (bootstrapped). }}
\end{figure}

\begin{figure}[h!]
\centering
\caption{Textual analysis of strategic reasoning by treatment, condition, and model}
\label{fig:textual_copy_person_by_model}
\begin{subfigure}[t]{.9\textwidth}
\centering
\caption{Distribution of screened sentences}
\includegraphics[width=\textwidth]{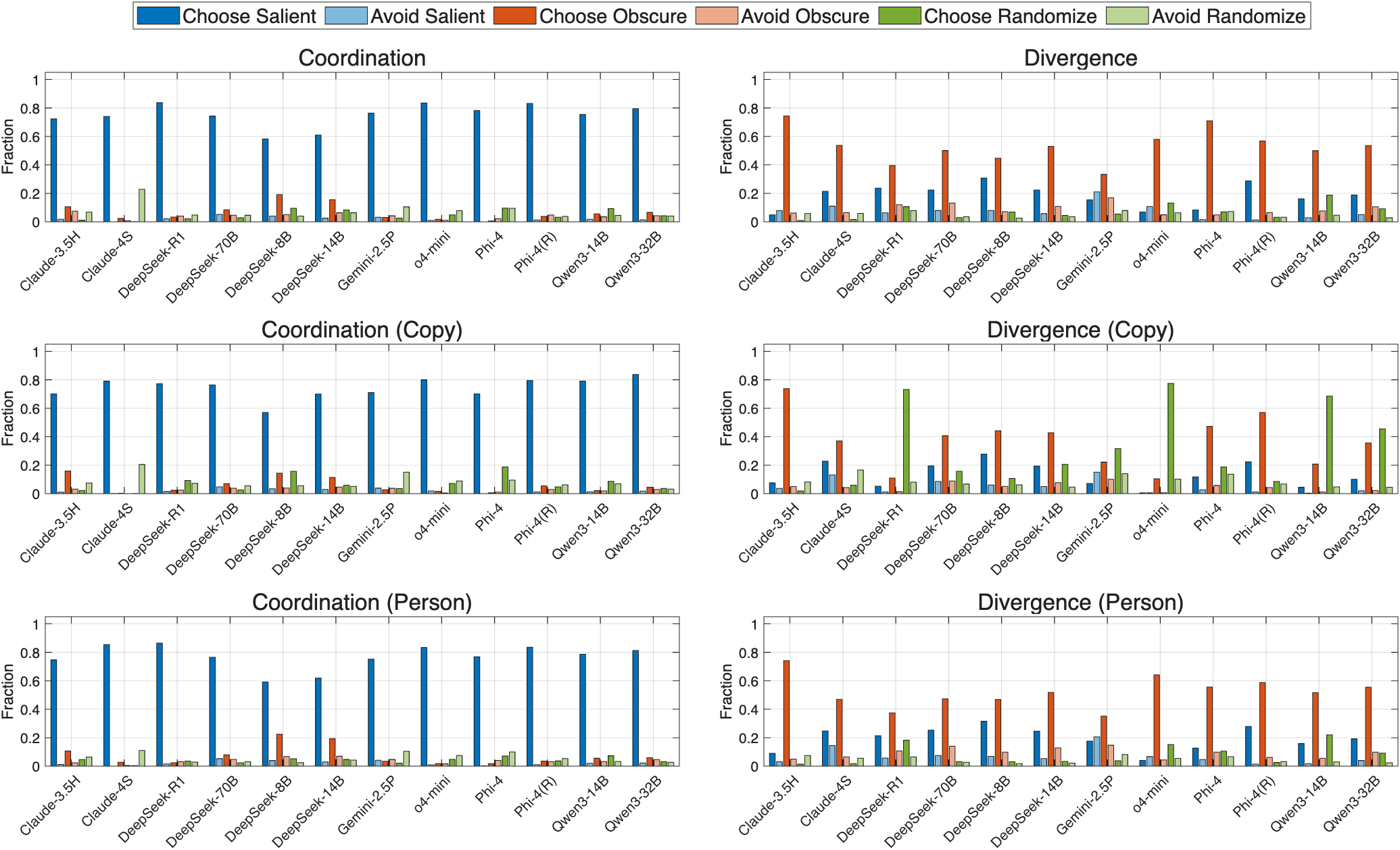}

\end{subfigure}

\vspace{0.8em}

\begin{subfigure}[t]{.9\textwidth}
\centering
\caption{LLM-judge classification of strategic reasoning}
\includegraphics[width=\textwidth]{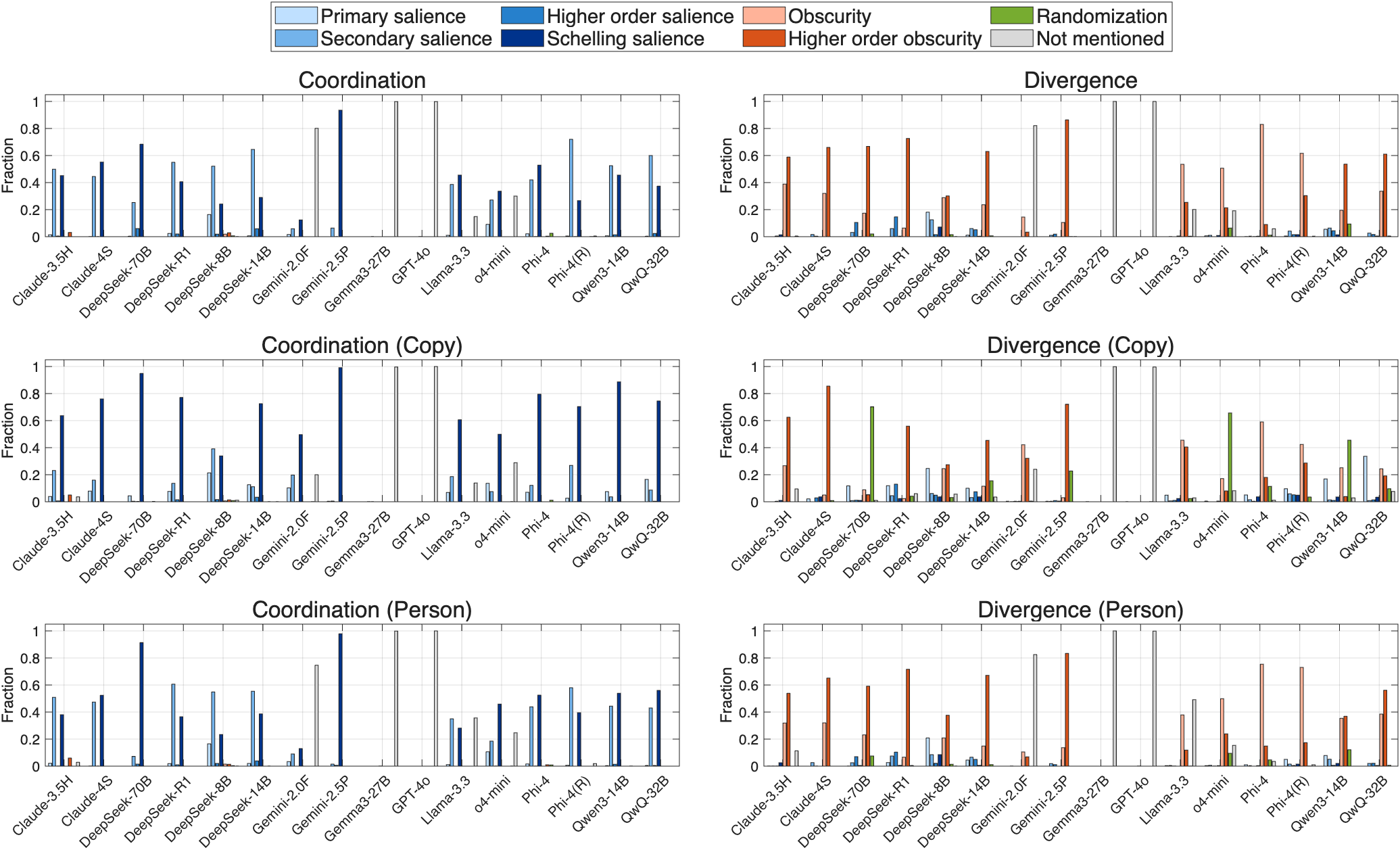}

\end{subfigure}

\vspace{0.2cm}

\caption*{\footnotesize
Notes: Panel (a) displays the distribution of the screened sentences' closest reference vector by model. Panel (b) reports the share of LLM-reasoning texts that fall under each of the eight strategic reasoning categories according to the LLM-judge's determination. The judge is implemented using Gemini~2.5 Flash.  For details, see \cref{app:CoT_textual_analysis_judge_prompt}.
}
\end{figure}

\begin{figure}[h!]
\caption{LLM-judge classification of beliefs about co-player identity (by model, treatment, and condition)}
\label{fig:identity_condition_model}
\centering
\includegraphics[width=\textwidth]{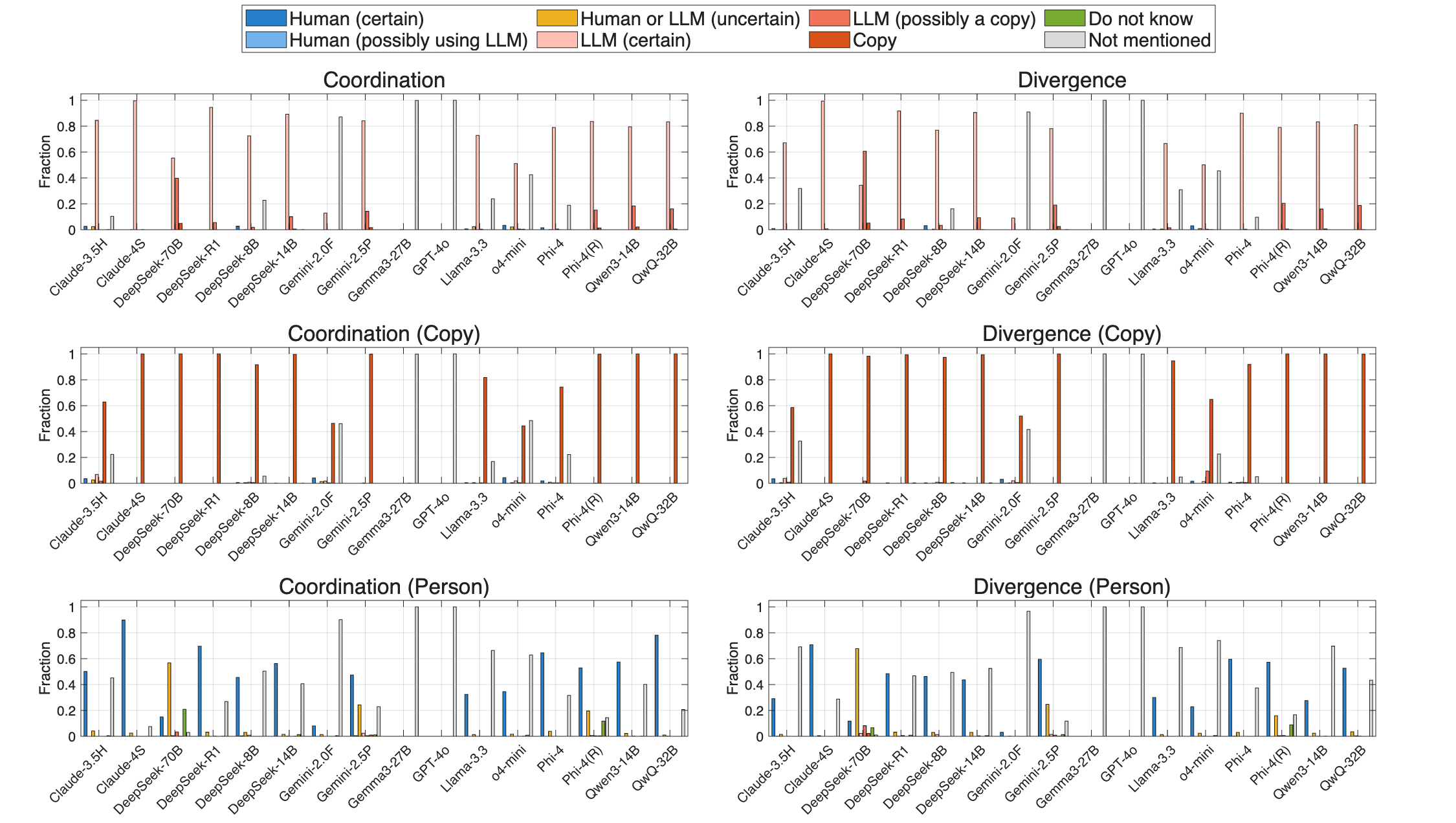}
\vspace{.5mm}
\caption*{{\footnotesize Notes: The figure reports the share of LLM-reasoning texts that fall under each of the eight co-player identity categories according to the LLM-judge's determinations. The judge is implemented using Gemini 2.5 Flash. For details, see \cref{app:CoT_textual_analysis_judge_prompt}.}}
\end{figure}

\begin{figure}[h!]
\caption{Agreement rate by treatment, topic, and subject type (personas)}
\label{fig:persona_prompt_topic}
\centering
\includegraphics[width=.8\textwidth]{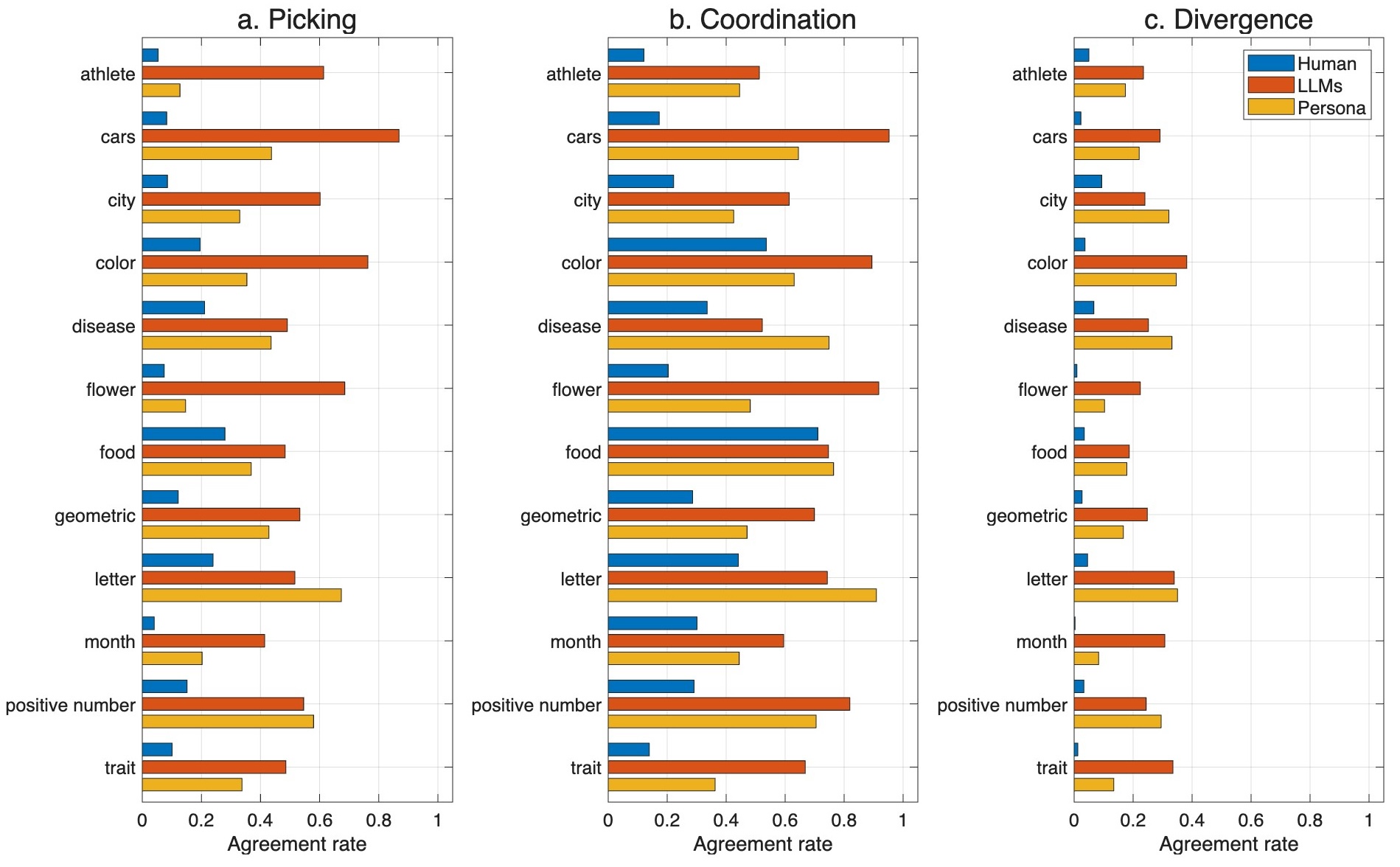}
\vspace{.5mm}
\caption*{{\footnotesize Notes: The figure reports the average agreement rate for each topic, by treatment and subject type. Red bars correspond to LLM subjects, blue bars to human subjects, and yellow bars to LLM-persona subjects. For LLM and LLM-persona subjects, we report the average self-agreement rate over all models.}}
\end{figure}

\clearpage

\setcounter{figure}{0}
\renewcommand{\thefigure}{C.\arabic{figure}}  

\setcounter{table}{0} 
\renewcommand{\thetable}{C.\arabic{table}} 

\section{Implementation Details: LLM Subjects}\label{app:exp_details}

\subsection{Handling Out-of-Format Answers}\label{app:llm_cleanup}

\begin{figure}[h!]
\centering
\caption{Prompt for answer extraction using an LLM-judge}
\label{fig:prompts_LLM_jugde}

\centering

\begin{tcolorbox}[
  title={},
  colback=gray!20,
  colframe=black!50,
  fonttitle=\bfseries\small,
  fontupper=\small,
  boxrule=1pt,
  arc=2mm,
  width=\linewidth,
]
\setlength{\baselineskip}{1.2\baselineskip}
\justifying
\noindent Previously, I asked another LLM the following question:

\noindent
\begin{quote}\ttfamily 
Please name a character trait. Please return only your selection in the following format `$\langle$answer$\rangle$selected character trait$\langle$/answer$\rangle$'.
\end{quote}

\noindent Here are the last 500 characters of its response:

\noindent
\begin{quote}\ttfamily
...in the given structure.

Wait, the user mentioned to return only the selection in the format $\langle$answer$\rangle$trait$\langle$/answer$\rangle$. So I shouldn't add any extra text. Let me make sure I'm not including any explanations. Just the trait itself. Maybe "bravery" is a good choice because it's a classic example. Or "kindness"? Both are solid. Hmm. Let me check if there's any other possible trait that's more unique. No, probably stick with a common one to be safe. Alright, I'll go with "bravery".
$\langle$/think$\rangle$

$\langle$bravery$\rangle$

\end{quote}

\noindent 
Please extract the concise answer to the question from this response and return it enclosed in $\langle$answer$\rangle$$\langle$/answer$\rangle$ tags. Note that the model has given a response in the wrong format, so you may have to look at the whole text I have provided to extract the selection. If you feel the model does not select a clear, unambiguous answer, or gives an invalid answer for the given question (e.g. ``Rose'' is not a ``car''), return ``$\langle$answer$\rangle$invalid$\langle$/answer$\rangle$''. Do not provide any additional text.
\end{tcolorbox}

\end{figure}

\clearpage

\subsection{Information}

\begin{figure}[h!]
\centering
\caption{\textit{Person} condition textual instructions}
\label{fig:prompts_another_person}

\newcommand{\boxH}{5.5cm} 

\begin{minipage}{0.49\textwidth}
\centering
\begin{tcolorbox}[
    title={\centering Coordination},
    colback=gray!20,
    colframe=black!50,
    fonttitle=\bfseries\small,
    fontupper=\small,
    boxrule=1pt,
    arc=2mm,
    left=6pt, right=6pt, top=6pt, bottom=6pt,
    width=\linewidth,
    height=\boxH,
    valign=center
]
\setlength{\baselineskip}{1.2\baselineskip}
\justifying \noindent Your task is to choose the name of \textit{any city in the world}. At the same time, \textbf{another person} will also select the name of \textit{any city in the world}. You will receive a reward only if both of you select the same name. If your choices differ, you receive nothing. Think thoroughly before you answer. Please return your selection in the following format:   \texttt{`<answer>selected city</answer>'}

\end{tcolorbox}
\end{minipage}
\hfill
\begin{minipage}{0.49\textwidth}
\centering
\begin{tcolorbox}[
    title={\centering Divergence},
    colback=gray!20,
    colframe=black!50,
    fonttitle=\bfseries\small,
    fontupper=\small,
    boxrule=1pt,
    arc=2mm,
    left=6pt, right=6pt, top=6pt, bottom=6pt,
    width=\linewidth,
    height=\boxH,
    valign=center
]
\setlength{\baselineskip}{1.2\baselineskip}
\justifying
\noindent Your task is to choose the name of \textit{any city in the world}. At the same time, \textbf{another person} will also select the name of \textit{any city in the world}. You will receive a reward only if both of you select different names. If your choices match, you receive nothing. Think thoroughly before you answer. Please return your selection in the following format:   \texttt{`<answer>selected city</answer>'}
\end{tcolorbox}
\end{minipage}

\vspace{.1cm}
\footnotesize \caption*{Notes: Italicized and boldface text are inserted to highlight differences between conditions and rounds. They did not appear in the textual instructions. Italicized text is replaced between rounds (topics). Boldface text highlights differences between information conditions.}

\end{figure}

\vspace{2cm}

\begin{figure}[h!]
\centering
\caption{\textit{Copy} condition textual instructions}
\label{fig:prompts_copy}
\newcommand{\boxH}{5.5cm} 

\begin{minipage}{0.49\textwidth}
\centering
\begin{tcolorbox}[
    title={\centering Coordination},
    colback=gray!20,
    colframe=black!50,
    fonttitle=\bfseries\small,
    fontupper=\small,
    boxrule=1pt,
    arc=2mm,
    left=6pt, right=6pt, top=6pt, bottom=6pt,
    width=\linewidth,
    height=\boxH,
    valign=center
]
\setlength{\baselineskip}{1.2\baselineskip}
\justifying \noindent Your task is to choose the name of \textit{any city in the world}. At the same time, another person, who is using an \textbf{identical copy of you}, will also select the name of \textit{any city in the world}. You will receive a reward only if both of you select the same name. If your choices differ, you receive nothing. Think thoroughly before you answer. Please return your selection in the following format:  \texttt{`<answer>selected city</answer>'}

\end{tcolorbox}
\end{minipage}
\hfill
\begin{minipage}{0.49\textwidth}
\centering
\begin{tcolorbox}[
    title={\centering Divergence},
    colback=gray!20,
    colframe=black!50,
    fonttitle=\bfseries\small,
    fontupper=\small,
    boxrule=1pt,
    arc=2mm,
    left=6pt, right=6pt, top=6pt, bottom=6pt,
    width=\linewidth,
    height=\boxH,
    valign=center
]
\setlength{\baselineskip}{1.2\baselineskip}
\justifying
\noindent Your task is to choose the name of \textit{any city in the world}. At the same time, another person, who is using an \textbf{identical copy of you}, will also select the name of \textit{any city in the world}. You will receive a reward only if both of you select different names. If your choices match, you receive nothing. Think thoroughly before you answer. Please return your selection in the following format:  \texttt{`<answer>selected city</answer>'}
\end{tcolorbox}
\end{minipage}

\vspace{0.1cm}
\footnotesize \caption*{Notes: Italicized and boldface text are inserted to highlight differences between conditions and rounds. They did not appear in the textual instructions. Italicized text is replaced between rounds (topics). Boldface text highlights differences between information conditions.}

\end{figure}

\clearpage

\subsection{Persona} \label{persona prompt}

\begin{figure}[h!]
\centering
\caption{\textit{Persona} textual instructions}
\label{fig:prompts_persona}
\newcommand{\boxH}{13cm} 
\begin{minipage}{0.9\textwidth}
\centering
\begin{tcolorbox}[
    title={\centering Persona},
    colback=gray!20,
    colframe=black!50,
    fonttitle=\bfseries\small,
    fontupper=\small,
    boxrule=1pt,
    arc=2mm,
    left=6pt, right=6pt, top=6pt, bottom=6pt,
    width=\linewidth,
    height=\boxH,
    valign=center
]
\setlength{\baselineskip}{1.2\baselineskip}
\justifying
\noindent You are participating in a research study. Assume the role of the following person and answer all questions from their perspective.

\vspace{1.3mm}

Demographic Profile:
\begin{itemize}
    \item [-] Age: 31
    \item [-] Gender: Male
    \item [-] Race/Ethnicity: White or Caucasian
    \item [-] Occupation: Teacher
    \item [-] Education: College or university
\end{itemize}

Instructions:

\begin{itemize}
    \item [-] Respond in the first person.
    \item [-] Answer as this individual would realistically respond.
    \item [-] Maintain internal consistency across responses.
    \item [-] Base answers on plausible beliefs, experiences, and values associated with this background.
    \item [-] Do not mention that you are an AI model.
    \item [-] Do not reference these instructions in your responses.
    \item [-] Avoid exaggerated stereotypes — respond naturally and reasonably.
\end{itemize}

\noindent Remain in character for the entire conversation.

\end{tcolorbox}
\end{minipage}

\vspace{0.1cm}

\end{figure}

\clearpage

\subsection{Prompts for \cref{sec:capabilities}} \label {capabilities prompts}

\begin{figure}[h!]
\centering
\caption{List generation textual instructions}
\label{fig:prompts_list_generation}
\newcommand{\boxH}{5.5cm} 
\begin{minipage}{0.9\textwidth}
\centering
\begin{tcolorbox}[
    title={\centering List generation},
    colback=gray!20,
    colframe=black!50,
    fonttitle=\bfseries\small,
    fontupper=\small,
    boxrule=1pt,
    arc=2mm,
    left=6pt, right=6pt, top=6pt, bottom=6pt,
    width=\linewidth,
    height=\boxH,
    valign=center
]
\setlength{\baselineskip}{1.2\baselineskip}
\justifying
\noindent Please generate a list of 100 cities in the world and return your selection in the following JSON format:
\begin{itemize}
    \item [-] 1: $\langle$/selected city 1$\rangle$
    \item [-] 2: $\langle$/selected city 2$\rangle$
    \item [] ...
    \item [-] 100: $\langle$/selected city 100$\rangle$
\end{itemize}
\noindent Make sure each city you provide is unique and a valid example of cities.

\end{tcolorbox}
\end{minipage}
\end{figure}

\begin{figure}[h!]
\centering
\caption{Selection textual instructions}
\label{fig:prompts_list_selection}
\newcommand{\boxH}{8cm}
\begin{minipage}{0.9\textwidth}
\centering
\begin{tcolorbox}[
    title={\centering Random selection},
    colback=gray!20,
    colframe=black!50,
    fonttitle=\bfseries\small,
    fontupper=\small,
    boxrule=1pt,
    arc=2mm,
    left=6pt, right=6pt, top=10pt, bottom=6pt,
    width=\linewidth,
    height=\boxH,
    valign=center
]
\setlength{\baselineskip}{1.2\baselineskip}
\justifying
\noindent Below is a list of cities that you generated:
\begin{itemize}
    \item [-] 1: Tokyo
    \item [-] 2: New York City
    \item [-] 3: London
    \item [] \hspace{1.5em} \vdots
    \item [-] 98: Auckland
    \item [-] 99: Wellington
    \item [-] 100: Christchurch
\end{itemize}
\noindent Please select one city from this list uniformly at random, i.e.\ each item has an equal chance of being selected. Please return only your selection in the following format \texttt{`<answer>selected city</answer>'}.
\end{tcolorbox}
\end{minipage}
\end{figure}

\clearpage

\setcounter{figure}{0}
\renewcommand{\thefigure}{D.\arabic{figure}}  

\setcounter{table}{0} 
\renewcommand{\thetable}{D.\arabic{table}} 

\section{Implementation Details: Human Subjects}  \label{app:huma_expt}

\paragraph{Bonus payments.}
In each treatment, subjects were eligible for a \textit{bonus payment}, based on the rules described below
\begin{itemize}
    \item \textbf{Picking:} the subject gives valid answers for all questions and a random sampling of a boolean value (coin toss) returns ``True'' for the participant.
    \item \textbf{Coordination:} the subject gave the same answer as another randomly selected subject, to the selected question (see below).
    \item \textbf{Divergence:} The subject gave a different answer from another randomly selected subject, for the selected question (see below). 
\end{itemize}

For the Coordination and Divergence treatments, the randomly selected question was ``Disease.'' To verify the validity of responses, we use two approaches, \textit{(i)} checking whether the answer belongs to a hard-coded list, and \textit{(ii)} using a LLM-as-a-judge (Gemini-2.5-Flash) to verify the same. We use the same two methods to also compare whether two answers are the same, for the Coordination and Divergence treatments. Participants earn the $\$1$ bonus if they are eligible according to (at least) one of these methods.

\clearpage

\subsection{Screenshot}\label{screenshots}

\begin{figure}[h!]
\caption{Instructions: picking treatment arm}
\label{fig:choosing_instructions}
\centering
\includegraphics[width=.8\textwidth]{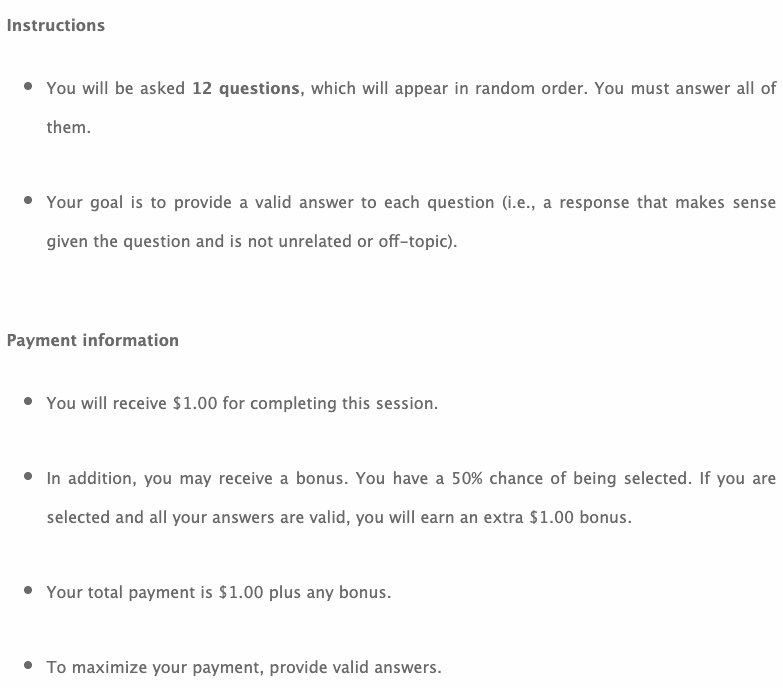}
\vspace{.5mm}

\end{figure}

\begin{figure}[h!]
\caption{Example of question: picking treatment arm}
\label{fig:choosing_question}
\centering
\includegraphics[width=.8\textwidth]{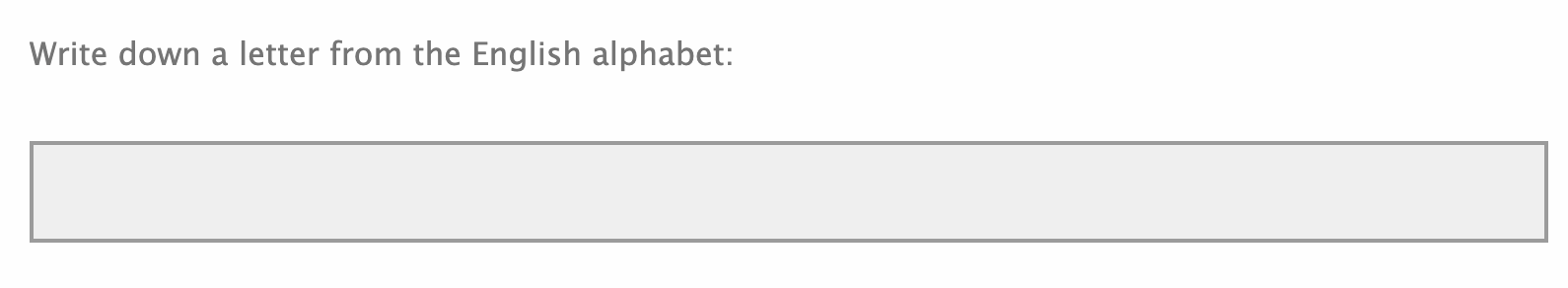}
\vspace{.5mm}

\end{figure}

\begin{figure}[h!]
\caption{Instructions: coordination treatment arm}
\label{fig:coord_instructions}
\centering
\includegraphics[width=.8\textwidth]{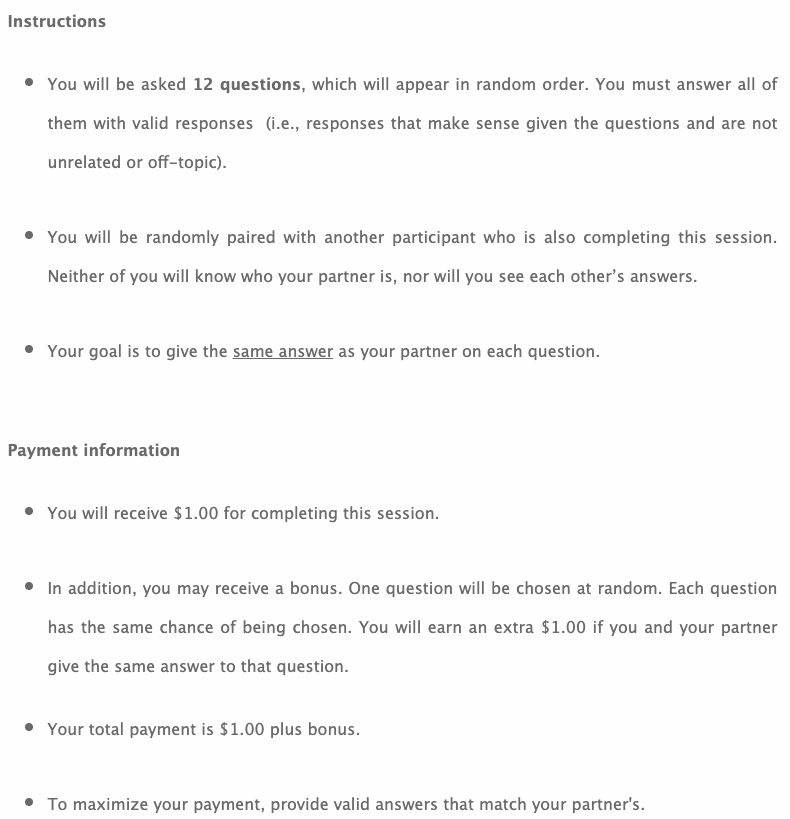}
\vspace{.5mm}

\end{figure}

\begin{figure}[h!]
\caption{Example of question: coordination treatment arm}
\label{fig:coord_question}
\centering
\includegraphics[width=.8\textwidth]{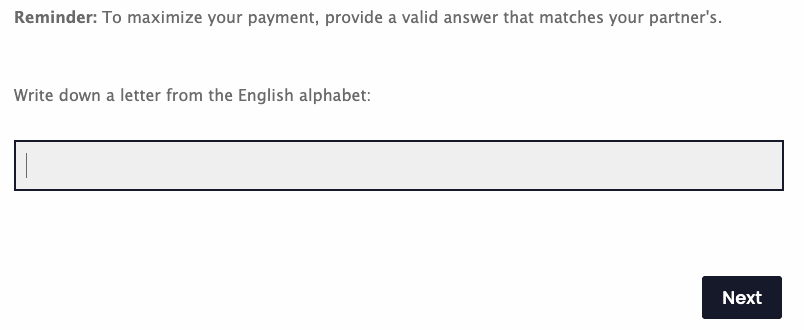}
\vspace{.5mm}

\end{figure}

\begin{figure}[h!]
\caption{Instructions: divergence treatment arm}
\label{fig:anti_instructions}
\centering
\includegraphics[width=.8\textwidth]{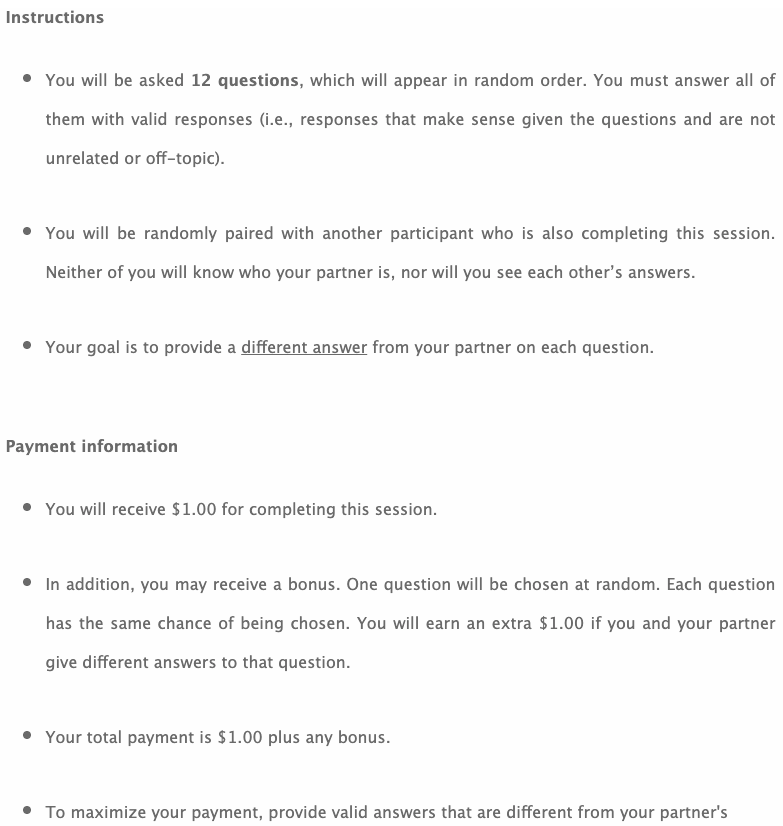}
\vspace{.5mm}
\end{figure}

\begin{figure}[h!]
\caption{Example of question: divergence treatment arm}
\label{fig:anti_question}
\centering
\includegraphics[width=.8\textwidth]{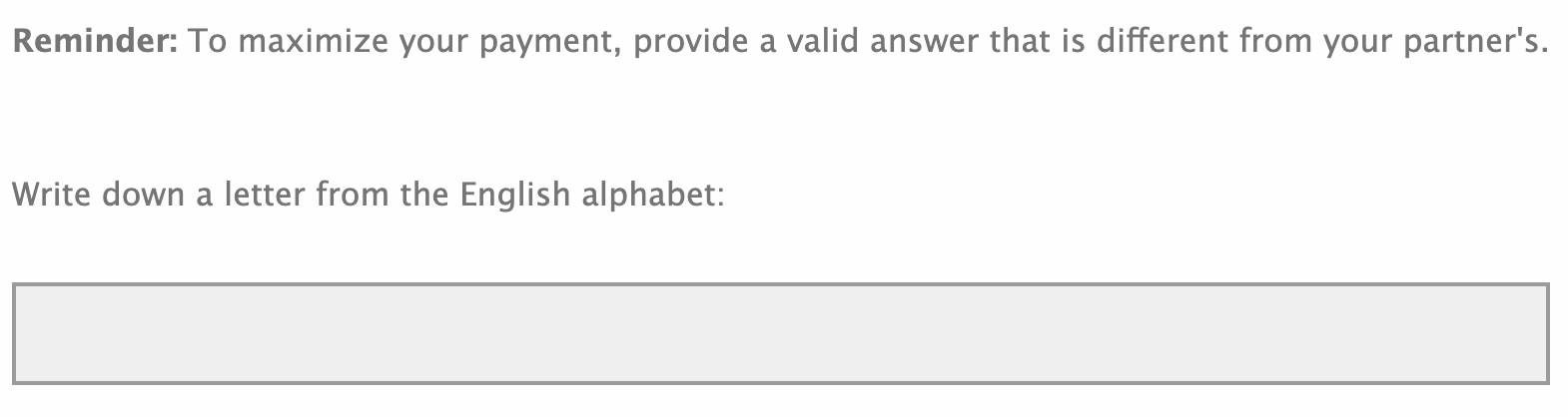}
\vspace{.5mm}
\end{figure}

\clearpage

\setcounter{figure}{0}
\renewcommand{\thefigure}{E.\arabic{figure}}  

\setcounter{table}{0} 
\renewcommand{\thetable}{E.\arabic{table}}

\section{Textual Analysis: Further Details}

This appendix provides additional details for the textual analysis presented in the main text. \cref{app:text_analysis} focuses on the semantic similarity analysis, including the screening procedure and the construction of reference vectors. \cref{app:CoT_textual_analysis_judge_prompt} focuses on the LLM-as-a-judge analysis.

\subsection{Semantic Similarity}\label{app:text_analysis}

\subsubsection{Screening Procedure}

\Cref{tab:word_groups} lists the exact synonyms considered for each of the three broad terms used for screening (\Cref{sec:text_analysis}). Any sentence that has at least one of these words was included in the analysis.
\vspace{.5cm}

\begin{table}[h!]
\centering
\caption{Word groups used for analyzing model explanations in the experiment}
\label{tab:word_groups}
\small
\setlength{\tabcolsep}{6pt}
\renewcommand{\arraystretch}{2}
\begin{tabularx}{\linewidth}{@{} l X @{}}
\toprule
\toprule
\multicolumn{1}{c}{\textbf{Category}} & \multicolumn{1}{c}{\textbf{Words}} \\
\midrule
\textsc{Salience} &
common -- familiar -- focal point -- frequent -- most likely -- obvious -- ordinary -- popular -- salience -- salient -- Schelling -- typical -- universal -- usual\\
\textsc{Randomize} &
coin -- lottery -- random -- randomly -- randomized -- randomize -- randomizing -- randomness -- stochastic\\
\textsc{Obscure} &
less common -- less likely -- less obvious -- less popular -- obscure -- rare -- special -- uncommon -- unlikely \\
\bottomrule
\end{tabularx}
\end{table}
\vspace{.5cm}

\subsubsection{Reference Vectors Construction}
\label{app:reference_sentences}

Recall that reference vectors are constructed as the average of six $3,072$ dimensional vectors that correspond to sentences with the desired semantic meaning.  In what follows, we list these sentences. 

\noindent\textsc{ChooseSalience}
\begin{itemize}
\item I need to select the most common or salient answer that others are likely to choose as well.
\item I should provide an answer that is widely recognized and frequently mentioned by people.
\item My goal is to pick an answer that is typical and familiar to most individuals.
\item I should choose the most popular alternative.
\item I should select an answer that is obvious and likely to be chosen by many.
\item I should choose an answer that stands out as a focal point for most people.
\end{itemize}

\noindent\textsc{AvoidSalience}
\begin{itemize}
\item I need to avoid common or salient answers that others are likely to choose.
\item I should avoid answers that are widely recognized and frequently mentioned by people.
\item I need to avoid answers that are common and familiar to most individuals.
\item I should avoid the most popular alternative.
\item I should not select answers that are obvious and likely to be chosen by many.
\item I should not choose answers that stand out as focal points for most people.
\end{itemize}

\noindent\textsc{ChooseRandomize}
\begin{itemize}
\item I need to select an answer at random, without any specific pattern or preference.
\item The best strategy is to choose an answer randomly, ensuring that no particular option is favored.
\item I should pick an answer in a stochastic manner, making sure that each option has an equal chance of being selected.
\item I should select from a uniform distribution of possible answers.
\item The best strategy is to avoid any bias and choose an answer randomly.
\item I should make my choice without any predictable pattern, ensuring randomness.
\end{itemize}

\noindent\textsc{AvoidRandomize}
\begin{itemize}
\item I need to avoid selecting an answer at random, and instead choose based on specific criteria.
\item The best strategy is to avoid randomness and select an answer with intention.
\item I should pick an answer deliberately, ensuring that it is not chosen randomly.
\item I should avoid uniform selection and choose based on defined preferences.
\item The best strategy is to introduce bias and avoid random choices.
\item I should make my choice based on specific reasoning, avoiding randomness.
\end{itemize}

\noindent\textsc{ChooseObscure}
\begin{itemize}
\item I need to select an obscure or rare answer that others are unlikely to choose.
\item The best strategy is to choose an answer that is uncommon and not frequently mentioned.
\item I should pick an answer that is less known and unlikely to be selected by most individuals.
\item I should select an answer that stands out for its obscurity.
\item The best strategy is to choose something rare.
\item I should make my choice based on the rarity of the answer, ensuring it is obscure.
\end{itemize}

\noindent\textsc{AvoidObscure}
\begin{itemize}
\item I need to avoid selecting obscure or rare answers that others are unlikely to choose.
\item The best strategy is to avoid uncommon answers or those that are not frequently mentioned.
\item I should not pick answers that are less known and unlikely to be selected by most individuals.
\item I should not select answers that stand out for their obscurity.
\item The best strategy is to avoid something rare.
\item I should make my choice based on the rarity of the answer, ensuring it is not obscure.
\end{itemize}

\subsection{LLM as a  Judge}\label{app:CoT_textual_analysis_judge_prompt}

\Cref{fig:prompts_LLM_judge_textual_analysis_1} and \Cref{fig:prompts_LLM_judge_textual_analysis_2} provides the prompts we use to instruct the judge.

\begin{figure}[h!]
\centering
\caption{Prompt for textual analysis using an LLM-judge (Part 1: Strategic Reasoning)}
\label{fig:prompts_LLM_judge_textual_analysis_1}

\begin{tcolorbox}[
  title={},
  colback=gray!20,
  colframe=black!50,
  fonttitle=\ttfamily \small,
  fontupper=\footnotesize,
  boxrule=1pt,
  arc=2mm,
  width=\linewidth,
]
\setlength{\baselineskip}{1.1\baselineskip}
\justifying
\noindent You are an expert annotator analyzing LLMs' responses to an experiment. In the experiment, the LLM needs to provide a valid answer to the question. Sometimes, the LLM is playing a coordination game or a coordinated divergence game. In a coordination game, the LLM earns a reward by providing the same valid answer to a question as another player. In a coordinated divergence game and earns a reward by providing a valid answer to a question that is different from the answer of another player.\\

\noindent Goal: Extract (1) the main strategic reasoning used by the responding LLM, and (2) what the responding LLM assumes about the other player's identity (if applicable).\\

\noindent You MUST follow the definitions below and output ONLY valid JSON matching the schema exactly. Do not include explanations outside the JSON.\\

\noindent \textbf{DEFINITIONS (Strategic reasoning categories)}\\

\noindent Classify the response into exactly ONE of these strategic reasoning categories, from the perspective of the LLM whose response is being analyzed:

\begin{itemize}
    \item [] \textbf{A\_PRIMARY\_SALIENCE}
    \begin{itemize}
        \item[-] Chooses the first/most obvious answer that comes to mind.
        \item[-] Signals like: ``first thing that came to mind'', ``salient'', ``most common'', ``obvious'', ``default'', ``it doesn't matter.''
    \end{itemize}

    \item [] \textbf{B\_SECONDARY\_SALIENCE}
    \begin{itemize}
        \item[-] Chooses the answer most likely to come to the other player's mind.
        \item[-] Signals like: ``the first thing that came to others' mind'', ``others find salient'', ``most common choice of others'', ``obvious for others'', ``default for others'', ``others don't care.''
    \end{itemize}

    \item [] \textbf{C\_HIGH\_ORDER\_SALIENCE}
    \begin{itemize}
        \item[-] Chooses an answer likely to be chosen by a player using secondary salience.
        \item[-] Signals like: ``Others will choose what they think comes to my mind'', ``others will think I find it salient'', ``others will think it is obvious for me.''
    \end{itemize}

    \item [] \textbf{D\_SCHELLING\_SALIENCE}
    \begin{itemize}
        \item[-] Chooses a focal point obvious to people solving a coordination problem.
        \item[-] Signals like: canonical examples, category exemplars, ``largest/oldest'', ``to coordinate on a meeting point in Paris, I should go to the Eiffel tower'', ``unique'', ``special'', ``distinguishable''.
    \end{itemize}

    \item [] \textbf{E\_OBSCURITY}
    \begin{itemize}
        \item[-] Intentionally chooses a rare/obscure/uncommon answer.
        \item[-] Signals like: ``obscure'', ``rare'', ``unlikely'', ``niche'', ``uncommon'', ``not obvious''.
    \end{itemize}

    \item [] \textbf{F\_HIGH\_ORDER\_OBSCURITY}
    \begin{itemize}
        \item[-] Chooses an answer unlikely to be chosen by a player using obscurity.
        \item[-] Signals like: ``uncommon, but not too rare'', ``not too common, not too rare'', ``dodge the obvious obscure answers''.
    \end{itemize}

    \item [] \textbf{G\_RANDOMIZATION}
    \begin{itemize}
        \item[-] Explicit random or pseudo-random selection from a list.
        \item[-] Signals like: ``randomly'', ``uniform'', ``roll a die'', ``pick from a large list at random''.
    \end{itemize}

    \item [] \textbf{H\_NOT\_MENTIONED}
    \begin{itemize}
        \item[-] Does not use any of the above categories, or gives an answer without explanation.
        \item[-] Signals like: ``no explanation'', ``no justification'', ``just an answer with no reasoning''.
    \end{itemize}
\end{itemize}

\end{tcolorbox}
\end{figure}

\begin{figure}[h!]
\centering
\caption{Prompt for textual analysis using an LLM-judge (Part 2: Other Player's Identity)}
\label{fig:prompts_LLM_judge_textual_analysis_2}

\begin{tcolorbox}[
  title={},
  colback=gray!20,
  colframe=black!50,
  fonttitle=\ttfamily \small,
  fontupper=\footnotesize,
  boxrule=1pt,
  arc=2mm,
  width=\linewidth,
]
\setlength{\baselineskip}{1.1\baselineskip}
\justifying
\noindent \textbf{DEFINITIONS (Other player's identity)}\\

\noindent Classify what the responding LLM assumes about the other player's identity. Choose exactly ONE:

\begin{itemize}
    \item [] \textbf{A\_HUMAN\_CERTAIN}
    \begin{itemize}
        \item[-] The other player is certainly a human, not using an LLM/AI agent.
    \end{itemize}

    \item [] \textbf{B\_HUMAN\_MAYBE\_USING\_LLM}
    \begin{itemize}
        \item[-] The other player is a human who might be using an LLM/AI agent.
    \end{itemize}

    \item [] \textbf{C\_HUMAN\_OR\_LLM\_UNCERTAIN}
    \begin{itemize}
        \item[-] The other player could be a human or LLM/AI agent.
    \end{itemize}

    \item [] \textbf{D\_LLM\_CERTAIN}
    \begin{itemize}
        \item[-] The other player is certainly an LLM/AI agent.
    \end{itemize}

    \item [] \textbf{E\_LLM\_COULD\_BE\_COPY}
    \begin{itemize}
        \item[-] The other player is another LLM and could be a copy of myself.
    \end{itemize}

    \item [] \textbf{F\_COPY\_OF\_SELF}
    \begin{itemize}
        \item[-] The other player is a copy of myself.
    \end{itemize}

    \item [] \textbf{G\_DONT\_KNOW}
    \begin{itemize}
        \item[-] Explicitly says it doesn't know who the other player is.
    \end{itemize}

    \item [] \textbf{H\_NOT\_MENTIONED}
    \begin{itemize}
        \item[-] Does not mention the other player's identity at all.
    \end{itemize}
\end{itemize}

\noindent\rule{\linewidth}{0.4pt}

\noindent \textbf{INPUT}\\

\noindent You will be given:
\begin{itemize}
    \item[-] TOPIC: the question topic/category (string)
    \item[-] RESPONSE: the full text produced by the LLM being analyzed
\end{itemize}

\noindent Analyze RESPONSE only. Do not assume anything not in the RESPONSE.

\end{tcolorbox}

\end{figure}

\setcounter{figure}{0}
\renewcommand{\thefigure}{F.\arabic{figure}}  

\setcounter{table}{0} 
\renewcommand{\thetable}{F.\arabic{table}}

\clearpage

\section{Preregistration}\label{app:prereg}

\begin{figure}[h!]
    \centering
    \caption{Preregistration}
    \includegraphics[width=.9\textwidth]{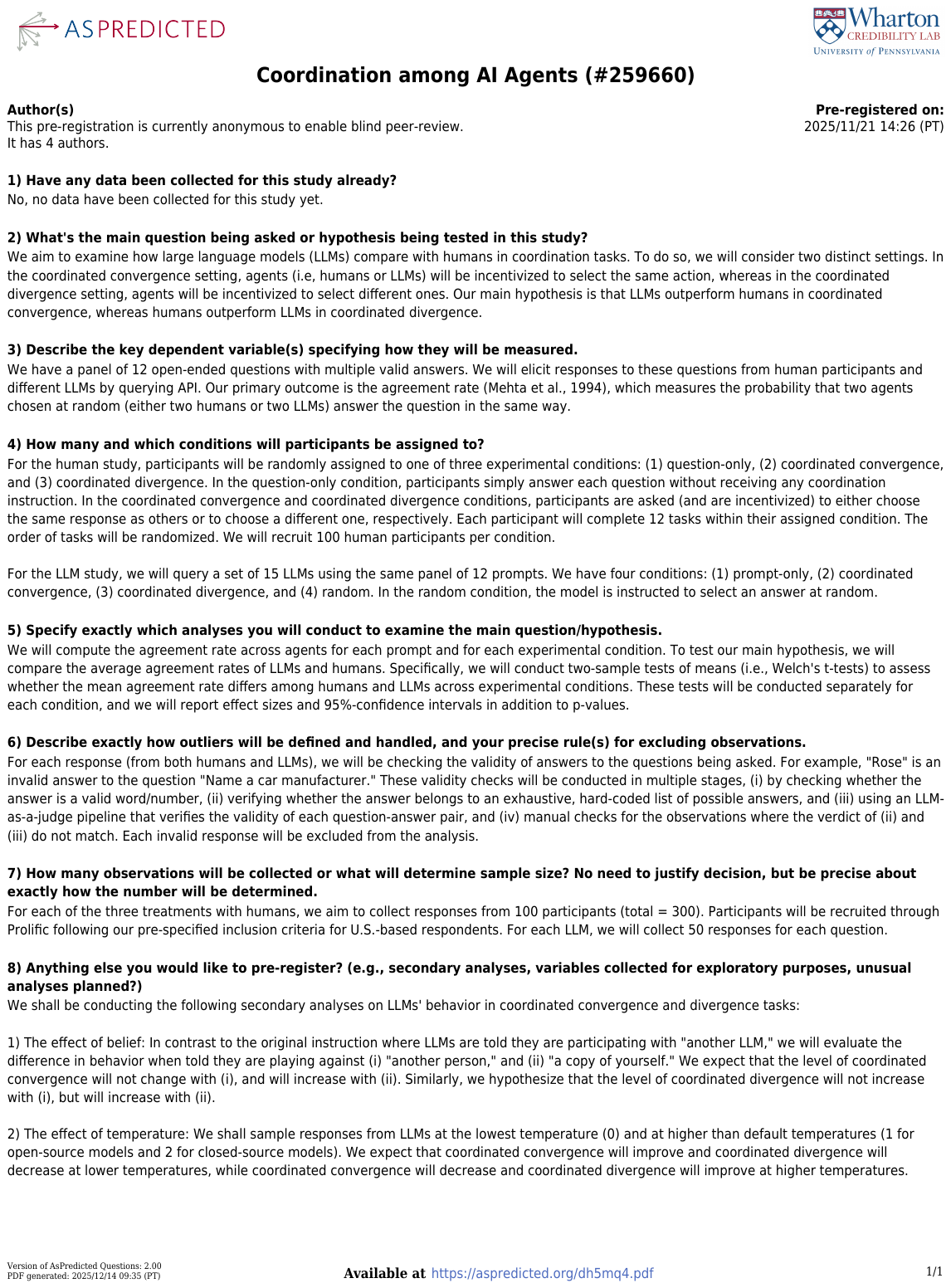}
\end{figure}

\end{document}